%% file: main_neurips.tex
\documentclass{article}

\PassOptionsToPackage{numbers, compress}{natbib}
\usepackage[preprint]{neurips_2026}

\usepackage[utf8]{inputenc} 
\usepackage[T1]{fontenc}    
\usepackage{hyperref}       
\usepackage{url}            
\usepackage{booktabs}       
\usepackage{amsfonts}       
\usepackage{nicefrac}       
\usepackage{microtype}      
\usepackage[dvipsnames]{xcolor}
\usepackage{etoolbox}

\usepackage{todonotes}

\input{settings}

\input{commands}

\title{FullFlow: Upgrading Text-to-Image Flow Matching Models for Bidirectional Vision--Language Generation}
\author{%
    Eric Tillmann Bill\textsuperscript{1}
    \quad
    Enis Simsar\textsuperscript{1}
    \quad
    Alessio Tonioni\textsuperscript{2}
    \quad
    Thomas Hofmann\textsuperscript{1}\vspace{1mm}\\
    \textsuperscript{1}ETH Zurich
    \quad
    \textsuperscript{2}Google\\[1mm]
    {\tt\small \url{https://ericbill21.github.io/fullflow/}}
}

\begin{document}

\maketitle

\begin{abstract}
    Modern text-to-image diffusion models encode rich visual priors, but expose them only through one-way text-conditioned generation. Existing unified vision--language models derived from them recover bidirectional capability through large-scale joint pretraining or substantial retraining of the text pathway, discarding the strong image prior the text-to-image backbone already encodes. We introduce \emph{FullFlow}, a parameter-efficient recipe that upgrades a pretrained rectified-flow text-to-image model into a bidirectional vision--language generator by training only LoRA adapters and lightweight text heads. FullFlow keeps images in their native continuous flow and adds a discrete insertion process for text. Separate image and text timesteps turn inference into trajectory selection in a two-dimensional generative space, enabling text$\rightarrow$image, image$\rightarrow$text, joint sampling, and partial-text prediction with a single backbone. On Stable Diffusion 3~(SD3) under an identical trainable-parameter count and matched LoRA rank, FullFlow improves text$\rightarrow$image FID from $62.7$ to $31.6$ and image$\rightarrow$text CIDEr from $2.0$ to $99.4$ over a LoRA equivalent following the previous SOTA formulation (Dual~Diffusion) at matched wall-clock training time, while reducing peak VRAM from ${\sim}84$\,GB to ${\sim}38$\,GB and raising throughput by ${\sim}8\times$ on two RTX~A5000 GPUs in under 24 hours, training only ${\sim}5\%$ of the backbone parameters. The same recipe transfers to FLUX.1-dev and supports downstream VQA through partial-text generation. These results show that strong bidirectional vision--language capability can be unlocked from pretrained text-to-image flow models without full multimodal pretraining.
\end{abstract}

\input{sections/intro}
\input{sections/background}
\input{sections/method}
\input{sections/related_work}
\input{sections/results}
\input{sections/limitations}

\clearpage
\bibliographystyle{plainnat}
\bibliography{references}

\clearpage
\appendix
\section*{Appendix}

\input{appendix/training}

\input{appendix/theory}
\input{appendix/architecture}
\input{appendix/results}
\input{appendix/examples}


\end{document}

%% file: settings.tex
\usepackage{amsmath,amsfonts,amsthm,amssymb,mathrsfs,bbm,mathtools,nicefrac}
\usepackage[italic]{derivative}
\usepackage{xparse}
\usepackage{upgreek}

\usepackage{csvsimple}
\usepackage{graphicx}
\usepackage{tikz}
\usetikzlibrary{calc,arrows,shapes,positioning,arrows.meta}
\usepackage{pgfplots}
\usepgfplotslibrary{groupplots,fillbetween,statistics}
\pgfplotsset{compat=1.18}
\usepackage{tabularx}
\usepackage{multirow}
\usepackage{longtable}
\usepackage{makecell}
\usepackage{wrapfig}
\usepackage{subcaption}
\usepackage{adjustbox}
\usepackage{array}
\usepackage{bm}
\usepackage{dsfont}
\usepackage{placeins}

\usepackage{siunitx}
\usepackage{enumitem}
\usepackage[noabbrev,capitalize]{cleveref}
\usepackage{ifthen}
\usepackage{xspace}
\allowdisplaybreaks

\newtheorem{thm}{Theorem}[section]

\theoremstyle{definition}

\numberwithin{equation}{section}
\Crefname{thm}{Theorem}{Theorems}

\definecolor{blue}{RGB}{2,106,253}
\definecolor{red}{RGB}{245,51,30}
\definecolor{green}{RGB}{96,189,69}
\definecolor{purple}{RGB}{200,0,240}

\definecolor{dit_blue}{HTML}{5580E6}
\definecolor{dit_green}{HTML}{88B06D}
\definecolor{dit_red}{HTML}{D93B36}
\definecolor{dit_yellow}{HTML}{F5A83D}
\definecolor{dit_grey}{HTML}{808080}
\definecolor{dit_lightgrey}{HTML}{D5D7E2}
\definecolor{dit_lightblue}{HTML}{CADAFB}
\definecolor{dit_lightgreen}{HTML}{C9E2C0}
\definecolor{dit_lightyellow}{HTML}{F2CE94}
\definecolor{dit_lightred}{HTML}{E8827B}

\newcommand{\tok}[1]{\ensuremath{\langle \text{\texttt{#1}} \rangle}}

\newcommand{\tinytit}[1]{\noindent\textbf{#1}\enspace\ignorespaces}

\usepackage[most]{tcolorbox}
\tcbuselibrary{theorems}

\newtcbtheorem
  [use counter*=thm,number within=section,crefname={Example}{Examples},Crefname={Example}{Examples}]%
  {ex}
  {Example}
  {%
    before skip=10pt,after skip=10pt,
    left=0.2cm,right=0.2cm,top=0cm,
    toptitle=0.2cm,bottomtitle=0cm,
    breakable,
    toprule at break=0.2cm,
    sharp corners,
    colback=blue!10,
    coltitle=black,
    colframe=blue!10,
    fonttitle=\bfseries,
    parbox=false,
    halign=justify,
  }%
  {ex}%

\newtcbtheorem
  [use counter*=thm,number within=section,crefname={Remark}{Remarks},Crefname={Remark}{Remarks}]%
  {rmk}
  {Remark}
  {%
    before skip=10pt,after skip=10pt,
    left=0.2cm,right=0.2cm,top=0cm,
    toptitle=0.2cm,bottomtitle=0cm,
    breakable,
    toprule at break=0.2cm,
    sharp corners,
    colback=gray!20,
    coltitle=black,
    colframe=gray!20,
    fonttitle=\bfseries,
    parbox=false,
    halign=justify,
  }%
  {rmk}%

\newtcbtheorem
  [use counter*=thm,number within=section,crefname={Problem}{Problems},Crefname={Problem}{Problems}]%
  {exc}
  {Problem}
  {%
    before skip=10pt,after skip=10pt,
    left=0.2cm,right=0.2cm,top=0cm,
    toptitle=0.2cm,bottomtitle=0cm,
    breakable,
    toprule at break=0.2cm,
    sharp corners,
    colback=purple!10,
    coltitle=black,
    colframe=purple!10,
    fonttitle=\bfseries,
    parbox=false,
    halign=justify,
  }%
  {exercise}%

\newtcolorbox{readings}{%
    before skip=10pt,after skip=10pt,
    left=0.2cm,right=0.2cm,top=0cm,
    toptitle=0.2cm,bottomtitle=0cm,
    breakable,
    toprule at break=0.2cm,
    sharp corners,
    colback=green!30,
    coltitle=black,
    colframe=green!30,
    fonttitle=\bfseries,
    title={Readings},
    parbox=false,
    halign=justify,
}
\newtcolorbox{oreadings}{%
    before skip=10pt,after skip=10pt,
    left=0.2cm,right=0.2cm,top=0cm,
    toptitle=0.2cm,bottomtitle=0cm,
    breakable,
    toprule at break=0.2cm,
    sharp corners,
    colback=green!20,
    coltitle=black,
    colframe=green!20,
    fonttitle=\bfseries,
    title={Optional Readings},
    parbox=false,
    halign=justify,
}

\newtcolorbox{thmb}{%
    before skip=10pt,after skip=10pt,
    left=0.2cm,right=0.2cm, top=0cm,
    toptitle=0cm,bottomtitle=0cm,
    breakable,
    toprule at break=0.2cm,
    sharp corners,
    colback=gray!10,
    coltitle=black,
    colframe=gray!10,
    fonttitle=\bfseries,
    title={},
    parbox=false,
    halign=justify,
}

\makeatletter
\NewDocumentCommand{\embeq}{m}{%
  \leavevmode\hfill\refstepcounter{equation}\textup{\tagform@{\theequation}}\label{#1}%
}
\makeatother

\makeatletter
\NewDocumentCommand{\algeq}{m}{%
  \leavevmode\Comment*[r]{\refstepcounter{equation}\textup{\tagform@{\theequation}}\label{#1}}%
}
\makeatother

\setlist[enumerate]{noitemsep,topsep=2pt,leftmargin=16pt}
\setlist[itemize]{noitemsep,topsep=2pt}

\usepackage{import}
\usepackage{xifthen}
\usepackage{transparent}
\NewDocumentCommand{\incfig}{om}{%
    \IfValueTF{#1}{%
        \def\svgwidth{#1}%
    }{%
        \def\svgwidth{\columnwidth}%
    }%
    \centering\import{./figures/}{#2.pdf_tex}%
}

%% file: commands.tex
\newcommand{\course}{\ifthenelse{\boolean{manuscript}}{manuscript}{course}\xspace}

\NewDocumentCommand{\norm}{sm}{\IfBooleanTF{#1}{\|#2\|}{\left\| #2 \right\|}}

\DeclarePairedDelimiter\parentheses{(}{)}
\DeclarePairedDelimiter\brackets{[}{]}
\DeclarePairedDelimiter\braces{\{}{\}}

\newcommand{\R}{\mathbb{R}}

\renewcommand{\vec}[1]{\boldsymbol{#1}}
\newcommand{\mat}[1]{\boldsymbol{#1}}

\newcommand{\spa}[1]{\mathcal{#1}}

\NewDocumentCommand{\Ind}{m}{\mathds{1}\{{#1}\}}
\NewDocumentCommand{\fnPr}{}{\mathbb{P}}
\RenewDocumentCommand{\Pr}{om}{\fnPr\IfValueT{#1}{_{#1}}\parentheses*{#2}}

\NewDocumentCommand{\Entropy}{mo}{\mathrm{H}\IfValueTF{#2}{\!\left[#1\ \middle|\ #2\right]}{\brackets*{#1}}}
\RenewDocumentCommand{\H}{mo}{\Entropy{#1}[#2]}
\NewDocumentCommand{\Hsm}{mo}{\mathrm{H}\IfValueTF{#2}{[#1 \mid #2]}{\brackets{#1}}}
\NewDocumentCommand{\I}{mmo}{\mathrm{I}\IfValueTF{#3}{\!\left(#1;#2\ \middle|\ #3\right)}{\parentheses*{#1; #2}}}
\NewDocumentCommand{\Ism}{mmo}{\mathrm{I}\IfValueTF{#3}{(#1;#2 \mid #3)}{\parentheses{#1; #2}}}

\NewDocumentCommand{\E}{somo}{\ensuremath{\mathbb{E}\IfValueT{#2}{_{#2}}{} \IfBooleanTF{#1}{#3}{\IfValueTF{#4}{\!\left[#3\ \middle|\ #4\right]}{\brackets*{#3}}}}}
\NewDocumentCommand{\Var}{somo}{\mathrm{Var}\IfValueT{#2}{_{#2}}{} \IfBooleanTF{#1}{#3}{\IfValueTF{#4}{\!\left[#3\ \middle|\ #4\right]}{\brackets*{#3}}}}
\NewDocumentCommand{\Cov}{som}{\mathrm{Cov}\IfValueT{#2}{_{#2}}{} \IfBooleanTF{#1}{#3}{\brackets*{#3}}}
\NewDocumentCommand{\Cor}{som}{\mathrm{Cor}\IfValueT{#2}{_{#2}}{} \IfBooleanTF{#1}{#3}{\brackets*{#3}}}

\NewDocumentCommand{\fnv}{oo}{v\IfValueT{#2}{_{#2}}\IfValueT{#1}{^{#1}}}
\NewDocumentCommand{\Value}{somo}{\IfBooleanTF{#1}{\fnv[\star][#4]\parentheses{#3}}{\fnv[#2][#4]\parentheses{#3}}}
\RenewDocumentCommand{\v}{somo}{\Value#1[#2]{#3}[#4]}
\NewDocumentCommand{\fnq}{oo}{q\IfValueT{#2}{_{#2}}\IfValueT{#1}{^{#1}}}
\NewDocumentCommand{\q}{sommo}{\IfBooleanTF{#1}{\fnq[\star][#5]\parentheses{#3,#4}}{\fnq[#2][#5]\parentheses{#3,#4}}}
\NewDocumentCommand{\fnV}{oo}{V\IfValueT{#2}{_{#2}}\IfValueT{#1}{^{#1}}}
\NewDocumentCommand{\V}{somo}{\IfBooleanTF{#1}{\fnV[\star][#4]\parentheses{#3}}{\fnV[#2][#4]\parentheses{#3}}}
\NewDocumentCommand{\fnQ}{oo}{Q\IfValueT{#2}{_{#2}}\IfValueT{#1}{^{#1}}}
\NewDocumentCommand{\Q}{sommo}{\IfBooleanTF{#1}{\fnQ[\star][#5]\parentheses{#3,#4}}{\fnQ[#2][#5]\parentheses{#3,#4}}}
\NewDocumentCommand{\fna}{oo}{a\IfValueT{#2}{_{#2}}\IfValueT{#1}{^{#1}}}
\RenewDocumentCommand{\a}{sommo}{\IfBooleanTF{#1}{\fna[\star][#5]\parentheses{#3,#4}}{\fna[#2][#5]\parentheses{#3,#4}}}
\NewDocumentCommand{\fnA}{oo}{A\IfValueT{#2}{_{#2}}\IfValueT{#1}{^{#1}}}
\NewDocumentCommand{\fnAhat}{oo}{\hat{A}\IfValueT{#2}{_{#2}}\IfValueT{#1}{^{#1}}}
\NewDocumentCommand{\A}{sommo}{\IfBooleanTF{#1}{\fnA[\star][#5]\parentheses{#3,#4}}{\fnA[#2][#5]\parentheses{#3,#4}}}
\NewDocumentCommand{\fnj}{o}{J\IfValueT{#1}{_{#1}}}
\RenewDocumentCommand{\j}{mo}{\fnj[#2]\parentheses{#1}}
\NewDocumentCommand{\fnJ}{o}{\widehat{J}\IfValueT{#1}{_{#1}}}
\NewDocumentCommand{\J}{mo}{\fnJ[#2]\parentheses{#1}}

\NewDocumentCommand{\grad}{e_}{\boldsymbol{\nabla}\IfValueT{#1}{_{\!\!#1}\,}}

\NewDocumentCommand{\transpose}{m}{#1^\top}

\NewDocumentCommand{\diag}{om}{\mathrm{diag}\IfValueT{#1}{_{#1}}{}\braces{#2}}

\NewDocumentCommand{\N}{somm}{\mathcal{N}\IfBooleanTF{#1}{\left(}{(}\IfValueT{#2}{#2;}{} #3, #4\IfBooleanTF{#1}{\right)}{)}}
\NewDocumentCommand{\SN}{o}{\mathcal{N}(\IfValueT{#1}{#1;}{} \vzero, \mI)}
\NewDocumentCommand{\uSN}{o}{\mathcal{N}(\IfValueT{#1}{#1;}{} 0, 1)}
\NewDocumentCommand{\KF}{ommmmm}{\mathcal{KF}(\IfValueT{#1}{#1;}{} #2, #3, #4, #5, #6)}
\NewDocumentCommand{\GP}{omm}{\mathcal{GP}(\IfValueT{#1}{#1;}{} #2, #3)}
\NewDocumentCommand{\Unif}{om}{\mathrm{Unif}(\IfValueT{#1}{#1;}{} #2)}
\NewDocumentCommand{\Bern}{om}{\mathrm{Bern}(\IfValueT{#1}{#1;}{} #2)}
\NewDocumentCommand{\Bin}{omm}{\mathrm{Bin}(\IfValueT{#1}{#1;}{} #2, #3)}
\NewDocumentCommand{\Beta}{omm}{\mathrm{Beta}(\IfValueT{#1}{#1;}{} #2, #3)}
\NewDocumentCommand{\Laplace}{omm}{\mathrm{Laplace}(\IfValueT{#1}{#1;}{} #2, #3)}
\NewDocumentCommand{\Cauchy}{omm}{\mathrm{Cauchy}(\IfValueT{#1}{#1;}{} #2, #3)}
\NewDocumentCommand{\GammaDistr}{omm}{\mathrm{Gamma}(\IfValueT{#1}{#1;}{} #2, #3)}
\NewDocumentCommand{\Pareto}{omm}{\mathrm{Pareto}(\IfValueT{#1}{#1;}{} #2, #3)}

\newcommand{\vzero}{\vec{0}}

\newcommand{\vx}{\vec{x}}

\newcommand{\vy}{\vec{y}}

\newcommand{\mI}{\mat{I}}

\newcommand{\spY}{\spa{Y}}

\newcounter{qualcellnum}

\newcommand{\qualexgrid}[3]{
  \setcounter{qualcellnum}{0}%
  \begingroup
  \setlength{\fboxsep}{6pt}%
  \begin{tcolorbox}[%
    colback=dit_lightblue!25, colframe=dit_lightblue!25,
    arc=4pt, boxrule=0pt,
    width=\textwidth, left=2pt, right=2pt, top=2pt, bottom=2pt]
    \centering
    \foreach \qualidx in {#3} {%
      \stepcounter{qualcellnum}%
      \begin{minipage}[t]{0.235\textwidth}%
        \begin{tcolorbox}[%
          colback=dit_lightblue!75, colframe=dit_lightblue!25,
          arc=4pt, boxrule=0pt,
          width=\linewidth, left=2pt, right=2pt, top=2pt, bottom=2pt]
          \centering
          {\scriptsize%
          \setlength{\baselineskip}{4pt}%
            \csvreader[separator=semicolon]{#1}%
              {idx=\qualidxcol,txt=\qualtxtcol}%
              {\ifthenelse{\equal{\qualidxcol}{\qualidx}}{\qualtxtcol}{}}%
          \par}%
          \smallskip
          \includegraphics[width=\linewidth, keepaspectratio]{#2/img_\qualidx.jpg}%
        \end{tcolorbox}%
      \end{minipage}%
      \pgfmathtruncatemacro{\qualmod}{Mod(\value{qualcellnum},4)}%
      \ifnum\qualmod=0
        \ifnum\value{qualcellnum}<16 \\[4pt]\fi
      \else
        \hfill
      \fi
    }%
    \par\smallskip
    \incfig{colorbar}%
  \end{tcolorbox}%
  \endgroup
}

\newcommand{\qualexrow}[3]{
  \begingroup
  \setlength{\fboxsep}{6pt}%
  \begin{tcolorbox}[%
    colback=dit_lightblue!25, colframe=dit_lightblue!25,
    arc=4pt, boxrule=0pt,
    width=\textwidth, left=2pt, right=2pt, top=2pt, bottom=2pt]
    \centering
    \foreach \qualidx [count=\qualn from 1] in {#3} {%
      \begin{minipage}[t]{0.185\textwidth}%
        \begin{tcolorbox}[%
          colback=dit_lightblue!75, colframe=dit_lightblue!25,
          arc=4pt, boxrule=0pt,
          width=\linewidth, left=2pt, right=2pt, top=2pt, bottom=2pt]
          \centering
          {\scriptsize%
          \setlength{\baselineskip}{4pt}%
            \csvreader[separator=semicolon]{#1}%
              {idx=\qualidxcol,txt=\qualtxtcol}%
              {\ifthenelse{\equal{\qualidxcol}{\qualidx}}{\qualtxtcol}{}}%
          \par}%
          \smallskip
          \includegraphics[width=\linewidth, keepaspectratio]{#2/img_\qualidx.jpg}%
        \end{tcolorbox}%
      \end{minipage}%
      \ifnum\qualn<5 \hfill\fi
    }%
    \par\smallskip
    \incfig{colorbar}%
  \end{tcolorbox}%
  \endgroup
}

\newcommand{\qualexrowtwo}[4]{
  \begingroup
  \setlength{\fboxsep}{6pt}%
  \begin{tcolorbox}[%
    colback=dit_lightblue!25, colframe=dit_lightblue!25,
    arc=4pt, boxrule=0pt,
    width=\textwidth, left=2pt, right=2pt, top=2pt, bottom=2pt]
    \centering
    \foreach \qualidx [count=\qualn from 1] in {#4} {%
      \begin{minipage}[t]{0.185\textwidth}%
        \begin{tcolorbox}[%
          colback=dit_lightblue!75, colframe=dit_lightblue!25,
          arc=4pt, boxrule=0pt,
          width=\linewidth, left=2pt, right=2pt, top=2pt, bottom=2pt]
          \centering
          {\scriptsize%
          \setlength{\baselineskip}{4pt}%
            \csvreader[separator=semicolon]{#1}%
              {idx=\qualidxcol,txt=\qualtxtcol}%
              {\ifthenelse{\equal{\qualidxcol}{\qualidx}}{\qualtxtcol}{}}%
          \par}%
          \smallskip
          \includegraphics[width=\linewidth, keepaspectratio]{#2/img_\qualidx.jpg}%
          \par\smallskip
          \includegraphics[width=\linewidth, keepaspectratio]{#3/img_\qualidx.jpg}%
          \par
        \end{tcolorbox}%
      \end{minipage}%
      \ifnum\qualn<5 \hfill\fi
    }%
    \par\smallskip
    \incfig{colorbar}%
  \end{tcolorbox}%
  \endgroup
}

%% file: sections/intro.tex
\section{Introduction}\label{sec:intro}
Pretrained text-to-image flow models are strong image generators, yet they work in only one direction: text in, image out. Diffusion established the paradigm for high-fidelity synthesis~\citep{ho_denoising_2020,rombach_high-resolution_2022}, and rectified-flow transformers further improved scalability, prompt fidelity, and compositional generation~\citep{peebles_scalable_2023,esser_scaling_2024,lipman_flow_2023,liu_flow_2022,labs_flux1_2025}. Their internal representations are deeply structured: attention localizes entities~\citep{chefer_attend-and-excite_2023,jin2025diffusion}, token interventions affect attribute binding~\citep{hertz_prompt--prompt_2022,bill_focus_2026,meral_conform_2023}, and benchmarks confirm increasingly rich object-relation modeling~\citep{huang_t2i-compbench_2023}. A backbone that has learned how words, objects, and spatial layouts co-vary should, in principle, also support the inverse direction: describing an image, answering questions about it, or sampling coherent image--text pairs.

Exploiting this latent capability is not straightforward. Most vision--language models (VLMs) are built in the opposite direction: a pretrained language model is given a visual encoder, making vision a conditioning signal for autoregressive text generation~\citep{alayrac_flamingo,li_blip_2022,liu2023visual,zhu2023minigpt,dai_instructblip_2023}. Unified generators that combine autoregressive text with diffusion-based images~\citep{zhou_transfusion_2024,ma_janusflow_2024} are similarly language-first. These designs discard the strong image prior the text-to-image backbone already encodes, and rebuilding it requires expensive joint pretraining. This raises a natural question: can a pretrained text-to-image flow model be made bidirectional without retraining from scratch?

We study this question in the \textbf{lightweight-upgrade regime}: starting from an SD3- or FLUX-style generator, preserving its pretrained image process, and adding text generation with minimal new parameters. This regime is both scientifically motivated, as a test of whether the visual-semantic structure already encoded by text-to-image models is sufficient for image-to-text, joint, and partial-text generation, and practically motivated: full multimodal pretraining is prohibitively expensive, whereas a low-cost recipe places follow-up experimentation on bidirectional rectified-flow models within reach of the broader research community. Prior unified diffusion models show that joint generation is possible, but operate in different regimes: end-to-end multimodal pretraining~\citep{bao_one_2023,zhou_transfusion_2024,nguyen_oneflow_2025}, shared discrete or flow-based representations~\citep{wang_fudoki_2025}, or substantial backbone retraining~\citep{li_dual_2025}.

\input{figures/example/fig}

We propose \emph{FullFlow}, a parameter-efficient recipe for upgrading pretrained rectified-flow text-to-image models into bidirectional vision--language generators. The key insight is to treat the two modalities asymmetrically but compatibly: images remain in the native continuous rectified-flow latent space, while text is handled by a lightweight discrete insertion process built on Edit Flows~\citep{havasi_edit_2025,gat_discrete_2024,campbell_generative_2024}. A single shared backbone processes both modalities under decoupled image and text timesteps $(t, \tau)$, turning inference into trajectory selection in a two-dimensional generative space: fixing one modality recovers the conditional tasks; evolving both yields unconditional image--text sampling; constraining a target span enables partial-text prediction for tasks such as VQA. For SD3, only LoRA adapters and added text heads are trained~\citep{hu_lora_2021}~--- ${\sim}5\%$ of parameters~--- and the full upgrade fits in $24$ hours on two commodity RTX~A5000 GPUs.

Empirically, FullFlow turns strong text-to-image generators into bidirectional vision--language models without sacrificing the original image prior. On SD3 against a matched Dual~Diffusion-LoRA~\citep{li_dual_2025} baseline at identical trainable-parameter count and LoRA rank, FullFlow improves text-to-image generation FID from $62.7$ to $29.3$ and image captioning CIDEr from $2.0$ to $13.4$ at matched gradient steps, with captioning CIDEr further reaching $99.4$ at matched wall-clock time, at ${\sim}8\times$ throughput and less than half the peak VRAM. The same recipe transfers to FLUX.1-dev, enables joint image--text sampling, and supports downstream VQA through the same insertion-based text interface.

FullFlow makes three contributions:
\begin{enumerate}[
    label=\textbf{C\arabic*.},
    leftmargin=*,
    itemsep=1pt,
    topsep=1pt,
    parsep=0pt,
    partopsep=0pt
]
    \item Converting a pretrained text-to-image rectified-flow model into a bidirectional vision--language generator by training only LoRA adapters and lightweight text heads, while preserving the pretrained image prior.

    \item Keeping images in the native continuous flow, adding a discrete insertion process for text, and decoupling image and text timesteps so that conditional, joint, and partial-text generation become different trajectories in a single $(t,\tau)$ space.

    \item On SD3 and FLUX.1-dev, outperforming a matched Dual~Diffusion-LoRA baseline on both text$\rightarrow$image retention and image$\rightarrow$text quality and supporting downstream VQA, while training on two commodity GPUs.
\end{enumerate}

%% file: figures/example/fig.tex
\newcommand{\subfig}[4][\textwidth]{%
    \begin{tabular}{@{}p{#1}@{}} 
        \parbox[c][0.38\textwidth][t]{#1}{#2}\\ 
        \includegraphics[width=#1, keepaspectratio]{#3} 
    \end{tabular}%
}

\begin{figure*}[t]
    \centering

        
        \begin{subfigure}[b]{0.24\textwidth}
            \begin{tcolorbox}[colback=dit_lightblue, colframe=dit_lightblue, arc=4pt, boxrule=0pt, width=\textwidth, left=1pt, right=1pt, top=1pt, bottom=0pt]
                \centering
                \subfig{%
                    \footnotesize%
                    \setlength{\baselineskip}{4pt}%
                    \textcolor[rgb]{0.001,0.000,0.014}{A} \textcolor[rgb]{0.001,0.000,0.014}{tiny} \textcolor[rgb]{0.001,0.000,0.014}{dragon} \textcolor[rgb]{0.001,0.000,0.014}{curled} \textcolor[rgb]{0.001,0.000,0.014}{up} \textcolor[rgb]{0.001,0.000,0.014}{inside} \textcolor[rgb]{0.001,0.000,0.014}{a} \textcolor[rgb]{0.001,0.000,0.014}{ceramic} \textcolor[rgb]{0.001,0.000,0.014}{coffee} \textcolor[rgb]{0.001,0.000,0.014}{cup} \textcolor[rgb]{0.001,0.000,0.014}{on} \textcolor[rgb]{0.001,0.000,0.014}{a} \textcolor[rgb]{0.001,0.000,0.014}{desk,} \textcolor[rgb]{0.001,0.000,0.014}{warm} \textcolor[rgb]{0.001,0.000,0.014}{morning} \textcolor[rgb]{0.001,0.000,0.014}{light.}%
                }{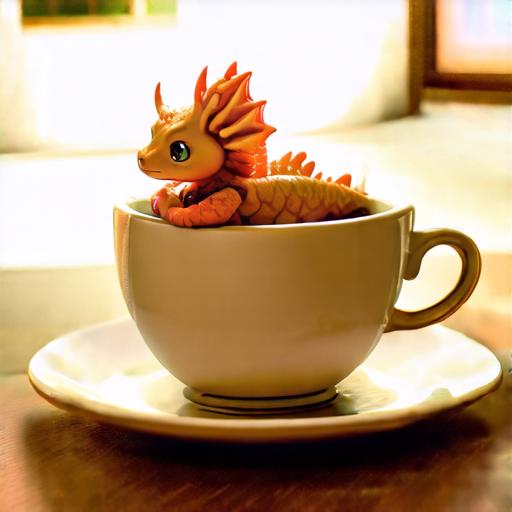}{\footnotesize$\vx \sim p_\theta(\cdot \mid \vy)$}
            \end{tcolorbox}
            \vspace{-3mm}
            \caption{\footnotesize Text-to-Image}
        \end{subfigure}\hfill
        \begin{subfigure}[b]{0.24\textwidth}
            \begin{tcolorbox}[colback=dit_lightblue, colframe=dit_lightblue, arc=4pt, boxrule=0pt, width=\textwidth, left=1pt, right=1pt, top=1pt, bottom=1pt]
            \centering
                \subfig{%
                    \footnotesize%
                    \setlength{\baselineskip}{4pt}%
                    \textcolor[rgb]{0.385,0.079,0.433}{A} \textcolor[rgb]{0.541,0.135,0.415}{classic} \textcolor[rgb]{0.978,0.558,0.035}{red} \textcolor[rgb]{0.827,0.280,0.262}{car} \textcolor[rgb]{0.689,0.192,0.358}{is} \textcolor[rgb]{0.385,0.079,0.433}{parked} \textcolor[rgb]{0.541,0.135,0.415}{on} \textcolor[rgb]{0.385,0.079,0.433}{a} \textcolor[rgb]{0.689,0.192,0.358}{street} \textcolor[rgb]{0.541,0.135,0.415}{with} \textcolor[rgb]{0.827,0.280,0.262}{a} \textcolor[rgb]{0.923,0.399,0.155}{building} \textcolor[rgb]{0.541,0.135,0.415}{in} \textcolor[rgb]{0.827,0.280,0.262}{the} \textcolor[rgb]{0.689,0.192,0.358}{background.}%
                }{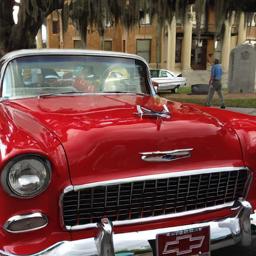}{\footnotesize$\vy \sim p_\theta(\cdot \mid \vx)$}
            \end{tcolorbox}
            \vspace{-3mm}
            \caption{\footnotesize Image-to-Text}
        \end{subfigure}\hfill
        \begin{subfigure}[b]{0.24\textwidth}
            \begin{tcolorbox}[colback=dit_lightblue, colframe=dit_lightblue, arc=4pt, boxrule=0pt, width=\textwidth, left=1pt, right=1pt, top=1pt, bottom=1pt]
            \centering
            \subfig{%
                \footnotesize%
                \setlength{\baselineskip}{4pt}%
                \textcolor[rgb]{0.817,0.271,0.270}{A} \textcolor[rgb]{0.689,0.192,0.358}{close-up} \textcolor[rgb]{0.753,0.226,0.319}{image} \textcolor[rgb]{0.890,0.348,0.198}{of} \textcolor[rgb]{0.874,0.327,0.217}{a} \textcolor[rgb]{0.012,0.009,0.063}{pineapple} \textcolor[rgb]{0.837,0.288,0.253}{with} \textcolor[rgb]{0.197,0.038,0.368}{a} \textcolor[rgb]{0.969,0.516,0.063}{leaf-like} \textcolor[rgb]{0.902,0.364,0.184}{appearance,} \textcolor[rgb]{0.976,0.544,0.044}{set} \textcolor[rgb]{0.278,0.042,0.414}{against} \textcolor[rgb]{0.099,0.046,0.244}{a} \textcolor[rgb]{0.497,0.119,0.424}{light} \textcolor[rgb]{0.566,0.144,0.408}{blue} \textcolor[rgb]{0.978,0.558,0.035}{grey} \textcolor[rgb]{0.712,0.204,0.344}{background.}%
            }{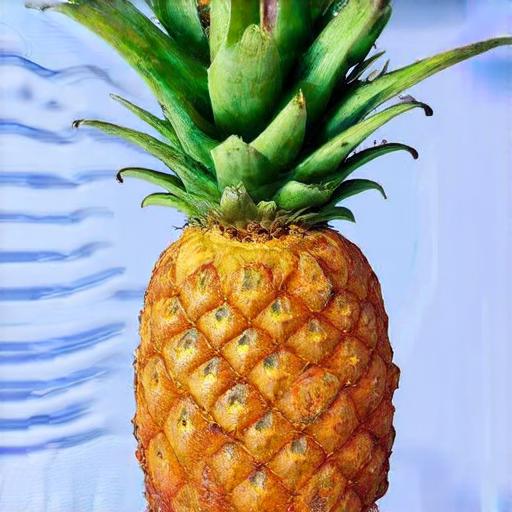}{\footnotesize$\vx, \vy \sim p_\theta(\cdot)$}
            \end{tcolorbox}
            \vspace{-3mm}
            \caption{\footnotesize Joint Text \& Image}
        \end{subfigure}\hfill
        \begin{subfigure}[b]{0.24\textwidth}
            \begin{tcolorbox}[colback=dit_lightblue, colframe=dit_lightblue, arc=4pt, boxrule=0pt, width=\textwidth, left=1pt, right=1pt, top=1pt, bottom=1pt]
            \centering
            \subfig{%
                \footnotesize%
                \setlength{\baselineskip}{4pt}%
                \textcolor[rgb]{0.001,0.000,0.014}{Question: What color is the child's hat?}\\
                \footnotesize\textcolor[rgb]{0.001,0.000,0.014}{Answer: }\textcolor[rgb]{0.978,0.558,0.035}{black.}
            }{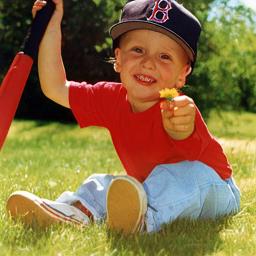}{\footnotesize$\vy_\text{ans} \sim p_\theta(\cdot \mid \vx, \vy_\text{quest})$}
            \end{tcolorbox}
            \vspace{-3mm}
            \caption{\footnotesize VQA}
        \end{subfigure}

        \medskip

        \incfig{colorbar}        
    
    \caption{Overview of the multimodal generation capabilities after finetuning Stable Diffusion 3. Beyond standard text-to-image synthesis, the model enables image-to-text generation, joint sampling over text and image and Visual Question Answering. The color bar visualizes text diffusion time.}
    \label{fig:overview}
    \vspace{-5mm}
\end{figure*}

%% file: sections/background.tex
\section{Background}\label{sec:background}
We frame both image and text generation in a single \emph{path-based} view: each modality moves from a simple base distribution $p_0$ to the data distribution $p_1$ along a path indexed by time, and is trained to predict local transport along that path.


We use $t \in [0,1]$ for image time and $\tau \in [0,1]$ for text time, with $0$ the base state and $1$ clean data. Images (or latents) are denoted by $\vx \in \mathcal{X} \subset \mathbb{R}^d$. Text is a variable-length token sequence $\vy \in \spY$ with
\begin{equation}
    \spY \coloneq \bigcup_{L \ge 0} V^L \times \{\tok{EOS}\},
\end{equation}
where $V$ is a fixed vocabulary and \tok{EOS} is the end-of-sequence token. We write $\varepsilon = (\tok{EOS})$ for the empty sequence; the discrete base distribution places all mass on $\varepsilon$.

\subsection{Continuous Flow Matching for Images}\label{sec:fm}
In continuous space, a model defines a time-dependent vector field $v_\theta(t,\vx)$ and generates samples by integrating
\begin{equation}
    \frac{d\vx_t}{dt} = v_\theta(t,\vx_t), \qquad t \in [0,1].
\end{equation}
Flow Matching~\citep{lipman_flow_2023} trains this vector field by specifying a path between a base sample $\vx_0$ and a target sample $\vx_1$, and then regressing onto the instantaneous velocity along that path. Concretely, given an interpolation $\vx_t = \psi_t(\vx_0,\vx_1)$, the target velocity is
\begin{equation}
    u(t,\vx_t \mid \vx_0,\vx_1) = \frac{\partial}{\partial t}\psi_t(\vx_0,\vx_1),
\end{equation}
and the model is trained with
\begin{equation}\label{eq:img_loss}
    \mathcal{L}_{\mathrm{img}}(\theta)
    =
    \E*[t \sim \Unif{[0,1]}]{}
    \E[\substack{\vx_0 \sim p_0\\\vx_1 \sim p_1}]{
        \norm{v_\theta(t,\vx_t) - u(t,\vx_t \mid \vx_0,\vx_1)}_2^2
    },
\end{equation}
where $\vx_t = \psi_t(\vx_0,\vx_1)$. In this work, we use rectified flow~\citep{liu_flow_2022}, which uses the linear interpolation
\begin{equation}\label{eq:linear_path}
    \vx_t = (1-t)\vx_0 + t\vx_1 \implies
    u(t,\vx_t \mid \vx_0,\vx_1) = \vx_1 - \vx_0.
\end{equation}

\subsection{Discrete Flow Matching for Text}\label{sec:fm_text}
For text, the state space $\spY$ is discrete and variable-length, so generation cannot be described by a Euclidean velocity field. Instead, local transport is specified by transition rates between sequences. We model these transitions by a continuous-time Markov chain (CTMC) with time-dependent generator $Q_\tau$, where $Q_\tau(\vy,\vy')$ is the instantaneous rate of jumping from sequence $\vy$ to sequence $\vy'$. Using the row-vector convention for distributions, the marginal law $p_\tau$ evolves according to
\begin{equation}
    \frac{d}{d\tau}p_\tau = p_\tau Q_\tau .
\end{equation}
Such jump processes provide a discrete analogue of flow-based generation~\citep{campbell_generative_2024,gat_discrete_2024}: the generator $Q_\tau$ plays the role of a discrete velocity field by specifying the local direction of probability transport.

Following Edit Flows~\citep{havasi_edit_2025}, we use a deletion-to-base forward process and learn the reverse insertion process. Given a clean sequence $\vy \sim p_1$, we construct a corrupted subsequence $\tilde{\vy}_\tau$ by independently retaining each non-\tok{EOS} token with probability $\tau$, while preserving order:
\begin{equation}
    \tilde{\vy}_\tau \sim q_\tau(\cdot \mid \vy), \qquad \tilde{\vy}_0 = \varepsilon, \qquad \tilde{\vy}_1 = \vy.
\end{equation}
We choose Edit Flows over mask-based discrete diffusion alternatives such as MDLM~\citep{sahoo_simple_2024} and SEDD~\citep{lou_discrete_2024} because deletion-based corruption yields valid token subsequences at every noise level, preserving compatibility with multiple frozen pretrained text encoders (\Cref{sec:peft}).
Thus, $\tau$ controls how much of the original sequence remains: at $\tau=0$ only the \tok{EOS} anchor is present, and at $\tau=1$ the full sequence is recovered. For example, for $\vy=\tok{A}\tok{black}\tok{cat}\tok{EOS}$, one possible corruption at $\tau=0.5$ is $\tilde{\vy}_\tau=\tok{cat}\tok{EOS}$, as illustrated in \Cref{fig:fm_discrete}.

The reverse process reconstructs $\vy$ by inserting missing tokens into the retained subsequence. For each retained token $\tilde{y}_\tau^i$ including \tok{EOS}, let $A_i$ be the deleted tokens immediately to its left; in the example, $A_0=\{\tok{A},\tok{black}\}$ before \tok{cat} and $A_1=\emptyset$ before \tok{EOS}. The model parameterises the time-reversed CTMC with a nonnegative insertion rate $\lambda_\theta^i(\tilde{\vy}_\tau,\tau)$ at each retained position and a categorical distribution $w_\theta^i(\cdot\mid\tilde{\vy}_\tau,\tau)$ over inserted token identities.

Training combines a count loss and a token loss: $\mathcal{L}_{\mathrm{txt}}(\theta) = \mathcal{L}_{\mathrm{cnt}}(\theta) + \mathcal{L}_{\mathrm{tok}}(\theta)$, defined as
\begin{align}\label{eq:txt_loss}
    \mathcal{L}_{\mathrm{cnt}}(\theta) &=
    \E*[\begin{subarray}{l}
        \scriptstyle\tau \sim \Unif{[0,1]}\\
        \scriptstyle\vy \sim p_1
    \end{subarray}]{}
    \E[\tilde{\vy}_\tau \sim q_\tau(\cdot \mid \vy)]{
        \sum_i \lambda_\theta^i(\tilde{\vy}_\tau, \tau) - |A_i| \cdot \log \lambda_\theta^i(\tilde{\vy}_\tau, \tau)
    },\\
    \mathcal{L}_{\mathrm{tok}}(\theta) &= 
    \E*[\begin{subarray}{l}
        \scriptstyle\tau \sim \Unif{[0,1]}\\
        \scriptstyle\vy \sim p_1
    \end{subarray}]{}
    \E[\tilde{\vy}_\tau \sim q_\tau(\cdot \mid \vy)]{
        -\sum_i \sum_{a \in A_i} \log w^i_\theta(a \mid \tilde{\vy}_\tau, \tau)
    }.
\end{align}
The first term is the Poisson negative log-likelihood for the number of insertions, up to constants independent of $\theta$, and the second term is the cross-entropy loss for the inserted token identities. Together, they define a reverse-time insertion process that transports $\varepsilon$ to natural text.

%% file: sections/method.tex
\section{Methodology}\label{sec:method}
We describe how to upgrade a pretrained rectified-flow text-to-image model into a bidirectional vision--language generator: the native-space joint formulation, parameter-efficient backbone extension, dual-timestep objective and stabilizers, and inference interface.

\subsection{Native-Space Multimodal Extension}\label{sec:native_space_extension}
Given a pretrained text-to-image model, we add text-generation capability while preserving the original image prior. We retain the pretrained continuous image process~\citep{liu_flow_2022,lipman_flow_2023} and introduce a discrete insertion process for text in the T5 tokenizer space, built on Edit Flows~\citep{havasi_edit_2025}. We choose Edit Flows over mask diffusion because their deletion-based forward process keeps every intermediate state $\tilde{\vy}_\tau$ a valid token subsequence that can be decoded and re-tokenized for SD3's and FLUX's auxiliary CLIP encoders, whereas mask tokens fall outside their frozen vocabularies.

Let $(\vx,\vy) \sim p_1$ denote paired image--text data. Images are corrupted by $q_t^x(\vx_t \mid \vx)$ and reconstructed with an image velocity head~$v_\theta$. Text is corrupted by $q_\tau^y(\tilde{\vy}_\tau \mid \vy)$ via token deletion and reconstructed with lightweight heads for insertion counts and token identities. A single shared backbone jointly processes $(\vx_t, t, \tilde{\vy}_\tau, \tau)$, making the two modalities mutually informative at every forward pass.

\begin{figure}[t]
    \centering
    \begin{minipage}[b]{0.39\textwidth}
      \vspace{0pt}
      \centering
      \resizebox{\textwidth}{!}{%
          \input{figures/overview/fig}%
      }
      \caption{\small Dual-timestep space with image time $t$ (x-axis) and text time $\tau$ (y-axis). All arrows, including the black ones, correspond to valid trajectories in the joint generative space. Colored trajectories highlight the canonical modes: \textcolor{dit_red}{\textbf{\textit{text$\to$image}}}, \textcolor{dit_blue}{\textbf{\textit{image$\to$text}}}, and \textcolor{dit_green}{\textbf{\textit{joint}}} generation.}
      \label{fig:timespace}
    \end{minipage}%
    \hfill%
    \begin{minipage}[b]{0.59\textwidth}
        \vspace{0pt}
        \centering
        \resizebox{\textwidth}{!}{%
            \input{figures/architecture/model}%
        }
        \caption{\small Overview of the multimodal DiT architecture adapted from Stable Diffusion 3. The model jointly processes noisy text tokens and image patches, along with their modality-specific timesteps $(\tau, t)$, to predict cross-modal flows. Blue blocks denote frozen pretrained components, yellow blocks indicate modules augmented with LoRA layers, red blocks mark newly added trainable components, and green blocks represent auxiliary operations.}
        \label{fig:mmdit_overview}
    \end{minipage}\vspace{-2mm}
\end{figure}

\subsection{Parameter-Efficient Extension of the Backbone}\label{sec:peft}
We freeze the backbone and train only: (i)~a conditioning pathway for the new text timestep~$\tau$, (ii)~text heads for insertion counts and token identities, and (iii)~LoRA adapters in transformer layers~\citep{hu_lora_2021}. \Cref{fig:mmdit_overview} illustrates the modifications to the SD3 MMDiT backbone. With LoRA rank $r=32$, this yields ${\sim}136$M trainable parameters for SD3 (${\approx}5\%$ of total) and ${\sim}298$M for FLUX.1 (${\approx}2.5\%$).

SD3 conditions on three frozen encoders (T5-XXL and two CLIP variants) using both token-level and pooled embeddings. We perform discrete text diffusion
entirely in the T5 tokenizer space, then bridge to the auxiliary encoders by decoding the partially deleted sequence $\tilde{\vy}_\tau$ to a string and
re-tokenizing it for each CLIP encoder, which is precisely possible because our chosen deletion produces valid token fragments rather than out-of-vocabulary mask symbols. Details are given in \Cref{app:multiencoder}.

\subsection{Dual-Timestep Training Objective}\label{sec:dual_time_obj}
Image and text are corrupted independently according to modality-specific timesteps $t$ and $\tau$, whose joint distribution is specified by a training schedule $\pi$.
Given the partially corrupted pair $(\vx_t,\tilde{\vy}_\tau)$, the model predicts both the image velocity and the missing text insertions:
\begin{align}\label{eq:joint_objective}
    \mathcal{L}(\theta)
    =
    \E*[(\vx,\vy) \sim p_1]{}
    \E*[(t,\tau)\sim \pi]{}
    \E[\substack{\vx_t \sim q_t^x(\cdot \mid \vx)\\ \tilde{\vy}_\tau \sim q_\tau^y(\cdot \mid \vy)}]{
        \mathcal{L}_{\mathrm{img}}(\theta;\, \vx_t, t, \tilde{\vy}_\tau, \tau)
        +
        \lambda_{\mathrm{txt}}\,
        \mathcal{L}_{\mathrm{txt}}(\theta;\, \tilde{\vy}_\tau, \tau, \vx_t, t)
    },
\end{align}
where $\mathcal{L}_{\mathrm{img}}$ and $\mathcal{L}_{\mathrm{txt}}$ are the rectified-flow and insertion losses from \Cref{eq:img_loss,eq:txt_loss}, and $\lambda_{\mathrm{txt}}$ controls the relative contribution of the text objective.

\tinytit{Time coupling.}
The schedule $\pi(t,\tau)$ determines which jointly corrupted states are seen during training. The \emph{alternating-clean} baseline
\begin{align}\label{eq:pi_ac}
    \pi_{\mathrm{ac}}:\quad
    \text{w.p. } \tfrac{1}{2},\; t \sim \Unif{[0,1]},\ \tau = 1;\qquad
    \text{w.p. } \tfrac{1}{2},\; t = 1,\ \tau \sim \Unif{[0,1]},
\end{align}
matches Dual Diffusion~\citep{li_dual_2025} but supervises only the two endpoint-conditioned tasks, leaving the model without signal on intermediate joint states; this can push the two conditioning pathways toward orthogonal representations that impede cross-modal alignment. Our default
\begin{align}\label{eq:pi_ind}
    \pi_{\mathrm{ind}}:\quad (t,\tau) \sim \Unif{[0,1]^2},
\end{align}
exposes the model to all combinations of partial image and partial text, directly training cross-modal correspondence across the full $(t,\tau)$ space.

\subsection{Stabilization}\label{sec:stabilizers}
Adding text generation to a pretrained image model creates two challenges: gradient scale mismatch, since the image branch is near convergence while text heads are randomly initialized; and image-prior degradation from fine-tuning outside the original data distribution. We address each with a stabilizer.

\tinytit{Adaptive gradient balancing.}
The two objectives induce gradients at different scales: the image branch is pretrained and uses an MSE velocity loss over high-dimensional latents, while the newly initialized text heads use count and cross-entropy losses with larger output-level gradients (\Cref{app:gradnorm}). Text updates can therefore dominate shared-parameter training early on. We periodically estimate the relative gradient scale on shared parameters and update $\lambda_{\mathrm{txt}}$ by EMA:
\begin{align}\label{eq:grad_ratio}
    r &= \frac{\norm{\grad_{\theta_{\mathrm{sh}}}\mathcal{L}_{\mathrm{img}}}_2}
              {\norm{\grad_{\theta_{\mathrm{sh}}}\mathcal{L}_{\mathrm{txt}}}_2 + \epsilon},
    &
    \lambda_{\mathrm{txt}} &\leftarrow \beta \lambda_{\mathrm{txt}} + (1-\beta)\, r,
\end{align}
with $\beta=0.99$ and $\epsilon=10^{-8}$, adding negligible overhead ($<0.1\%$ runtime) and removing the need to hand-tune a fixed weight per backbone.

\tinytit{Teacher matching.}
To preserve the pretrained image prior when fine-tuning on data outside the original pretraining distribution, we replace the velocity regression with a \emph{teacher-matching} objective:
\begin{align}\label{eq:img_loss_teacher}
    \mathcal{L}_{\mathrm{img}}(\theta) =
    \norm{
        \mathrm{sg}\!\left[v_{\mathrm{base}}(\vx_t, t, \bar{\vy})\right]
        -
        v_\theta(\vx_t, t, \tilde{\vy}_\tau, \tau)
    }_2^2,
\end{align}
where $\mathrm{sg}[\cdot]$ stops gradients and $\bar{\vy}$ is the teacher conditioning (clean text $\bar{\vy}=\vy$ or same-noise text $\bar{\vy}=\tilde{\vy}_\tau$). Crucially, the teacher $v_{\mathrm{base}}$ is obtained by simply disabling the LoRA layers of the same backbone, so no additional frozen copy needs to be held in memory.


\subsection{Inference Modes}\label{sec:vqa}

\tinytit{Trajectory selection in the time space.}
Decoupling $t$ and $\tau$ turns inference into trajectory selection in a two-dimensional generative space; see \Cref{fig:timespace}. Fixing $\tau=1$ recovers text$\rightarrow$image; fixing $t=1$ gives image$\rightarrow$text; evolving both jointly yields unconditional image--text sampling. Any trajectory through the $(t,\tau)$ square is valid, enabling flexible co-denoising without architectural change.

\tinytit{Partial-Text Generation for Downstream Tasks} 
Restricting the insertion process to a target span while fixing all other tokens gives a simple, decoder-free interface for $(\text{image},\text{text})\!\rightarrow\!\text{text}$ tasks: for VQA, we condition on the image and question and generate only the answer span, extending the same backbone and inference procedure to structured understanding without any additional components.

%% file: figures/overview/fig.tex
\begin{tikzpicture}[x=2.9cm,y=3.0cm,>=Latex]
    
    \tikzset{
      cell/.style={
        draw, rounded corners=2pt,
        fill=black!2,
        inner sep=2pt,
        minimum width=1.9cm,
        minimum height=2.0cm
      },
      img/.style={draw, inner sep=0pt},
      caption/.style={font=\small, align=center, text width=1.9cm},
      flow/.style={->, line width=2.0pt}
    }
    
    \newcommand{\mmcell}[4]{
      \node[cell] (#1) at #2 {};
      \node[img] (#1img) at ($(#1.north)+(0,-0.75cm)$) {%
        \includegraphics[width=1.3cm,height=1.3cm]{#3}%
      };
      \node[caption] (#1txt) at ($(#1.south)+(0,0.30cm)$) {#4};
    }
    
    \mmcell{g00}{(0,0)}{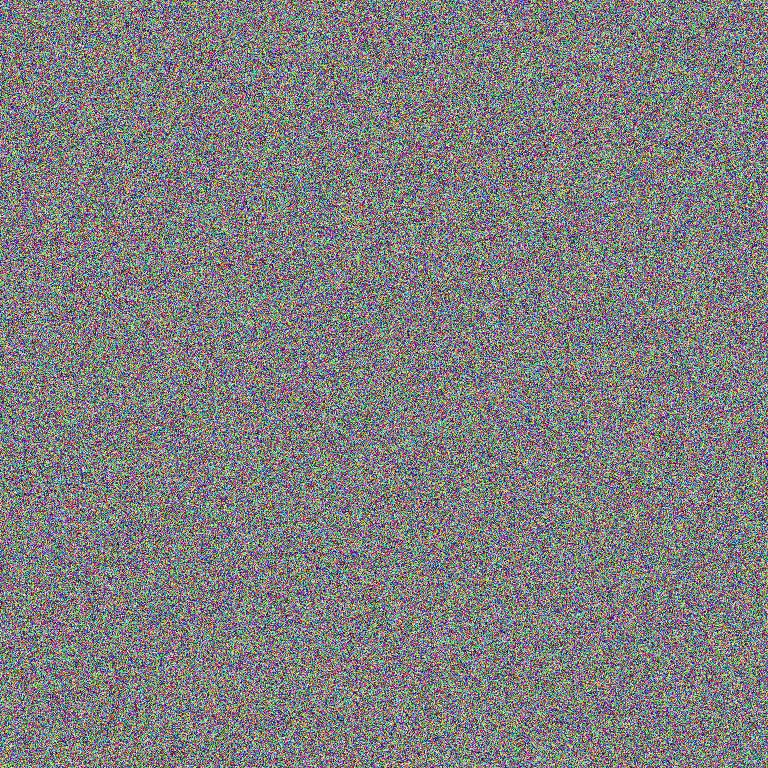}{$\varepsilon$}
    \mmcell{g_00_10}{(1,0)}{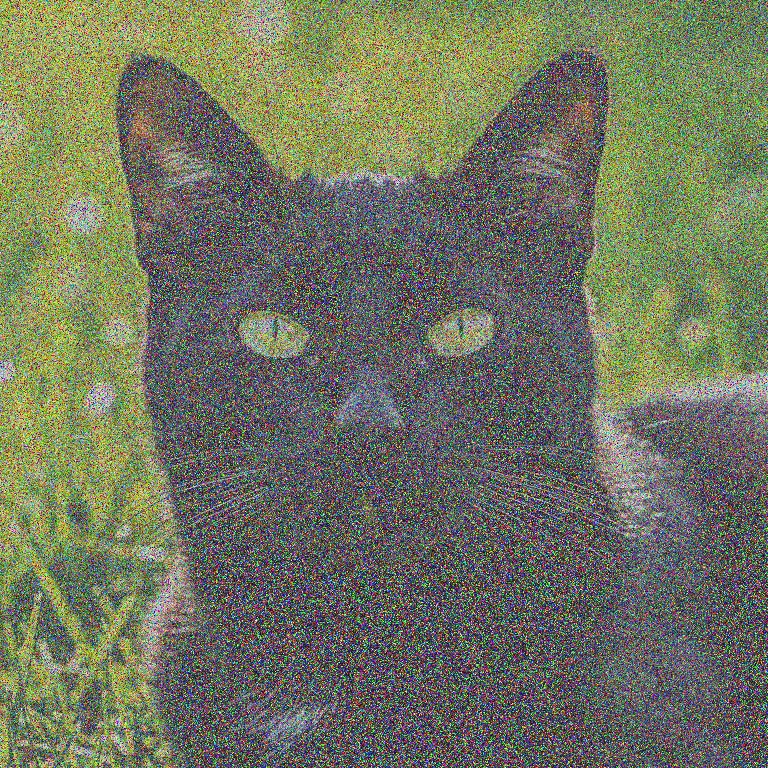}{$\varepsilon$}
    \mmcell{g10}{(2,0)}{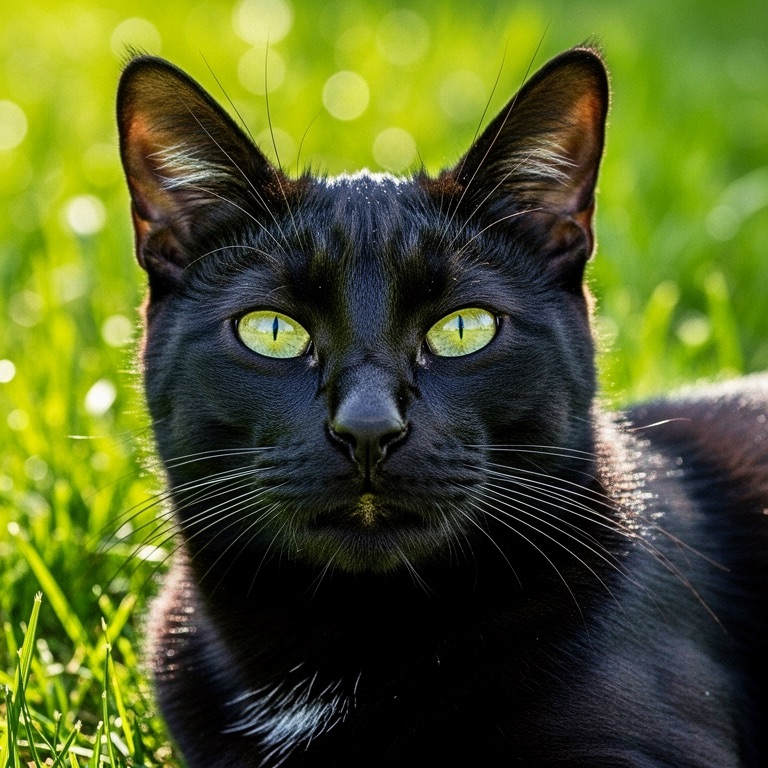}{$\varepsilon$}
    
    \mmcell{g_00_01}{(0,1)}{figures/overview/noise.jpeg}{``A cat''}
    \mmcell{gmid}{(1,1)}{figures/overview/cat_noisy.jpeg}{``A cat''}
    \mmcell{g_10_11}{(2,1)}{figures/overview/cat_clean.jpeg}{``A cat''}
    
    \mmcell{g01}{(0,2)}{figures/overview/noise.jpeg}{\textit{``A black cat''}}
    \mmcell{g_01_11}{(1,2)}{figures/overview/cat_noisy.jpeg}{``A black cat''}
    \mmcell{g11}{(2,2)}{figures/overview/cat_clean.jpeg}{``A black cat''}
    
    \coordinate (x0) at ($(g00.south west)+(0,-0.1)$);
    \coordinate (x1) at ($(g10.south east)+(0,-0.1)$);
    \draw[->, line width=1.5pt] (x0) -- (x1)
      node[midway, below=2pt]
      {Image time $t_{\text{img}}$};
    \node[below=2pt] at ($(x0)!0.05!(x1)$) {$t_{\text{img}}=0$};
    \node[below=2pt] at ($(x0)!0.95!(x1)$) {$t_{\text{img}}=1$};
    
    \coordinate (y0) at ($(g00.south west)+(-0.1,0)$);
    \coordinate (y1) at ($(g01.north west)+(-0.1,0.0)$);
    \draw[->, line width=1.5pt] (y0) -- (y1)
      node[midway, left=8pt, align=center, rotate=90, anchor=center]
      {Text time $\tau_{\text{txt}}$};
    
    \node[left=6pt, rotate=90] at ($(y0)!0.1!(y1)$) {$\tau_{\text{txt}}=0$};
    \node[left=6pt, rotate=90] at ($(y0)!0.95!(y1)$) {$\tau_{\text{txt}}=1$};
    
    \draw[flow] (g00.east) -- (g_00_10.west);
    \draw[flow] (g_00_10.east) -- (g10.west);
    
    \draw[flow] (g_00_01.east) -- (gmid.west);
    \draw[flow] (gmid.east) -- (g_10_11.west);
    
    \draw[flow,color=dit_red] (g01.east) -- (g_01_11.west);
    \draw[flow,color=dit_red] (g_01_11.east) -- (g11.west);
    
    \draw[flow] (g00.north) -- (g_00_01.south);
    \draw[flow] (g_00_01.north) -- (g01.south);
    
    \draw[flow] (g_00_10.north) -- (gmid.south);
    \draw[flow] (gmid.north) -- (g_01_11.south);
    
    \draw[flow,color=dit_blue] (g10.north) -- (g_10_11.south);
    \draw[flow,color=dit_blue] (g_10_11.north) -- (g11.south);
    
    \draw[flow,color=dit_green] (g00.north east) -- (gmid.south west);
    \draw[flow,color=dit_green] (gmid.north east) -- (g11.south west);
    
    \draw[flow] (g_00_01.north east) -- (g_01_11.south west);
    \draw[flow] (g_00_10.north east) -- (g_10_11.south west);
\end{tikzpicture}

%% file: figures/architecture/model.tex
\begin{tikzpicture}[
        font=\fontsize{12}{14}\selectfont,
        every path/.style={line width=0.8pt},
        block/.style={draw, fill=white, rectangle, minimum width=3cm,rounded corners=0.2cm},
        trainableblock/.style={block,fill=dit_red!70,thick},
        lorablock/.style={block,fill=dit_yellow!70,thick},
        frozenblock/.style={block,fill=dit_lightblue,thick},
        fnblock/.style={block,fill=dit_lightgreen,thick},
        operation/.style={draw=black!70, fill=white, circle, minimum width=1.5em, thick},
        tensor/.style={draw=black!70, fill=dit_grey!20, circle, thick, dashed, minimum width=1.5em},
        input/.style={block,fill=dit_grey!50,thick},
        outlined arrow/.style={preaction={draw, white, line width=4pt}}
    ]

    \pgfdeclarelayer{arrowlayer}
    \pgfdeclarelayer{residuallayer}
    \pgfdeclarelayer{decorations}
    \pgfdeclarelayer{bg}
    \pgfsetlayers{bg,arrowlayer,residuallayer,main,decorations}

    \newcommand{\clipGcolor}{LimeGreen!40}
    \newcommand{\clipLcolor}{ForestGreen!40}
    \newcommand{\tfiveColor}{YellowGreen!40}

    \pgfdeclarelayer{arrowlayer}
    \pgfdeclarelayer{residuallayer}
    \pgfdeclarelayer{decorations}
    \pgfdeclarelayer{bg}
    \pgfsetlayers{bg,arrowlayer,residuallayer,main,decorations}
    
    \node [input] (text) {\textbf{Caption $\vy$}};

    \node [frozenblock, fill=\clipGcolor, below=0.5cm of text] (clipG) {\textbf{CLIP-G/14}};
    \node [frozenblock, fill=\clipLcolor, left=1.5cm of clipG.center] (clipL) {\textbf{CLIP-L/14}};
    \node [frozenblock, fill=\tfiveColor, right=1.5cm of clipG.center] (t5) {\textbf{T5 XXL\phantom{/}}};

    \coordinate [below=2.0cm of clipG] (conditioning);
    \coordinate [left=3.0cm of conditioning] (pool_conditioning);
    
    \coordinate (timestep_align) at ($(pool_conditioning)+(-3.6,0)$);
    \coordinate (timestep_top) at (timestep_align |- text.center);
    \node [input, right=3.6cm of text] (image) {\textbf{Image $\vx$}};

    \draw [fill=\tfiveColor] (conditioning) |-++ (1.5, 1) coordinate (conditioning_br) --++ (0, -1.5) coordinate (conditioning_tr) -| cycle;
    \draw [fill=\clipGcolor] (conditioning) --++ (0, 1) coordinate (conditioning_bm) --++ (-1.5, 0) coordinate (conditioning_bl) --++ (0, -1) -- cycle;
    \draw [fill=\clipLcolor] (conditioning) --++ (-1.5, 0) coordinate (conditioning_ml) --++ (0, -0.5) coordinate (conditioning_tl) -| cycle;

    \draw [->] (t5) to [out=270, in=0] ($(conditioning_br)!0.5!(conditioning_tr)+(-0.5, 0)$);
    \draw [->] (clipG) to [out=270, in=90] ($(conditioning_bl)!0.5!(conditioning_bm)-(0, 0.5)$);
    \draw [->] (clipL) to [out=270, in=180] ($(conditioning_ml)!0.5!(conditioning_tl)+(0.5, 0)$);

    \draw [fill=\clipGcolor] (pool_conditioning) -|++ (0.2, 1) -|++ (-0.4, -1) -- cycle;
    \draw [fill=\clipLcolor] (pool_conditioning) -|++ (0.2, -0.5) -|++ (-0.4, 0.5) -- cycle;

    \draw [->] (clipG) to [out=270, in=90] ($(pool_conditioning)+(0, 0.75)$);
    \draw [->] (clipL) to [out=270, in=0] ($(pool_conditioning)+(0, -0.25)$);


    \node [frozenblock, below=1cm of conditioning] (y_lin) {\textbf{Linear}};
    \node [frozenblock, below=1cm of pool_conditioning] (pool_mlp) {\textbf{MLP}};

    \begin{pgfonlayer}{arrowlayer}
        \draw [->] (text.south) to [out=270, in=90] (clipG);
        \draw [->] (text.south) to [out=270, in=90] (clipL);
        \draw [->] (text.south) to [out=270, in=90] (t5);

        \draw [->] (conditioning) -- (y_lin);
    \end{pgfonlayer}


    \node [fnblock, minimum width=5.4cm, below=1.5cm of timestep_top] (t_enc) {\textbf{Sinusoidal Encoding}};
    \node [frozenblock, minimum width=5.4cm, below=0.2cm of t_enc] (t_emb) {\textbf{MLP}};
    \coordinate (t_enc_left_in) at ($(t_enc.north west)!0.28!(t_enc.north east)$);
    \coordinate (t_enc_right_in) at ($(t_enc.north west)!0.72!(t_enc.north east)$);
    \coordinate (t_enc_left_out) at ($(t_enc.south west)!0.28!(t_enc.south east)$);
    \coordinate (t_enc_right_out) at ($(t_enc.south west)!0.72!(t_enc.south east)$);
    \coordinate (t_emb_left_in) at ($(t_emb.north west)!0.28!(t_emb.north east)$);
    \coordinate (t_emb_right_in) at ($(t_emb.north west)!0.72!(t_emb.north east)$);
    \coordinate (t_emb_left_out) at ($(t_emb.south west)!0.28!(t_emb.south east)$);
    \coordinate (t_emb_right_out) at ($(t_emb.south west)!0.72!(t_emb.south east)$);
    \coordinate (t_enc_top_mid) at ($(t_enc.north west)!0.5!(t_enc.north east)$);
    \node [input] (timestep_tau) at ($(t_enc_top_mid |- timestep_top)+(-1.3cm,0)$) {\textbf{Timestep $\tau$}};
    \node [input] (timestep_t) at ($(t_enc_top_mid |- timestep_top)+(1.85cm,0)$) {\textbf{Timestep $t$}};
    \node [operation] (delta_diff) at ($(t_emb_left_out)+(0,-0.75cm)$) {$\bm{-}$};
    \node [operation] (t_plus) at (t_emb_right_out |- pool_mlp.center) {$\bm{+}$};
    \node [operation, below=1.8cm of t_plus] (tau_update) {$\bm{+}$};
    \node [trainableblock, below=1.5cm of delta_diff] (delta_mlp) {\textbf{Linear}};
    \node [operation] (delta_mult) at (delta_mlp.center |- tau_update) {$\bm{\times}$};
    \node [fnblock, below=0.5cm of delta_mult] (delta_tanh) {\textbf{tanh}};
    \node [trainableblock, below=0.5cm of delta_tanh] (delta_scale) {\textbf{Scalar}};

    \coordinate (y_branch) at ($(pool_mlp.west)+(-1.0,0)$);
    \node [tensor] (tau_final) at ($(tau_update)+(4.2cm,0)$ |- attn1.center) {$\tau$};
    
    \begin{pgfonlayer}{arrowlayer}
        \draw [->] (pool_conditioning) -- (pool_mlp);
        \draw [->] (timestep_tau.south) -- (timestep_tau.south -| t_enc_left_in) -- (t_enc_left_in);
        \draw [->] (timestep_t.south) -- (timestep_t.south -| t_enc_right_in) -- (t_enc_right_in);
        \draw [->] (t_enc_left_out) -- (t_emb_left_in);
        \draw [->] (tau_update) -- (tau_final);
        \draw [->] (t_enc_right_out) -- (t_emb_right_in);
        \draw [->] (t_emb_left_out) -- (delta_diff);
        \draw [->] (delta_diff) -- (delta_mlp);
        \draw [->] (t_emb_right_out) -- (t_plus);
        \draw [->] (t_plus) -- (tau_update);
        \draw [->] (t_plus) -- (tau_update);
        \draw [->] (t_emb_right_out) |- (delta_diff);
        \draw [->] (pool_mlp.west) |- (t_plus.east);
        \draw [->] (delta_scale) -- (delta_tanh);
        \draw [->] (delta_tanh) -- (delta_mult);
        \draw [->] (delta_mlp.south) -- (delta_mult.north);
        \draw [->] (delta_mult) -- (tau_update);
    \end{pgfonlayer}

    \node [fnblock, below=1.5cm of image] (patching) {\textbf{Patching}};
    \node [frozenblock, below=0.2cm of patching] (patch_proj) {\textbf{Linear}};
    \node [operation, below=0.5cm of patch_proj] (image_plus_pos) {$\bm{+}$};
    \node [frozenblock, left=0.2cm of image_plus_pos,align=center] (pos_emb) {\textbf{Positional}\\\textbf{Embedding}};

    \begin{pgfonlayer}{arrowlayer}
        \draw [->] (image) -- (patching);
        \draw [->] (patching) -- (patch_proj);
        \draw [->] (patch_proj) -- (image_plus_pos);
        \draw [->] (pos_emb) -- (image_plus_pos);
    \end{pgfonlayer}

    \coordinate (y_stream) at ($(y_lin.south)+(0,-0.15cm)$);
    \coordinate (x_stream) at ($(image_plus_pos.south)+(0,-0.55cm)$);
    \coordinate (stream_mid) at ($(y_stream)!0.5!(x_stream)$);
    
    \node [lorablock, below=1.0cm of stream_mid, minimum width=5cm] (attn1) {\textbf{MM-Dit-Block 1}};
    \node [below=0.15cm of attn1, minimum width=5cm] {$\ldots$};
    \node [lorablock, below=0.6cm of attn1, minimum width=5cm] (attnN) {\textbf{MM-Dit-Block N}};

    \coordinate (attn1_mid_left) at ($(attn1.north west)!0.3!(attn1.north)$);
    \coordinate (attn1_mid_right) at ($(attn1.north east)!0.3!(attn1.north)$);

    \coordinate (attnN_mid_left) at ($(attnN.north west)!0.3!(attnN.north)$);
    \coordinate (attnN_mid_right) at ($(attnN.north east)!0.3!(attnN.north)$);

    \node [frozenblock, below=1.3cm of attnN_mid_right] (out_mod) {\textbf{Modulation}};
    \node [frozenblock, below=0.2cm of out_mod] (out_mlp) {\textbf{Linear}};
    \node [fnblock, below=0.2cm of out_mlp] (depatch) {\textbf{Unpatching}};
    
    \node [trainableblock, below=1.3cm of attnN_mid_left] (out_txt_mod) {\textbf{Modulation}};
    \node [trainableblock, below=0.2cm of out_txt_mod] (out_head) {\textbf{Linear}};
    \node [input, below=1.2cm of out_head] (output_txt) {\textbf{Text Output}};
    \node [input] (output) at (depatch.center |- output_txt) {\textbf{Image Output}};

    \coordinate (attn1_left) at ($(attn1.west)+(-1.4, 0)$);
    \coordinate (attn1_right) at ($(attn1.east)+(1.4, 0)$);

    \node [tensor, above=0.6cm of attn1_right] (t_final) {$t$};

    \begin{pgfonlayer}{arrowlayer}
        \draw [->] (image_plus_pos.south) -- (x_stream) -| (attn1_mid_right) -- (output);
        \draw [->] (y_lin.south) -- (y_stream) -| (attn1_mid_left) -- (output_txt);

        \draw [->] (tau_final) -| (attn1_left) -- (attn1);
        \draw [->] (attn1_left) |- (attnN);
        \draw [->] (attn1_left) |- (out_txt_mod);
        
        \draw [->, outlined arrow] (t_plus |- t_final) -- (t_final);
        \draw [->] (t_final.south) -| (attn1_right) -- (attn1);
        \draw [->] (attn1_right) |- (attnN);
        \draw [->] (attn1_right) |- (out_mod);
    \end{pgfonlayer}

    \begin{pgfonlayer}{bg}
        \draw [fill=dit_yellow!30] ($(attn1.north east)+(0.3, 0.3)$) -| ($(attnN.south west)+(-0.3, -0.3)$) -| cycle;
    \end{pgfonlayer}

\end{tikzpicture}

%% file: sections/related_work.tex
\section{Related Work}\label{sec:related}
\leavevmode

\begin{wraptable}{r}{0.3\textwidth}
    \centering
    \vspace{-4mm}
    \caption{\small Trainable-component variants trained for 100k steps; text-conditioning variants for 200k steps. Metrics averaged over 5k validation examples.}
    \setlength{\tabcolsep}{3pt}
    \renewcommand{\arraystretch}{0.92}
    \footnotesize
    \textbf{Trainable components}\\[0.1em]
    \begin{tabular}{l S[table-format=2.1]}
        \toprule
        \textbf{Variant} & {\textbf{CIDEr} $\uparrow$} \\
        \midrule LoRA only & \textbf{40.1} \\
        Refiner only & 0.0 \\
        LoRA + Refiner & 19.9 \\
        \bottomrule
    \end{tabular}\\[3mm]
    \textbf{Text-conditioning pathways}\\[0.1em]
    \begin{tabular}{l S[table-format=2.2] S[table-format=2.2]}
        \toprule
        \textbf{Metric} & {\textbf{Only T5}} & {\textbf{All Enc.}} \\
        \midrule
        CMMD $\downarrow$ & 0.50 & \textbf{0.11} \\
        FID $\downarrow$ & 44.55 & \textbf{34.25} \\
        \midrule CIDEr $\uparrow$ & 63.96 & \textbf{65.42} \\
        CLIP $\uparrow$ & \textbf{0.26} & 0.25 \\
        \bottomrule
    \end{tabular}
    \label{tab:a1_ablation}
    \vspace{-5mm}
\end{wraptable}

\tinytit{Unified multimodal generation.} Earlier diffusion-based systems such as Versatile Diffusion and UniDiffuser showed that one diffusion-style model can support text-to-image, image-to-text, and joint or marginal generation through multi-flow designs or modality-specific timesteps~\citep{xu_versatile_2024,bao_one_2023}. Later unified models explored hybrid or tokenized alternatives: Transfusion and Show-o combine diffusion with autoregressive text modeling, while Chameleon and Janus use tokenized visual representations in shared transformers~\citep{zhou_transfusion_2024,xie_show-o_2025,team_chameleon_2025,wu_janus_2024}. Recent non-autoregressive systems such as UniDisc and Muddit move closer to unified image--text diffusion, with Muddit also reusing a pretrained text-to-image backbone~\citep{swerdlow_unified_2025,shi_muddit_2025}. Most of these approaches nevertheless require large-scale pretraining or substantial retraining, whereas we target a lightweight adaptation of existing text-to-image flow models. Specialized vision--language models such as Flamingo, BLIP, and LLaVA achieve strong multimodal understanding through large-scale autoregressive pretraining~\citep{alayrac_flamingo,li_blip_2022,li_llava-next-interleave_2024}; our goal is bidirectional generation from a single lightweight-adapted model, not maximal understanding performance alone.\looseness=-1

\tinytit{Flow-based multimodal models.} OmniFlow extends rectified flow to any-to-any multimodal generation with an MMDiT architecture and modular modality streams~\citep{li_omniflow_2025}. OneFlow combines image Flow Matching with insertion-based Edit Flows for concurrent interleaved generation~\citep{nguyen_oneflow_2025}. FUDOKI formulates both modalities with discrete flow matching, while FlowTok maps text and images into a shared compact token space~\citep{wang_fudoki_2025,he_flowtok_2025}. JanusFlow harmonizes autoregressive text generation with rectified flow for images inside a unified transformer with decoupled visual encoders~\citep{ma_janusflow_2024}; Janus-Pro scales this approach further~\citep{chen_januspro_2025}. The closest prior work, Dual Diffusion, similarly couples continuous image flow with masked text diffusion in an MM-DiT backbone, but trains endpoint-conditioned tasks rather than the full $(t,\tau)$ joint space, uses mask diffusion that yields non-decodable text states, and requires large-scale end-to-end retraining~\citep{li_dual_2025}. In contrast, we keep images in the pretrained rectified-flow latent space and text in a deletion-based insertion process whose corrupted states remain valid strings---preserving frozen multi-encoder compatibility and allowing the backbone to recover its pretrained image prior exactly---while requiring only LoRA adapters and lightweight text heads trainable on two RTX A5000 GPUs.

\tinytit{Diffusion-language-model multimodal models.}
A complementary line starts from diffusion language models. LaViDa, Dimple, and LLaDA-V attach visual encoders to masked or discrete diffusion language backbones, mainly targeting multimodal understanding~\citep{li_lavida_2025,yu_dimple_2025,you_llada-v_2025}. More recent models such as MMaDA and LLaDA2.0-Uni extend this paradigm toward unified understanding, reasoning, image generation, and editing~\citep{yang_mmada_2025,ai_llada20-uni_2026}. Our direction is the reverse: we start from a strong pretrained text-to-image flow model and add a lightweight text-generation process.

%% file: sections/results.tex
\section{Experiments}\label{sec:results}


Our experiments address five questions: (i) which training ingredients are necessary for stable, resource-efficient adaptation (§\ref{sec:ablations}); (ii) how the adapted model compares to the closest prior work, Dual Diffusion (§\ref{sec:main_dd}); (iii) whether the same interface supports downstream understanding (§\ref{sec:downstream}); (iv) whether the recipe transfers across backbones (§\ref{sec:transfer})s; and (v) which new inference modes are enabled by the dual-timestep formulation (§\ref{sec:joint}).

\subsection{Experimental Setup}\label{sec:train_protocol}
We evaluate SD3-M~\citep{esser_scaling_2024} as the primary backbone and FLUX.1-dev~\citep{labs_flux1_2025} as a transfer setting. Models are fine-tuned on a fixed LAION-Aesthetic subset annotated with Moondream captions~\citep{schuhmann_laion-5b_2022}. SD3 experiments use $256\times256$ resolution on two RTX A5000 GPUs; FLUX.1-dev experiments use three GPUs with manual sharding. We evaluate text$\rightarrow$image with FID~\citep{heusel_gans_2018} and CMMD~\citep{jayasumana_rethinking_2024}, computed against the frozen base SD3 model on a 1.9k-prompt suite curated for this work that spans natural, compositional, and dense long-form prompts. We evaluate image$\rightarrow$text with CIDEr~\citep{vedantam_cider_2015} and BERTScore-F1~\citep{zhang_bertscore_2020} against held-out Moondream captions on a 5k validation split, joint generation with CLIP~\citep{radford_learning_2021} score, and downstream VQA using official benchmark protocols. Further dataset, prompt, and implementation details are provided in \Cref{app:training}.

\subsection{Adaptation Recipe Ablations}\label{sec:ablations}
We isolate the ingredients needed to stably uplift a pretrained text-to-image backbone under limited compute, ablating each in turn.

\begin{wrapfigure}{r}{0.4\textwidth}
  \vspace{-2mm}
  \centering
  \input{figures/gradient_norm/fig}\vspace{-5mm}
  \caption{\small Evolution of the adaptive text-loss weight $\lambda_\text{txt}$. Thin: raw ratio estimate; thick: EMA used in training.}
  \label{fig:ema_ratio}
  \vspace{-4mm}
\end{wrapfigure}

\tinytit{A1: Trainable parameters and text-conditioning pathways.}
SD3 uses three frozen text encoders with distinct tokenizers, whereas our discrete text process is defined in a single T5 token space. We therefore diffuse T5 tokens, decode the partially corrupted sequence, and re-tokenize it for the remaining encoders.

\emph{Text-conditioning.} Although diffusion is performed only in T5 token space, conditioning all frozen SD3 text encoders substantially improves text$\rightarrow$image retention while leaving image$\rightarrow$text quality nearly unchanged. We therefore retain the full conditioning stack.

\emph{Trainable parameters.} Full fine-tuning of SD3 is infeasible under our compute budget, so we restrict adaptation to lightweight modules and ask whether additional trainable capacity beyond LoRA is warranted. \Cref{tab:a1_ablation} compares LoRA-only updates to the shared backbone against an extended variant that adds a text refiner, a stack of single-stream DiT blocks placed before the text heads. The refiner does not improve image$\rightarrow$text quality at matched parameter budget, so we adopt LoRA-only adaptation, resulting in updating ${\sim}136$M parameters in total ($\sim$5\% of SD3-M).

\begin{wraptable}{r}{0.3\textwidth}
    \centering
    \vspace{-4mm}
    \caption{\small Image-prior preservation: direct flow matching ({FM}), same-noise teacher matching ({Teacher-SN}), and clean-text teacher matching ({Teacher-CT}).}
    \setlength{\tabcolsep}{4pt}
    \renewcommand{\arraystretch}{1.05}
    \small
    \begin{tabular}{l
        S[table-format=1.3]
        S[table-format=2.2]}
        \toprule
        \textbf{Variant} & \textbf{CMMD} $\downarrow$ & \textbf{FID} $\downarrow$ \\
        \midrule
        {FM} & 0.579 & 63.46 \\
        {Teacher-SN} & \textbf{0.084} & \textbf{30.38} \\
        {Teacher-CT} & 0.310 & 39.49 \\
        \bottomrule
    \end{tabular}
    \label{tab:tf_res}
    \vspace{-3mm}
\end{wraptable}

\tinytit{A2: Loss-balance stabilization.}
The image loss is an MSE regression objective while the text loss combines count NLL and token cross-entropy. In our setup, text gradients are roughly $25\times$ larger than image gradients on shared parameters, but this ratio is backbone-specific and difficult to estimate a priori, so a fixed weight $\lambda_{\text{txt}}$ would require a per-backbone hyperparameter sweep to avoid divergence or modality collapse. The EMA update from \Cref{sec:stabilizers} eliminates this hyperparameter by adapting $\lambda_{\text{txt}}$ from observed gradient ratios. As shown in \Cref{fig:ema_ratio}, the raw ratio is highly noisy, but its EMA settles on a stable value near $0.04$ for SD3 at negligible overhead ($<0.1\%$ runtime), yielding a value we found difficult to identify by inspection alone.

\tinytit{A3: Preserving the pretrained image prior.}\label{sec:TF}
Adding text generation can degrade the pretrained image prior. We compare three image objectives: direct flow matching against ground-truth latents (\textbf{FM}); same-noise teacher matching (\textbf{Teacher-SN}), where the frozen text-to-image teacher denoises the same noisy latent as the student conditioned on the corrupted text; and clean-text teacher matching (\textbf{Teacher-CT}), where the teacher conditions on the clean caption while the student conditions on the corrupted text. \Cref{tab:tf_res} shows that Teacher-SN gives the best fidelity--retention trade-off, reducing FID from $63.46$ to $30.38$ and CMMD from $0.579$ to $0.084$. We adopt Teacher-SN in all subsequent experiments.

\tinytit{A4: Which corruption schedule best supports both correspondence and deployment?}\label{sec:time_coupling}
We compare two corruption schedules: alternating-clean ($\pi_{\mathrm{ac}}$), which keeps one modality clean and matches the endpoint-conditioned setting used at deployment for pure captioning, and mixed-corruption ($\pi_{\mathrm{ind}}$), which samples $(t,\tau)\sim\Unif{[0,1]^2}$ and exposes the model to jointly corrupted image--text states. In isolation, $\pi_{\mathrm{ac}}$ outperforms $\pi_{\mathrm{ind}}$ on captioning (\Cref{fig:time_sampling}), which we attribute to a closer match between training and deployment noise distributions for that task. We hypothesize, however, that $\pi_{\mathrm{ind}}$ provides broader joint supervision that is useful as a pretraining stage prior to specialization. To test this, we run five independent fine-tuning experiments that switch from $\pi_{\mathrm{ind}}$ to $\pi_{\mathrm{ac}}$ at $\{75\text{k}, 100\text{k}, 125\text{k}, 150\text{k}, 175\text{k}\}$ steps; all five converge to a CIDEr score that surpasses the pure-$\pi_{\mathrm{ac}}$ run, reaching the pure-$\pi_{\mathrm{ac}}$ final value in roughly half the training steps. We therefore adopt mixed-corruption pretraining followed by alternating-clean refinement, using $100$k $\pi_{\mathrm{ind}}$ steps followed by $50$k $\pi_{\mathrm{ac}}$ steps (total wall-clock time under $24$\,h) in all subsequent SD3 experiments. For convergence analysis, runs were continued to $200$k total steps.

\begin{wraptable}{r}{0.49\textwidth}
    \vspace{-4mm}
    \centering
    \caption{\small Downstream captioning and VQA for unified models. Grouped by paradigm: AR/hybrids (top), diffusion (middle), and ours (bottom). Parentheses denote training resolution. COCO reports CIDEr; VQAv2, VizWiz, and OKVQA report accuracy.}
    \label{tab:vqa_short}

    \vspace{-1mm}
    
    \setlength{\tabcolsep}{2pt}
    \renewcommand{\arraystretch}{1.02}
    \footnotesize

    \newcommand{\head}[1]{\multicolumn{1}{c}{\rotatebox{90}{\textbf{#1}}}}

    \begin{tabular}{
        l
        c
        @{\hspace{8pt}} S[table-format=2.1]
        @{\hspace{8pt}} S[table-format=2.1]
        @{\hspace{8pt}} S[table-format=2.1]
        @{\hspace{8pt}} S[table-format=2.1]
    }
        \toprule
        \textbf{Model} & {\makecell[b]{\textbf{Trainable}\\\textbf{Params}}} & \head{COCO$\uparrow$} & \head{VQAv2$\uparrow$} & \head{VizWiz$\uparrow$} & \head{OKVQA$\uparrow$} \\
        \midrule
        CM3Leon~\citep{yu_scaling_2023}  & 7{B} & 61.6 & 47.6 & 37.6 & 23.8 \\
        Chameleon~\citep{team_chameleon_2025} & 7B & 18.0 & {--} & {--} & {--} \\
        LWM~\citep{liu_world_2025} & 7B & {--} & 55.8 & 11.6 & {--} \\
        Show-O (256)~\citep{xie_show-o_2025} & 1.3B & {--} & 64.7 & {--} & {--} \\
        Show-O (512)~\citep{xie_show-o_2025} & 1.3B & {--} & 69.4 & {--} & {--} \\
        Transfusion~\citep{zhou_transfusion_2024} & 7B & 29.0 & {--} & {--} & {--} \\
        \hline
        DualDiff (256)~\citep{li_dual_2025} & 2B & {--} & 59.5 & 19.4 & 28.5 \\
        DualDiff (512)~\citep{li_dual_2025} & 2B & 56.2 & 60.1 & 29.9 & 25.3 \\
        \hline
        FullFlow (Ours) & \textbf{130M} & 54.93 & 56.5 & 50.7 & 23.3\\
        \bottomrule
    \end{tabular}
    \vspace{-3mm}
\end{wraptable}

\subsection{Comparison to Dual Diffusion}\label{sec:main_dd}
Dual Diffusion is originally trained end-to-end at large scale, which would confound architectural and training-budget effects. To isolate the modeling contribution, we construct a matched baseline (\emph{DualDiff.-LoRA}) by re-implementing its architecture on the same SD3 backbone with LoRA at the same rank used by FullFlow ($r=32$), identical data, and the same optimizer; trainable parameter counts are nearly identical (136.8M vs.\ 136.1M). The two methods differ only in their cross-modal interface: DualDiff.-LoRA retains Dual Diffusion's masked text-diffusion pathway and partially trains text-conditioning components, whereas FullFlow operates on valid partially deleted strings via the tokenizer-space Edit Flow with all text encoders frozen, enabling same-noise teacher matching for image-prior retention. Since FullFlow is ${\sim}8\times$ faster per step, we report \emph{step-matched} and \emph{time-matched} configurations.\looseness=-1

\begin{table*}[t]
  \centering
  \begin{minipage}[t]{0.51\linewidth}
    \vspace{0pt}
    \centering
    \caption{\small Cross-modal generation on SD3 with matched data, LoRA rank, and training setup. FID and CMMD are computed against the frozen base SD3 model; CIDEr and BERT-F1 are computed against held-out Moondream captions on the validation split; full details in \Cref{tab:main_comparison_extend}.}
    \setlength{\tabcolsep}{3pt}
    \renewcommand{\arraystretch}{1.05}
    \footnotesize
    \begin{tabular}{l
                    S[table-format=2.2]
                    S[table-format=1.3]
                    S[table-format=3.1]
                    S[table-format=1.2]}
      \toprule
      & \multicolumn{2}{c}{\textbf{Text$\rightarrow$Image}} & \multicolumn{2}{c}{\textbf{Image$\rightarrow$Text}} \\
      \cmidrule(lr){2-3}\cmidrule(lr){4-5}
      \textbf{Method} &
      {\textbf{FID}$\downarrow$} &
      {\textbf{CMMD}$\downarrow$} &
      {\textbf{CIDEr}$\uparrow$} &
      {\textbf{BERT-F1}$\uparrow$} \\
      \midrule
      DualDiff.-LoRA & 62.65 & 0.870 & 2.0 & -0.27 \\
      Ours, step-match & \textbf{29.32} & \textbf{0.021} & 13.4 & 0.02 \\
      Ours, time-match & 31.57 & 0.121 & \textbf{99.4} & \textbf{0.44} \\
      \bottomrule
    \end{tabular}
    \label{tab:main_comparison}
  \end{minipage}
  \hfill
  \begin{minipage}[t]{0.47\linewidth}
    \vspace{-2.5mm}
    \centering
    \input{figures/time_sampling/fig}
    \vspace{-6mm}
    \captionof{figure}{\small CIDEr for alternating-clean ($\pi_{\text{ac}}$), mixed-corruption ($\pi_{\text{ind}}$), and five independent runs that switch from $\pi_{\text{ind}}$ to $\pi_{\text{ac}}$ at $75$k, $100$k, $125$k, $150$k, and $175$k steps. Mean over a 5k held-out split; error bars show 95\% CIs.}
    \label{fig:time_sampling}
  \end{minipage}
  \vspace{-4mm}
\end{table*}

\tinytit{Conditional generation.}
\Cref{tab:main_comparison} reports cross-modal generation against DualDiff.-LoRA. We report step- and time-matched FullFlow configurations to isolate complementary effects: per-update modeling quality and practical advantage at a fixed compute budget. At matched wall-clock, FullFlow reduces FID from $62.65$ to $31.57$ and CMMD from $0.870$ to $0.121$, and improves CIDEr from $2.0$ to $99.4$ and BERTScore-F1 from $-0.27$ to $0.44$. Step-matched gains move in the same direction with a smaller image$\rightarrow$text margin (CIDEr $13.4$), indicating that both architecture and per-step efficiency contribute to the improvement. Qualitative examples are provided in \cref{fig:image2text1,fig:image2text2,fig:t2i_sd3_app}.

\tinytit{Efficiency.}
The matched-LoRA setting also exposes the computational impact. DualDiff.-LoRA backpropagates through its masked text-diffusion pathway and partially trains text-conditioning components, requiring ${\sim}84$\,GB peak VRAM. By keeping the text encoders fully frozen and operating on valid partially deleted strings, FullFlow runs at ${\sim}38$\,GB and raises throughput from $0.25$ to $1.96$ steps/s, fitting the full training on two RTX A5000 GPUs.

\subsection{Downstream Understanding}\label{sec:downstream}
We test whether the same text-generation interface supports downstream multimodal understanding. \Cref{tab:vqa_short} positions FullFlow against two families: autoregressive and AR--diffusion hybrids (top), and fully diffusion-based models (middle). At $10$--$50\times$ fewer trainable parameters than every baseline and at half the resolution of the strongest competitor, FullFlow attains the highest VizWiz accuracy in the table ($50.7$); on MS-COCO captioning it is competitive at $256$ resolution ($54.9$ CIDEr) with DualDiff at $512$ resolution ($56.2$) and the $7$B CM3Leon ($61.6$), and remains close on VQAv2 ($56.5$, vs.\ $60.1$ for DualDiff(512) and $69.4$ for the larger Show-O). The weakest result is OKVQA ($23.3$, ${\sim}18\%$ below DualDiff(256)), which we attribute to that benchmark's reliance on external world knowledge that scales with pretraining-data scale: FullFlow is fine-tuned on a small LAION subset, whereas DualDiff is trained on multiple orders of magnitude more data. Overall, a fully diffusion-based interface matches autoregressive and hybrid unified models at a small fraction of their size, supporting a single diffusion process across both modalities; further comparison in \Cref{tab:lm_benchmarks}.

\subsection{Transfer to FLUX.1-dev.}\label{sec:transfer}
To verify the recipe is not specific to SD3, we apply it to FLUX.1-dev with minimal architecture-specific changes, sharding the 22\,GB model across three RTX A5000 GPUs at $0.49$ steps/s. Following the two-stage schedule of \Cref{sec:time_coupling}, $40$k $\pi_{\mathrm{ind}}$ steps reach CIDEr $57.5$ and a further $20$k $\pi_{\mathrm{ac}}$ steps raise it to $73.6$, while the text$\rightarrow$image prior holds at FID $24.7$ / CMMD $0.014$; total wall-clock training is $36$\,h. Qualitative samples are provided in \Cref{fig:image2text_flux,fig:t2i_flux_app}.

\begin{wrapfigure}{r}{0.45\textwidth}
    \vspace{-4mm}
    \centering
    \input{figures/sd3_joint/fig}
    \vspace{-1mm}
    \caption{\small Joint generation over $\tau=t^{2^p}$ trajectories. Color shows mean CLIP over 1k samples.}
    \label{fig:sd3_joint}
    \vspace{-2.5em}
\end{wrapfigure}

\subsection{Joint generation.}\label{sec:joint}
The dual-timestep formulation also enables image and text to be sampled jointly rather than conditioning on one modality. We sweep trajectories by fixing image time to a linear schedule and setting text time to $\tau=t^{2^p}$, with $p$ controlling which modality denoises earlier. \Cref{fig:sd3_joint} shows the diagonal trajectory ($p=0$) yields the highest CLIP score, indicating that balanced co-denoising gives the strongest image--text agreement; text-first trajectories remain competitive, while delaying text degrades alignment. Qualitative examples are in \cref{fig:joint1,fig:joint2,fig:joint3,fig:joint4}.


%% file: figures/gradient_norm/fig.tex
\pgfplotstableread[col sep=comma]{figures/gradient_norm/data.csv}\datatable

\begin{tikzpicture}
  \begin{axis}[
      width=0.80\linewidth,
      height=0.45\linewidth,
      xmin=0, xmax=200300,
      ymin=0, ymax=0.075,
      xtick={0,50000,100000,150000,200000,250000},
      xticklabels={0,50k,100k,150k,200k,250k},
      xlabel={Training Steps},
      ylabel={$\lambda_\text{txt}$},
      tick label style={font=\footnotesize},
      label style={font=\footnotesize},
      legend style={
        at={(1.03,0.60)},
        anchor=west,
        draw=none,
        fill=none,
        font=\small,
        cells={anchor=west}
      },
      axis on top,
      clip=true,
      unbounded coords=jump,
      line width=0.8pt,
      scaled x ticks=false,
      scaled y ticks=false,
      xticklabel style={
        font=\footnotesize,
      },
      yticklabel style={
        font=\footnotesize,
        /pgf/number format/fixed,
        /pgf/number format/precision=3
      },
    ]

    \addplot[
      blue!35,
      line width=0.5pt,
      opacity=0.7,
      restrict y to domain*=0:0.2,
      legend image post style={opacity=1.0, line width=1.5pt}
    ] table[x=step,y=g_ratio] {\datatable};
    \addlegendentry{\footnotesize raw}

    \addplot[
      blue!70!black,
      very thick,
      restrict y to domain*=0:0.2,
      legend image post style={opacity=1.0, line width=1.5pt}
    ] table[x=step,y=ema_alpha] {\datatable};
    \addlegendentry{\footnotesize EMA}

  \end{axis}
\end{tikzpicture}

%% file: figures/time_sampling/fig.tex
\newcommand{\addmeanciplot}[3]{

    \addplot[draw=none, forget plot, name path=upper#1] table [
        x=ckpt_num, 
        y expr=\thisrow{#1_mean} + \thisrow{#1_ci_95}, 
        col sep=comma
    ] {./figures/time_sampling/data.csv};

    \addplot[draw=none, forget plot, name path=lower#1] table [
        x=ckpt_num, 
        y expr=\thisrow{#1_mean} - \thisrow{#1_ci_95}, 
        col sep=comma
    ] {./figures/time_sampling/data.csv};

    \addplot[fill=#2, opacity=0.3, forget plot] fill between[of=upper#1 and lower#1];

    \addplot[#2, #3] table [
        x=ckpt_num, 
        y=#1_mean, 
        col sep=comma
    ] {./figures/time_sampling/data.csv};
}

\begin{tikzpicture}

\begin{axis}[
      width=0.7\linewidth,
      height=0.5\linewidth,
      xmin=25000, xmax=200000,
      ymin=0, ymax=1.15,
      xtick={50000,100000,150000,200000,250000},
      xticklabels={50k,100k,150k,200k,250k},
      ytick={0,0.5,1},
      yticklabels={0,50,100},
      xlabel={Training Steps},
      ylabel={CIDEr},
      tick label style={font=\footnotesize},
      label style={font=\footnotesize},
      legend style={
        at={(1.03,0.60)},
        anchor=west,
        draw=none,
        fill=none,
        font=\small,
        cells={anchor=west}
      },
      axis on top,
      clip=true,
      unbounded coords=jump,
      line width=0.8pt,
      scaled x ticks=false,
      scaled y ticks=false,
      xticklabel style={
        font=\footnotesize,
      },
      yticklabel style={
        font=\footnotesize,
        /pgf/number format/fixed,
        /pgf/number format/precision=3
      },
    ]

    \addmeanciplot{res_004_txt_train}{dit_yellow}{thick, mark=*, mark size=2pt, solid}
    \addlegendentry{\footnotesize $\pi_{\mathrm{ac}}$}
    \addmeanciplot{res_004_txt_val}{dit_yellow}{thick, mark=*, mark size=2pt, dashed, forget plot}

    \addmeanciplot{res_001_txt_train}{dit_blue}{thick, mark=square*, mark size=2pt, solid}
    \addlegendentry{\footnotesize $\pi_{\mathrm{ind}}$}
    \addmeanciplot{res_001_txt_val}{dit_blue}{thick, mark=square*, mark size=2pt, dashed, forget plot}

    \addmeanciplot{res_001_100k_txt_train}{dit_red}{thick, mark=*, mark size=2pt, solid}
    \addlegendentry{\footnotesize $\pi_{\mathrm{ind}}\rightarrow\pi_{\mathrm{ac}}$} 
    \addmeanciplot{res_001_100k_txt_val}{dit_red}{thick, mark=*, mark size=2pt, dashed, forget plot}
    
    \addmeanciplot{res_001_125k_txt_train}{dit_red}{thick, mark=*, mark size=2pt, solid, forget plot}
    \addmeanciplot{res_001_125k_txt_val}{dit_red}{thick, mark=*, mark size=2pt, dashed, forget plot}
    
    
    \addmeanciplot{res_001_75k_txt_train}{dit_red}{thick, mark=*, mark size=2pt, solid, forget plot}
    \addmeanciplot{res_001_75k_txt_val}{dit_red}{thick, mark=*, mark size=2pt, dashed, forget plot}
    
    \addmeanciplot{res_001_150k_txt_train}{dit_red}{thick, mark=*, mark size=2pt, solid, forget plot}
    \addmeanciplot{res_001_150k_txt_val}{dit_red}{thick, mark=*, mark size=2pt, dashed, forget plot}
    
    \addmeanciplot{res_001_175k_txt_train}{dit_red}{thick, mark=*, mark size=2pt, solid, forget plot}
    \addmeanciplot{res_001_175k_txt_val}{dit_red}{thick, mark=*, mark size=2pt, dashed, forget plot}
    
    \addmeanciplot{res_004_txt_train}{dit_yellow}{only marks, mark=*, mark size=2pt, solid, forget plot}
    \addmeanciplot{res_004_txt_val}{dit_yellow}{only marks, mark=*, mark size=2pt, dashed, forget plot}

    \addmeanciplot{res_001_txt_train}{dit_blue}{only marks, mark=square*, mark size=2pt, solid, forget plot}
    \addmeanciplot{res_001_txt_val}{dit_blue}{only marks, mark=square*, mark size=2pt, dashed, forget plot}

    \addlegendimage{black, thick, solid, mark=none}
    \addlegendentry{\footnotesize Train} 
    
    \addlegendimage{black, thick, dashed, mark=none}
    \addlegendentry{\footnotesize Val}   
\end{axis}
\end{tikzpicture}

%% file: figures/sd3_joint/fig.tex
\pgfplotstableread[col sep=comma]{figures/sd3_joint/data.csv}\datatable
\pgfplotstablegetrowsof{\datatable}
\pgfmathtruncatemacro{\lastrow}{\pgfplotsretval-1}

\begin{tikzpicture}
  \begin{axis}[
      width=0.6\linewidth,
      height=0.6\linewidth,
      xmin=0, xmax=1,
      ymin=0, ymax=1,
      domain=0:1,
      samples=50,
      axis equal image,
      axis on top,
      clip=true,
      xlabel={Image time $t_\text{img}$},
      ylabel={Text time $\tau_\text{txt}$},      
      xtick={0,0.5,1},
      ytick={0,0.5,1},
      tick label style={font=\small},
      label style={font=\small},
      line width=0.8pt,
      colormap/viridis,
      colorbar,
      colorbar style={
        ylabel={Mean CLIP Score},
        yticklabel style={
          /pgf/number format/fixed,
          /pgf/number format/precision=3
        },
        tick label style={font=\small},
        label style={font=\small},
      },
    ]

    \pgfplotsinvokeforeach{0,...,\lastrow}{
      \pgfplotstablegetelem{#1}{log_pow}\of{\datatable}
      \edef\logpow{\pgfplotsretval}

      \pgfplotstablegetelem{#1}{clip}\of{\datatable}
      \edef\clipscore{\pgfplotsretval}
      
      \edef\temp{
        \noexpand\addplot+[
          mesh,
          scatter,
          shader=flat,        
          mark=*,
          mark options={draw=none},
          mark repeat=1,
          mark size=0.6pt,
          very thick,
          opacity=1.0,
          point meta=\clipscore,
          forget plot,
        ]
        {pow(x,pow(2,\logpow))};
      }
      \temp
    }
  \end{axis}
\end{tikzpicture}

%% file: sections/limitations.tex
\section{Limitations and Conclusion}\label{sec:limitations}


\tinytit{Limitations.}
We focused on a compute-feasible proof of concept rather than peak end-to-end performance. For this reason the main results in this work use data-, parameter-, and compute-matched comparisons against prior work (\Cref{sec:main_dd}). We did not explore scaling of the method across: training at $512{\times}512$ and above, larger paired datasets, instruction tuning for VQA and dialogue (closing the gap to specialized AR VLMs), and porting the same uplift to other rectified-flow backbones. All these directions remain interesting future developments which are conceptually supported by our method, but would require more compute and time. 

\tinytit{Conclusion.}
FullFlow upgrades pretrained text-to-image models into bidirectional image--text generators by operating in valid tokenizer space with the text-conditioning stack frozen, supporting text-to-image, captioning, and joint generation from a single adapted backbone. The fact that bidirectional capability emerges from such a thin adaptation suggests that pretrained text-to-image priors encode richer visual--semantic structure than commonly assumed, and that similar lightweight uplifts may extend to other larger backbones, where the compute savings would be even more impactful.

\tinytit{Broader impact.}
While broadening access to interactive image--text applications, our method may also amplify risks involving synthetic generation, biased captions, and dataset contamination.

%% file: appendix/training.tex
\section{Flow Matching: Similarity Between Continuous and Discrete}

Despite their apparent differences, continuous rectified flow and discrete Edit Flows instantiate the same fundamental principle: learn a model to predict local transport updates along a path from a base distribution to the data distribution. In continuous flow matching, we learn a velocity field that describes how samples move continuously through latent space. In discrete flow matching applied to text, we instead learn to predict insertion discrete token placements that progressively transform an empty sequence into the target sequence. Both formulations share the same path-based training objective: at each point along the trajectory, the model is trained to make the locally-optimal next move (either a velocity in continuous space or a jump in discrete space) to reach the data. This unified perspective on flow matching enables us to train image and text processes with a shared modeling philosophy while respecting the geometric differences between continuous image latents and discrete token sequences. \Cref{fig:fm_concept} illustrates this conceptual connection.

\begin{figure*}[ht]
    \centering
    \hfill
    \begin{subfigure}[t]{0.48\linewidth}
        \centering
        \frame{\includegraphics[width=\linewidth]{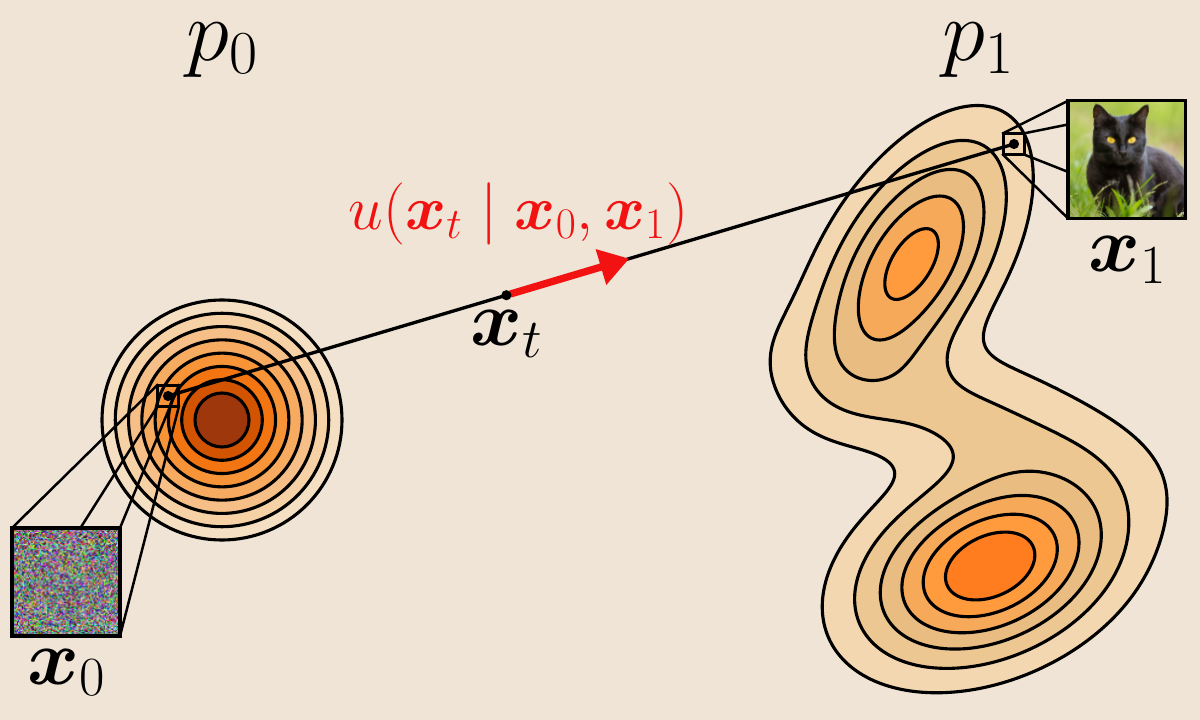}}
        \caption{Continuous flow matching}
        \label{fig:fm_continuous}
    \end{subfigure}%
    \hfill
    \begin{subfigure}[t]{0.48\linewidth}
        \centering
        \frame{\includegraphics[width=\linewidth]{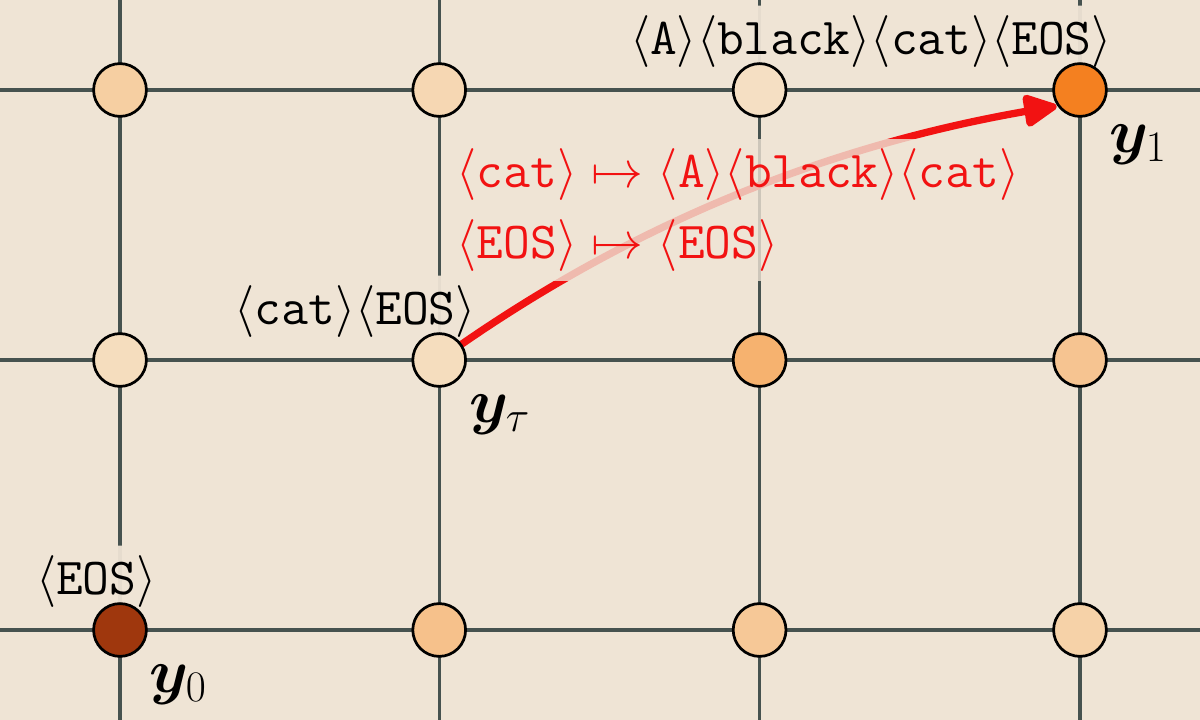}}
        \caption{Discrete flow via jumps}
        \label{fig:fm_discrete}
    \end{subfigure}%
    \hfill\mbox{}%
    \caption{\small Continuous rectified flow (left) and discrete Edit Flows (right) instantiate the same path-based principle: learn local transport from a base distribution to data. In continuous space the supervised target is a velocity; in discrete space it is an insertion jump. The red arrow/token marks the quantity the model is trained to predict.}
    \label{fig:fm_concept}
\end{figure*}

\section{Training Details}\label{app:training}

\subsection{Dataset Preparation and Pre-Encoding}

We train on a subset of the LAION-COCO-Aesthetic collection~\citep{schuhmann_laion-5b_2022}, a curated split of LAION-2B-EN filtered for aesthetic quality scores and paired with semantically aligned captions. Before training we apply a mild aspect-ratio filter (retaining images with width/height ratio below $1.4$ in both orientations) to avoid extreme crops, yielding a training set of ${\sim}280{,}000$ image--caption pairs.

Images are pre-encoded offline with the SD3 VAE to avoid redundant encoder passes during training. Each image is resized to $256 \times 256$ with bicubic interpolation and center-cropped, normalized to $[-1,1]$, then encoded to a latent $\vx_1 \in \mathbb{R}^{16 \times 32 \times 32}$ via
\begin{equation}
  \vx_1 = \bigl(\text{VAE-Enc}(\text{img}) - s_\text{shift}\bigr) \cdot s_\text{scale},
\end{equation}
where $s_\text{shift}$ and $s_\text{scale}$ are the SD3 VAE shift and scaling factors. The encoded latents and raw caption strings are saved to disk; the VAE is not loaded during training. We visualize the caption-length distribution of the dataset in \cref{fig:token_lengths}.

\input{figures/data/fig}

\subsection{Text-to-Image Evaluation Prompt Suite}\label{app:t2i_prompts}
For text$\rightarrow$image evaluation, we use a fixed prompt suite rather than a held-out LAION split. The goal is to better approximate open-ended generation, where prompts vary in length, compositional complexity, and descriptive density. We therefore combine three complementary sources.

First, we sample natural image captions from MS-COCO, stratified by a simple regex word-token count: short captions contain 4--8 tokens, medium captions contain 9--17 tokens, and long captions contain 18 or more tokens; captions with fewer than 4 tokens are discarded. We sample 300 captions from each length bucket. Second, we sample compositional prompts from T2I-CompBench~\citep{huang_t2i-compbench_2023}, using 100 prompts from each of eight categories: color, shape, texture, spatial, 3D spatial, non-spatial, numeracy, and complex. Third, we include 200 dense long-form prompts from DrawBench-upsampled, stratified across its available categories when possible. This yields a nominal 1.9k-prompt suite before cross-source deduplication.

All prompts are whitespace-normalized, lowercased for duplicate detection, and deduplicated across sources. We use a fixed random seed for sampling and keep the same prompt suite for all methods.

\subsection{Resolution: Why \texorpdfstring{$256\times256$}{256x256}?}\label{app:resolution}

SD3 includes $256 \times 256$ images in its multi-resolution pretraining curriculum~\citep{esser_scaling_2024}. We therefore evaluate at $256 \times 256$, a resolution already covered by the pretrained backbone, while keeping the image-token sequence length small enough for our two-GPU setting. At this resolution, the SD3 VAE encodes images into $32 \times 32$ spatial latents, which SD3's patch-embed layer ($2 \times 2$ patches) reduces to $n = 256$ image tokens. The joint sequence length---image tokens plus three text encoder outputs of length $77$ each, is therefore manageable within the $2 \times 24\,\text{GB}$ GPU budget.

Compared to higher resolutions, a $512 \times 512$ input quadruples the image token count to $1{,}024$, increasing the full-sequence self-attention cost by roughly $16\times$, which is prohibitive for our hardware setup. Compared to $128 \times 128$, the VAE compression is less severe at $256 \times 256$ and perceptual quality is substantially better, making image-generation evaluation more meaningful. The $256 \times 256$ regime therefore gives the best compute--quality trade-off available on two RTX A5000 GPUs.

\subsection{Optimizer and Learning-Rate Schedule}
We use AdamW~\citep{loshchilov_decoupled_2019} with $\beta_1=0.9$, $\beta_2=0.999$, $\epsilon=10^{-8}$, and \emph{two parameter groups} with separate learning rates and regularization:

\begin{itemize}[leftmargin=*, topsep=2pt, itemsep=2pt]
  \item \textbf{LoRA adapters and timestep processors} (text timestep $\tau$ extra linear layer and scalar.): $\text{lr}=10^{-4}$, weight decay $=0$.
  \item \textbf{Final AdaLN text modulation and text heads} and other newly introduced weights not covered by LoRA: $\text{lr}=5\times10^{-4}$, weight decay $=10^{-2}$.
\end{itemize}

A constant schedule with a $5{,}000$-step linear warmup is used; no learning-rate decay is applied. Gradients are clipped to a global $\ell_2$ norm of $2.0$ before each optimizer step.

\subsection{LoRA Configuration}

LoRA adapters~\citep{hu_lora_2021} are attached to all linear projection layers inside the \texttt{transformer\_blocks} of the MM-DiT backbone (query, key, value, output projections of both self-attention streams). New heads (text-conditioning layers and text output heads) are excluded from LoRA and trained directly. The LoRA configuration is:

\begin{center}
\begin{tabular}{ll}
\toprule
\textbf{Hyperparameter} & \textbf{Value} \\
\midrule
Rank $r$ & 32 \\
Scaling $\alpha$ & 32 \\
Dropout & 0.05 \\
Target modules & All \texttt{nn.Linear} in \texttt{transformer\_blocks} \\
Bias & none \\
\bottomrule
\end{tabular}
\end{center}

With this configuration, the total trainable parameter count is ${\sim}136$M for SD3-M (${\approx}5\%$ of the full model) and ${\sim}298$M for FLUX.1 (${\approx}2.5\%$).

\subsection{Zero-Inflated Insertion Counts}\label{app:pi_head}
Both backbones we adapt operate at a fixed sequence length ($77$ for SD3, $128$ for FLUX) inherited from their pretrained text encoders. Under deletion-to-empty corruption with retention probability $\tau$, the vast majority of retained positions have no deleted left-neighbours ($|A_i|=0$), especially at high $\tau$ where most tokens are kept. The Poisson target on $\lambda_\theta^i$ is therefore strongly skewed toward zero, biasing the rate head and starving the model of gradient signal on the rare nonzero counts that actually drive insertion behaviour.

Following OneFlow~\citep{nguyen_oneflow_2025}, we factor the insertion-count distribution as a zero-inflated Poisson by adding a binary gating head $\pi_\theta^i(\tilde{\vy}_\tau, \tau) \in [0,1]$ that predicts $\Pr[]{|A_i|>0}$, while $\lambda_\theta^i$ models the Poisson rate conditional on $|A_i|>0$. The text loss \cref{eq:txt_loss} then becomes
\begin{align*}
    \mathcal{L}_{\mathrm{txt}}(\theta) =
    \mathrm{BCE}\bigl(\pi_\theta^i,\, \Ind{|A_i|>0}\bigr)
    \;+\; \Ind{|A_i|>0} \cdot \mathcal{L}_{\mathrm{Poisson}}(\lambda_\theta^i; |A_i|)
    \;+\; \mathcal{L}_{\mathrm{tok}}(w_\theta^i; A_i),
\end{align*}
summed over retained positions $i$, with the Poisson and token-identity losses evaluated only on positions with at least one insertion. At inference, position $i$ is kept empty with probability $1-\pi_\theta^i$ and otherwise samples a count from $\lambda_\theta^i$ followed by token identities from $w_\theta^i$. This decouples the abundant zero-class supervision from the rare nonzero counts and prevents the rate head from collapsing toward zero on long padded sequences.

\subsection{Training Hyperparameters}\label{app:hyperparams}

\Cref{tab:hyperparams} lists the training hyperparameters for the main FullFlow-SD3 run and the matched DualDiff.-LoRA baseline. The two runs use the same optimizer, batch size, learning-rate schedule, LoRA configuration, and training budget. They differ only in the corruption schedule and image-supervision target: FullFlow uses a mixed-to-alternating schedule, training with $\pi_{\mathrm{ind}}$ for the first 100k steps and then switching to $\pi_{\mathrm{ac}}$ via \texttt{ac\_sched}, whereas DualDiff.-LoRA uses $\pi_{\mathrm{ac}}$ throughout. FullFlow also uses same-noise teacher matching (\texttt{teacher\_matching=noisy}), while DualDiff.-LoRA uses the standard rectified-flow image objective.

\begin{table}[ht]
    \centering
    \small
    \setlength{\tabcolsep}{6pt}
    \renewcommand{\arraystretch}{1.1}
    \caption{\small Training hyperparameters for the SD3-M experiments. Values apply to both FullFlow and the matched DualDiff.-LoRA baseline unless explicitly noted; the two runs differ only in the corruption schedule and the image-supervision target.}
    \label{tab:hyperparams}
    \begin{tabular}{lll}
        \toprule
        \textbf{Hyperparameter} & \textbf{Value} & \textbf{Note} \\
        \midrule
        \multicolumn{3}{l}{\textit{Hardware \& precision}} \\
        GPUs & 2 $\times$ RTX A5000 (24\,GB) & DDP, NCCL backend \\
        Mixed precision & BFloat16 & Optimizer states in FP32 \\
        Gradient checkpointing & No &  \\
        \midrule
        \multicolumn{3}{l}{\textit{Data \& batching}} \\
        Training resolution & $256 \times 256$ & Pre-encoded VAE latents \\
        Training set size & ${\sim}280{,}000$ image--caption pairs & LAION-COCO-Aesthetic \\
        Global batch size & 10 & 5 per GPU \\
        Micro-batch size & 5 & 1 accumulation step \\
        Max sequence length & 77 & T5 tokenizer tokens \\
        \midrule
        \multicolumn{3}{l}{\textit{Optimisation}} \\
        Optimizer & AdamW & $\beta_1{=}0.9$, $\beta_2{=}0.999$, $\varepsilon{=}10^{-8}$ \\
        LoRA / head learning rate & $1\times10^{-4}$ & No weight decay \\
        Base parameter learning rate & $5\times10^{-4}$ & Weight decay $10^{-2}$ \\
        LR schedule & Constant + warmup & 5{,}000 warmup steps \\
        Gradient clip ($\ell_2$) & 2.0 & \\
        Training steps & 200{,}000 & \\
        Random seed & 42 & \\
        \midrule
        \multicolumn{3}{l}{\textit{Adaptive gradient balancing}} \\
        Estimation frequency & Every 100 steps & On a micro-batch of size 3 \\
        EMA decay $\beta$ & 0.99 & \\
        Gradient ratio scale & 5.0 & Applied before EMA \\
        \midrule
        \multicolumn{3}{l}{\textit{Loss weights}} \\
        Image weight $w_\text{img}$ & 1.0 & \\
        Text weight $w_\text{txt}$ & 0.05 & Scaled further by $\lambda_\text{txt}$ \\
        Teacher matching & noisy & (\texttt{Teacher-SN}) \\
        \midrule
        \multicolumn{3}{l}{\textit{EMA of model weights}} \\
        EMA decay & 0.9995 & Applied to trainable params \\
        EMA dtype & FP32 & \\
        \bottomrule
    \end{tabular}
\end{table}

\subsection{DualDiff.-LoRA Baseline Reproduction}\label{app:dd_baseline}
We re-implement the DualDiff.-LoRA baseline by directly using the official Dual Diffusion training pipeline\footnote{\url{https://github.com/zijieli-Jlee/Dual-Diffusion}}, with two modifications to match our setup:
\begin{itemize}[leftmargin=*, topsep=2pt, itemsep=2pt]
  \item \emph{Data loader.} Replaced with our pre-encoded LAION-Aesthetic loader (\Cref{app:training}), so that the VAE is not held in VRAM and the dataset, captions, and resolution exactly match FullFlow.
  \item \emph{Trainable parameters.} Instead of full backbone fine-tuning, we attach the same LoRA configuration as FullFlow (rank $r=32$ on all linear layers in \texttt{transformer\_blocks}; \Cref{app:training}) and additionally train the newly initialized text-conditioning modules and text heads.
\end{itemize}
All other Dual-Diffusion components, including the mask-noise schedule, text-head architecture, sampler, and DD-specific hyperparameters, are kept exactly as released. The optimizer, learning-rate schedule, batch size, and total number of training steps match the FullFlow training run (\Cref{tab:hyperparams}); the trainable parameter counts are also nearly matched, at ${\sim}136.8$M for DualDiff.-LoRA and ${\sim}136.1$M for FullFlow.

\subsection{VQA Finetuning}\label{app:vqa_finetuning}

VQA adaptation follows the partial-text generation interface described in \Cref{sec:vqa}: the image and the question tokens are held fixed, and the insertion process is restricted to the answer span. The model learns to denoise only the answer tokens, conditioned on the image and the full question.

Concretely, finetuning resumes from a pretrained FullFlow checkpoint and replaces the joint image--text loss with a text-only VQA loss. No image velocity supervision is applied during this stage ($w_\text{img}=0$). The trainable parameters and LoRA configuration are identical to the pretraining stage.

Training runs for an additional $100{,}000$ steps on pre-encoded VQAv2 question--answer--image triples at $256\times256$ resolution.

\subsection{VQA Inference}\label{app:vqa_inference}
At inference time, the image, fixed at $t=1$, and the tokenized question are provided as conditioning. Sampling is restricted to the answer span by allowing insertions only after the question tokens; the question itself remains fixed and is never modified.

We use the \texttt{FlowMatchEulerDiscreteScheduler} with $\text{shift}=1.0$ for the text process and run $8$ denoising steps, which we found sufficient for short answer spans. Decoding is greedy (top-$k=1$, temperature $0.7$); classifier-free guidance is not applied ($\text{cfg}=1.0$). The maximum answer sequence length is capped at $77$ tokens. All results are obtained from the EMA checkpoint.

\subsection{Inference Hyperparameters}

All reported results use the EMA checkpoint. Sampling uses the \texttt{FlowMatchEulerDiscreteScheduler} with 28 denoising steps for both the image and text processes. For image-to-text captioning, the text scheduler uses shift $=1.0$ with temperature $0.7$ and top-$k=1$ (greedy) decoding. For text-to-image generation, the image scheduler uses the default SD3 shift. Classifier-free guidance is not applied in the main comparisons ($\text{cfg}=1.0$); qualitative examples in \Cref{app:cross} use image-side CFG where noted.

\section{Evaluation Protocols}\label{app:eval}

Unless noted, all metrics are computed on the EMA checkpoint with the inference settings of \Cref{app:vqa_inference} and the prompt suite of \Cref{app:t2i_prompts}. Image$\to$text metrics are computed on a fixed 5k-pair held-out split of LAION-Aesthetic with single-reference Moondream captions; text$\to$image metrics use the 1.9k-prompt suite.

\tinytit{Reference set for FID and CMMD.}
The reference set is the \emph{base text-to-image model itself} sampled under identical conditions (same prompts, seed, scheduler, denoising steps, and CFG scale). This isolates the effect of multimodal uplift from generic backbone-quality differences: any deviation directly reflects retention or degradation of the pretrained image prior.

\tinytit{FID.}
Computed with the \texttt{clean-fid} package~\citep{parmar2021cleanfid} using the standard Inception-V3 features, via
\verb|fid.compute_fid(base_fid_path, gen_dir, batch_size, num_workers=4)|.

\tinytit{CMMD.}
We compute CLIP Maximum Mean Discrepancy (CMMD) following \citet{jayasumana_rethinking_2024}, using the public PyTorch implementation \footnote{\href{https://github.com/sayakpaul/cmmd-pytorch}{https://github.com/sayakpaul/cmmd-pytorch}}, which extracts image embeddings with \texttt{openai/clip-vit-large-patch14-336}.

\tinytit{CIDEr.}
For image-to-text evaluation on our held-out LAION-Aesthetic split, we compute CIDEr with the standard \texttt{pycocoevalcap} implementation~\citep{vedantam_cider_2015}, using the Moondream caption as the single reference for each image. For MS-COCO captioning, we use the standard validation annotations and score each generated caption against all five human reference captions for the corresponding image.

\tinytit{BERTScore.}
Computed via the \texttt{bert\_score} package~\citep{zhang_bertscore_2020} with \verb|lang="en"| (which selects \texttt{roberta-large} as the underlying scorer) and \verb|rescale_with_baseline=True|; we report the F1 component, which can be negative under baseline rescaling.

\tinytit{CLIP score.}
Cosine similarity between CLIP image and text embeddings using \texttt{openai/clip-vit-base-patch32}~\citep{radford_learning_2021}, averaged over generated image-caption pairs.

\tinytit{VQA benchmarks.}
For VQAv2~\citep{goyal2017making}, VizWiz~\citep{gurari2018vizwiz}, and OKVQA~\citep{marino2019ok} we follow each benchmark's official evaluation protocol on its standard validation split.
\begin{align*}
    \mathrm{acc}(a) = \min\!\left(\tfrac{\#\{\text{annotators agreeing with } a\}}{3},\, 1\right).
\end{align*}

\section{Compute Budget}\label{app:compute}
All experiments run on RTX A5000 (24\,GB) GPUs in BF16 mixed precision; per-run training settings match \Cref{tab:hyperparams}. \Cref{tab:compute} lists the compute used for the experiments reported in this paper.

\begin{table}[ht]
    \centering
    \small
    \setlength{\tabcolsep}{6pt}
    \renewcommand{\arraystretch}{1.1}
    \caption{\small Compute budget for the experiments reported in this paper, broken down by run group. Preliminary or failed configurations not included in the final results are excluded.}
    \label{tab:compute}
    \begin{tabular}{lccc}
        \toprule
        \textbf{Run group} & \textbf{GPUs / run} & \textbf{Wall-clock / run} & \textbf{Total GPU-hours} \\
        \midrule
        SD3-M ablations + final (18 runs) & 2 & 28\,h & 864 \\
        FLUX.1-dev (1 run)                & 3 & 36\,h & 108 \\
        DualDiff.-LoRA (1 run)            & 4 & 24\,h & 96 \\
        \midrule
        \textbf{Total reported}           &   &       & \textbf{1{,}212} \\
        \bottomrule
    \end{tabular}
\end{table}

The 18 SD3-M runs cover the ablations in \Cref{sec:ablations} (trainable parameters, conditioning pathways, gradient balancing, teacher matching, corruption schedule) and the final step- and time-matched checkpoints. Preliminary and failed configurations not reported in the paper are excluded from this total.

%% file: figures/data/fig.tex
\begin{figure}[ht]
    \centering
    \begin{tikzpicture}
        \begin{axis}[
            width=0.55\linewidth,
            height=0.32\linewidth,
            ybar,
            bar width=2.2pt,
            xlabel={T5 token length},
            ylabel={Number of captions},
            xmin=20,
            xmax=82,
            ymin=0,
            enlarge x limits=0.02,
            axis line style={black},
            tick style={black},
            grid=major,
            grid style={line width=.1pt, draw=gray!25},
            major grid style={line width=.2pt, draw=gray!35},
            tick label style={font=\small},
            label style={font=\small},
        ]
        \addplot[
            fill=dit_yellow,
            draw=black,
            line width=0.2pt,
        ]
        table[
            x=length,
            y=count,
            col sep=tab,
        ]{./figures/data/data.tsv};
        \end{axis}
    \end{tikzpicture}
    \caption{\small Distribution of caption lengths in the training set, measured in T5 tokens after the LAION-COCO-Aesthetic filtering described in \Cref{app:training}. For readability the histogram drops the lowest and highest $1\%$ of caption lengths and uses 60 bins; this distribution motivates our $77$-token sequence cap.}
    \label{fig:token_lengths}
\end{figure}

%% file: appendix/theory.tex
\section{Motivating Analysis: Gradient Scale Mismatch}\label{app:gradnorm}
Our training objective combines an image regression loss and a text classification loss that share a transformer backbone. Because these losses induce gradients of different scale, naive weighting can cause one modality to dominate the shared-parameter updates. This appendix provides simple sanity-check calculations for the squared gradient norm with respect to the loss inputs, which motivates explicit loss reweighting such as the adaptive gradient balancing rule in \cref{sec:stabilizers}.

\begin{thm}\label{theorem:mse}
    Let $\vec{X}, \vec{Y} \sim \mathcal{N}(\vec{0} ,\mat{I}_n)$ with $Cov(\vec{X}, \vec{Y}) = \sigma \mat{I}_n$, then the expected gradient norm is the following:
    \begin{align}
        \E[\vec{X}, \vec{Y}]{ \norm{ \nabla_{\vec{X}} \texttt{MSE}(\vec{X}, \vec{Y}) }_2^2} = \frac{8(1 - \sigma)}{n}
    \end{align}
\end{thm}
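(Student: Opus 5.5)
We fix the convention $\texttt{MSE}(\vec{X},\vec{Y}) = \frac{1}{n}\norm{\vec{X}-\vec{Y}}_2^2 = \frac{1}{n}\sum_{i=1}^n (X_i - Y_i)^2$, the per-coordinate average used by the image loss in \Cref{eq:img_loss}. The plan is a direct computation. First compute the gradient in closed form, then reduce the expectation to second moments of the difference $\vec{X}-\vec{Y}$. Those moments are fixed by the stated covariance structure.

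\textbf{Step 1 (gradient).} Differentiating coordinatewise gives $\partial_{X_i}\texttt{MSE} = \frac{2}{n}(X_i - Y_i)$. Hence
\begin{align*}
    \nabla_{\vec{X}}\texttt{MSE}(\vec{X},\vec{Y}) = \frac{2}{n}(\vec{X}-\vec{Y}),
    \qquad
    \norm{\nabla_{\vec{X}}\texttt{MSE}}_2^2 = \frac{4}{n^2}\sum_{i=1}^n (X_i-Y_i)^2 .
\end{align*}

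\textbf{Step 2 (expectation).} By linearity, $\E[]{\norm{\nabla_{\vec{X}}\texttt{MSE}}_2^2} = \frac{4}{n^2}\sum_{i} \E[]{(X_i - Y_i)^2}$. For each coordinate we expand $\E[]{(X_i-Y_i)^2} = \E[]{X_i^2} + \E[]{Y_i^2} - 2\,\E[]{X_iY_i}$. Both marginals are standard normal with zero mean, so $\E[]{X_i^2}=\E[]{Y_i^2}=1$. The assumption $\mathrm{Cov}(\vec{X},\vec{Y})=\sigma\mI_n$ gives $\E[]{X_iY_i}=\sigma$. Each term therefore equals $2(1-\sigma)$, and summing over $n$ coordinates yields
\begin{align*}
    \frac{4}{n^2}\cdot n \cdot 2(1-\sigma) = \frac{8(1-\sigma)}{n}.
\end{align*}
Joint Gaussianity is not actually needed, only the second moments. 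This also makes clear that $|\sigma|\le 1$ is implicitly required for the joint covariance to be valid.

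\textbf{Main obstacle.} The computation itself is routine. The only real point of care is the normalization convention of $\texttt{MSE}$. The stated constant $8(1-\sigma)/n$ forces the $\frac1n$-averaged form, not the plain sum (which would give $8n(1-\sigma)$) and not the $\frac{1}{2n}$ variant. I would state this convention explicitly at the outset and check that it matches the image objective's implementation. I would also note that the resulting $1/n$ scaling in the latent dimension is what makes image-side gradients small compared with the text cross-entropy gradients, which motivates \Cref{eq:grad_ratio}.
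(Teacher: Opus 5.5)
Your proposal is correct and follows the same direct computation as the paper: you take the gradient $\frac{2}{n}(\vec{X}-\vec{Y})$, expand the squared norm, and substitute the second moments $\mathbb{E}[X_i^2]=\mathbb{E}[Y_i^2]=1$ and $\mathbb{E}[X_iY_i]=\sigma$. Your coordinatewise bookkeeping is actually cleaner than the paper's displayed chain, which writes the cross term as $2n$ instead of $2n\sigma$ before arriving at $\frac{8(1-\sigma)}{n}$.
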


\begin{proof}[Proof of \Cref{theorem:mse}]
    Let $\vec{X}, \vec{Y}$ be defined as in \Cref{theorem:mse}.
    \begin{align*}
         \E[\vec{X}, \vec{Y}]{ \left\| \nabla_{\vec{X}} \texttt{MSE}(\vec{X}, \vec{Y}) \right\|_2^2 }
        &=  \E[\vec{X}, \vec{Y}]{ \left\| \frac{2}{n}(\vec{X} - \vec{Y}) \right\|_2^2 }\\
        &= \frac{4}{n^2}  \E[\vec{X}, \vec{Y}]{ \transpose{(\vec{X} - \vec{Y})} (\vec{X} - \vec{Y})}\\
        &= \frac{4}{n^2} \left( \E[\vec{X}, \vec{Y}]{\transpose{\vec{X}}\vec{X}} + \E[\vec{X}, \vec{Y}]{\transpose{\vec{Y}}\vec{Y}} - 2 \E[\vec{X}, \vec{Y}]{ \transpose{\vec{X}}\vec{Y}}\right) \\
        &= \frac{4}{n^2} \left(n + n -2n \right) \\
        &= \frac{8(1 - \sigma)}{n}
    \end{align*}
\end{proof}

\begin{thm}\label{theorem:CE}
    Let $\vec{Z} \in \R^n$ be a random logits vector, $\vec{P} = \texttt{softmax}(\vec{Z}) \in \triangle^{n-1}$, and let $\vec{Q} = e_Y$ be a one-hot encoding of the target label $Y \in [n]$. Further, let $\vec{P}$ and $\vec{Q}$ agree in expectation by $\E[]{\transpose{\vec{P}}\vec{Q}} = \sigma$. Then
    \begin{align}
        \frac{n}{n-1}(1-\sigma)^2
        \;\le\;
        \E[]{\norm{\nabla_{\vec{Z}} \texttt{CE}(\vec{P}, \vec{Q})}^2_2}
        \;\le\;
        2(1-\sigma).
    \end{align}
\end{thm}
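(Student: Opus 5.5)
We start from the standard softmax cross-entropy gradient identity $\nabla_{\vec{Z}} \texttt{CE}(\vec{P},\vec{Q}) = \vec{P} - \vec{Q}$. Since $\vec{Q} = e_Y$ is one-hot,
\begin{align*}
    \norm{\vec{P}-\vec{Q}}_2^2 = \norm{\vec{P}}_2^2 - 2P_Y + 1 = (1-P_Y)^2 + \sum_{j \ne Y} P_j^2 .
\end{align*}
Here $P_Y = \transpose{\vec{P}}\vec{Q}$, so both bounds reduce to controlling $\sum_{j\neq Y}P_j^2$ pointwise in terms of $s \coloneq 1-P_Y = \sum_{j\ne Y}P_j \in [0,1]$. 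We then take expectations, using $\E[]{s} = 1-\sigma$.

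\textbf{Upper bound.} The $n-1$ off-target entries are nonnegative and sum to $s$, so $\sum_{j\neq Y}P_j^2 \le s^2$. Hence $\norm{\vec{P}-\vec{Q}}_2^2 \le 2s^2 \le 2s$, where the last step uses $s\le 1$. Taking expectations gives $\E[]{\norm{\nabla_{\vec{Z}}\texttt{CE}}_2^2} \le 2(1-\sigma)$.

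\textbf{Lower bound.} By Cauchy--Schwarz over the $n-1$ off-target coordinates, $\sum_{j\ne Y}P_j^2 \ge s^2/(n-1)$. Therefore, pointwise,
\begin{align*}
    \norm{\vec{P}-\vec{Q}}_2^2 \ge s^2\Bigl(1+\tfrac{1}{n-1}\Bigr) = \tfrac{n}{n-1}\, s^2 .
\end{align*}
Applying Jensen's inequality to the convex map $s\mapsto s^2$ gives $\E[]{s^2} \ge (\E[]{s})^2 = (1-\sigma)^2$. Combining the two inequalities yields the claimed lower bound $\tfrac{n}{n-1}(1-\sigma)^2$.

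The only real subtlety is the choice of where to pass to expectations. The lower bound must be kept quadratic in $s$ before Jensen, because the statement is in terms of $(1-\sigma)^2$ rather than $\E[]{s^2}$. The upper bound must be linearized via $s^2\le s$ before taking expectations, because only the first moment $\E[]{s}$ is fixed by the hypothesis. Beyond this, each step is an elementary inequality, and no earlier result from the paper (e.g.\ \Cref{theorem:mse}) is needed.
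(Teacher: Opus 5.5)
Your proof is correct and follows the paper's argument essentially step for step. You use the same decomposition $\norm{\vec{P}-e_Y}_2^2=(1-P_Y)^2+\sum_{j\neq Y}P_j^2$, the same Cauchy--Schwarz-plus-Jensen lower bound, and you linearize before taking expectations for the upper bound; the only cosmetic difference is that you pass through the sharper pointwise bound $2s^2$ before using $s\le 1$, whereas the paper linearizes each term directly via $P_j^2\le P_j$ and $(1-P_Y)^2\le 1-P_Y$.
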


\begin{proof}[Proof of \Cref{theorem:CE}]
    Let $\vec{P}, \vec{Q}$ be defined as in \Cref{theorem:CE}. Since $\nabla_{\vec{Z}} \texttt{CE}(\vec{P}, \vec{Q}) = \vec{P} - \vec{Q}$, we have
    \begin{align*}
        \norm{\vec{P}-\vec{Q}}_2^2 = \norm{\vec{P}-e_Y}_2^2 = (1-P_Y)^2 + \sum_{j\neq Y} P_j^2.
    \end{align*}
    For the upper bound, note that $\sum_{j\neq Y}P_j^2 \le \sum_{j\neq Y}P_j$ and $(1-P_Y)^2 \le 1-P_Y$ since $P \in \triangle^{n-1}$, hence
    \begin{align*}
        \norm{\vec{P} - e_Y}_2^2 \le \sum_{j\neq Y}P_j + (1-P_Y) = 2(1-P_Y)
    \end{align*}
    Taking expectation yields
    \begin{align*}
        \E[]{\norm{\vec{P}-\vec{Q}}_2^2} \le 2\left(1-\E[]{P_Y}\right) = 2(1-\sigma).
    \end{align*}
    For the lower bound, by Cauchy--Schwarz on the $n-1$ coordinates $j \in [n]$ with $j\neq Y$,
    \begin{align*}
        \sum_{j\neq Y} P_j^2 \ge \frac{\left(\sum_{j\neq Y} P_j\right)^2}{n-1} = \frac{(1-P_Y)^2}{n-1}.
    \end{align*}
    Therefore,
    \begin{align*}
        \norm{\vec{P}-e_Y}_2^2 \ge (1-P_Y)^2 + \frac{(1-P_Y)^2}{n-1} = \frac{n}{n-1}(1-P_Y)^2.
    \end{align*}
    Taking expectation and using Jensen's inequality,
    \begin{align*}
        \E[]{\norm{\vec{P}-\vec{Q}}_2^2}
        \ge \frac{n}{n-1}\E[]{(1-P_Y)^2}
        \ge \frac{n}{n-1}\left( 1-\E[]{P_Y} \right)^2
        = \frac{n}{n-1}(1-\sigma)^2.
    \end{align*}
    Combining both bounds completes the proof.
\end{proof}

\begin{figure}[t]
    \centering
    \input{figures/gradient_norm/app}
    \[
        \log_{10}\left(\frac{\norm{\grad_{\theta_{\mathrm{sh}}}\mathcal{L}_{\mathrm{img}}}_2}{\norm{\grad_{\theta_{\mathrm{sh}}}\mathcal{L}_{\mathrm{txt}}}_2 + \epsilon}\right)
    \]
    \caption{\small Log-ratio between the expected MSE gradient norm (image branch, \Cref{theorem:mse}) and the lower/upper bounds on the cross-entropy gradient norm (text branch, \Cref{theorem:CE}), as a function of the agreement levels $\sigma_{\mathrm{img}}$ and $\sigma_{\mathrm{txt}}$ for latent dimension $N=4096$. Negative regions, which dominate early training when text predictions are poor and the image prior is already accurate, indicate that the text-branch gradient outweighs the image-branch gradient and motivates the loss reweighting in \Cref{sec:stabilizers}.}
    \label{fig:gradnorm_ratio}
\end{figure}

Theorems~\ref{theorem:mse} and~\ref{theorem:CE} highlight a fundamental scale mismatch between the two loss terms. For the image regression objective, the squared gradient w.r.t.\ the prediction scales as $\E[]{\norm{\nabla \mathrm{MSE}}_2^2} = \frac{8(1-\sigma_{\text{img}})}{n}$ and therefore decays as $O(1/n)$. In our setting, SD3 is trained on $256\times256$ images and operates in a VAE latent space of size $n = 4 \times 32 \times 32 = 4096$, making this effect non-negligible.

In contrast, for token cross-entropy with one-hot targets, the squared gradient w.r.t.\ logits is bounded by $\frac{n}{n-1}(1-\sigma_{\text{txt}})^2 \le \E[]{\norm{\nabla \mathrm{CE}}_2^2} \le 2(1-\sigma_{\text{txt}})$, which is $O(1)$ and essentially independent of the vocabulary size for large $n$. Early in training, text prediction is poor, so $\sigma_{\text{txt}}\approx 0$, while the pretrained image branch is already accurate, yielding $\sigma_{\text{img}}\approx 1$; consequently, the text objective can induce substantially larger gradients than the image objective on shared parameters (after applying the chain rule). \Cref{fig:gradnorm_ratio} visualizes this disparity by plotting the $\log_{10}$-ratio between the expected image
gradient norm and the lower/upper bounds for the text gradient norm.

While the assumptions of Theorems~\ref{theorem:mse} and~\ref{theorem:CE} are idealized, they provide a useful
explanation for why explicit loss balancing is required in practice. Consistent with this analysis, our adaptive
weighting in \cref{sec:stabilizers} stabilizes to $\lambda_{\text{txt}}\approx 0.04$ over training, indicating a persistent
order-of-magnitude correction between the two objectives.

%% file: figures/gradient_norm/app.tex
\begin{tikzpicture}
    \pgfmathsetmacro{\Ndim}{4096}

    \begin{axis}[
        name=left,
        width=0.42\textwidth,
        height=0.42\textwidth,
        view={0}{90},
        xmin=0, xmax=1.0,
        ymin=0, ymax=1.0,
        domain=0:1.0,
        y domain=0:1.0,
        samples=50,
        samples y=50,
        xlabel={$\sigma_{\text{txt}}$},
        ylabel={$\sigma_{\text{img}}$},
        title={Lower-bound ratio},
        colormap/viridis,
        point meta min=-8,
        point meta max=2,
    ]
        \addplot3[
            surf,
            shader=flat corner,
        ]
        {log10((8*(1-y)/\Ndim) / (((\Ndim/(\Ndim-1))*(1-x)^2) + 0.00000001))};
    \end{axis}

    \begin{axis}[
        at={(left.outer east)},
        anchor=outer west,
        xshift=1.2cm,
        width=0.42\textwidth,
        height=0.42\textwidth,
        view={0}{90},
        xmin=0, xmax=1.0,
        ymin=0, ymax=1.0,
        domain=0:1.0,
        y domain=0:1.0,
        samples=50,
        samples y=50,
        xlabel={$\sigma_{\text{txt}}$},
        ylabel={$\sigma_{\text{img}}$},
        title={Upper-bound ratio},
        colormap/viridis,
        point meta min=-8,
        point meta max=2,
        colorbar,
        colorbar style={
            ylabel={$\log_{10}$-gradient ratio},
        },
    ]
        \addplot3[
            surf,
            shader=flat corner,
        ]
        {log10((8*(1-y)/\Ndim) / (2*(1-x)))};
    \end{axis}
\end{tikzpicture}

%% file: appendix/architecture.tex
\section{Architecture}\label{app:architecture}

\tinytit{Stable Diffusion 3.}
Our SD3 adaptation follows \Cref{sec:peft} and \Cref{fig:mmdit_overview}. The original MM-DiT blocks are conditioned on the image timestep through AdaLN-style modulation. We add a second timestep input for the text corruption level $\tau$: image time $t$ and text time $\tau$ are embedded separately, passed through MLPs, and injected through the same modulation interface. Apart from this dual-timestep conditioning and the LoRA-augmented linear layers shown in \Cref{fig:mmdit_block}, the SD3 transformer architecture is unchanged.

\begin{figure}[t]
    \centering
    \resizebox{0.5\textwidth}{!}{\input{figures/architecture/block}}
    \caption{\small MM-DiT block adapted from Stable Diffusion~3 for joint image--text generation. We add text-time conditioning $\tau$ alongside the original image-time conditioning $t$, with both timesteps embedded separately and injected through the existing AdaLN-style modulation interface. Yellow blocks denote LoRA-augmented linear layers; green blocks denote frozen auxiliary operations.}
    \label{fig:mmdit_block}
\end{figure}

\tinytit{FLUX.1 [dev].}
The FLUX.1 [dev] adaptation uses the same dual-timestep principle. FLUX processes image tokens and T5 text-token embeddings in the main transformer, while also using pooled CLIP text embeddings for global conditioning. We therefore add separate processors for $t$ and $\tau$ and inject both through the model's existing modulation pathway. As in SD3, the backbone is frozen except for LoRA-augmented linear layers, while the text-generation heads and modality-specific timestep processors are newly trained; see \Cref{fig:flux_model}.

\begin{figure}[t]
    \centering
    \resizebox{\textwidth}{!}{\input{figures/architecture/flux_model}}
    \caption{\small Multimodal DiT architecture adapted from FLUX.1 [dev]. Noisy text tokens and image patches are processed jointly together with their modality-specific timesteps $(\tau, t)$ to predict cross-modal flows. Blue: frozen pretrained components; yellow: LoRA-augmented linear layers; red: newly added trainable modules (text head and timestep processors); green: auxiliary operations.}
    \label{fig:flux_model}
\end{figure}

\section{Multi-encoder Conditioning}\label{app:multiencoder}
Modern text-to-image backbones often combine token-level text features with pooled or global text embeddings for conditioning. In SD3, these signals come from multiple frozen encoders, including T5-XXL and CLIP; in FLUX.1, the main text stream uses T5 embeddings while pooled CLIP embeddings provide additional global conditioning. During multimodal uplift, we therefore need to provide compatible inputs to each conditioning pathway while defining the discrete text process in a single tokenizer space.

\tinytit{Design choice.}
We perform discrete text diffusion in the T5 tokenizer space. This keeps the generative state aligned with a text encoder that is trained for token-level modeling, while still allowing us to reuse the pretrained CLIP encoders as additional conditioning signals.

\tinytit{Key idea: decode \texorpdfstring{$\tilde{\vec{y}}_\tau$}{y~tau} to a natural-language string and re-tokenize.}
At training (and sampling) time we have a partially observed/deleted T5 token sequence $\tilde{\vec{y}}_\tau\sim q_\tau(\cdot\mid \vec{y})$.
Because our corruption is \emph{deletion-to-empty} (not mask-token corruption), $\tilde{\vec{y}}_\tau$ is a subsequence of the original prompt and typically detokenizes into a reasonably natural fragment. We exploit this to obtain inputs for auxiliary encoders:
\begin{enumerate}[leftmargin=*, itemsep=0pt, topsep=0pt]
    \item \textbf{T5 branch (diffusion alphabet):} feed $\tilde{\vec{y}}_\tau$ directly to the frozen T5 encoder to obtain token embeddings for the shared transformer.
    \item \textbf{CLIP branches (auxiliary conditioning):} decode $\tilde{\vec{y}}_\tau$ to a string $\tilde{s}_\tau=\mathrm{decode}_{\text{T5}}(\tilde{\vec{y}}_\tau)$ (dropping special tokens), then re-tokenize $\tilde{s}_\tau$ with each CLIP tokenizer and feed the resulting IDs to the corresponding frozen CLIP encoders (including pooled embeddings).
\end{enumerate}
This yields a simple, deterministic, and cheap mapping between encoder stacks while keeping all text encoders frozen.

\tinytit{Why deletion helps.}
Mask-based discrete diffusion injects artificial symbols (e.g., \texttt{<mask>}) and positional patterns that the pretrained CLIP encoders were never trained on, often requiring explicit mask-token retraining. In contrast, deletion produces strings composed of \emph{valid} natural-language tokens; even when the fragment is ungrammatical, CLIP-style encoders are typically robust enough for conditioning.

\tinytit{Limitations.}
The decoded fragment $\tilde{s}_\tau$ can be incomplete or slightly ungrammatical, especially at small $\tau$. In our setup this is acceptable because (i) these auxiliary encoders provide \emph{conditioning} rather than a supervised target, and (ii) the T5 branch remains the primary text state that the model learns to denoise. \Cref{tab:multiencoder_examples} shows typical decoded fragments produced by deletion at different $\tau$. These decoded strings are the inputs re-tokenized for auxiliary encoders.

\begin{table}[t]
    \centering
    \caption{\small Multi-encoder bridging via deletion at four retention levels $\tau$. The original prompt $s$ is corrupted in T5-tokenizer space to $\tilde{\vec{y}}_\tau$, decoded back into a string $\tilde{s}_\tau$, and re-tokenized for auxiliary encoders such as CLIP. Lower $\tau$ retains fewer tokens, but the surviving fragments remain natural-language tokens rather than mask symbols.}

    \vspace{0.5em}
    
    \setlength{\tabcolsep}{3pt}
    \renewcommand{\arraystretch}{1.15}
    \small
    \begin{tabularx}{\linewidth}{@{} c|p{0.45\linewidth} p{0.45\linewidth}@{}}
    \toprule
    $\boldsymbol{\tau}$ & \textbf{Original prompt $s$} & \textbf{Decoded fragment $\tilde{s}_\tau$} \\
    \midrule
    0.8 & A vintage green automobile with a distinctive grille and round headlights is parked on a paved surface, surrounded by greenery. & A vintage automobile witha distinctivee and roundlight isparked on a, surrounded by greenery.\\
    0.8 & A close-up image of a layered cake with whipped cream and fresh raspberries on top, with a slice being cut out and a spatula resting on the side. & A close-up image of a layered cake withwhipped cream and raspberries on top with a slice cut out and a spatul resting on the side.\\
    0.8 & The image features a repeating pattern of green holly leaves and red berries on a light green background, with some scattered dots. & The image features a repeat pattern of green holly leaves and red berries on  light green background, with some scattered dots.\\
    0.8 & Four well-dressed men in suits are posing for a photo, with one holding a glass of champagne. & Four well-dresse men in suits are posing for  photo, one holding a glass of champagne.\\
    \midrule
    0.6 & The image features a circular rug with a vibrant array of fruits, including bananas, apples, oranges, strawberries, grapes, pineapple, and watermelon, all depicted in a colorful and playful manner. & The image features a circular a vibrant array of fruits includings, apples, strawberries,, pineapplemel, all depicteda colorful and manner\\
    0.6 & A red enamel coffee pot with a white interior sits on a wooden surface, its handle pointing towards the right. & A enamel pot with a whites on  wooden, its  the right.\\
    0.6 & A watercolor illustration depicts a jar of jam with a purple ribbon tied around its neck, set against a white background with splashes of red and pink. & watercolor of jam with a purple tied its neck, set against a white splashes of red and pink.\\
    0.6 & The image depicts a whimsical beach scene with a vintage camper, a dog, and beach paraphernalia, all set against a backdrop of a cloudy sky. & images  beach scene with vintage camper,a dog beach paran, set against ay\\
    \midrule
    0.4 & A man is holding a blue and white device, possibly a blood glucose meter, and appears to be checking his finger for a drop of blood. & A white, possiblyameter finger  drop\\
    0.4 & The image shows a vintage wooden cabinet with glass doors, featuring ornate carvings and brass hardware. & The shows cabinet with, featuringnate and brass hardware.\\
    0.4 & The image features a vintage side table with a wooden top and a shelf, set against a plain white background. & image  table with wooden topa shelf, against \\
    0.4 & The image shows a wooden platter filled with a variety of fresh fruits and vegetables, including strawberries, asparagus, lemon wedges, avocado, and grilled chicken. & The image with variety wedge avocado, and grilled\\
    \midrule
    0.2 & Three cupcakes with colorful frosting are displayed on a white surface, with the central cupcake having a swirl of chocolate and vanilla frosting. & central a of and.\\
    0.2 & The image features a slice of cake with intricate frosting on a plate, with a larger cake in the background on a stand. & with in\\
    0.2 & The image shows the intricate inner workings of a vintage gold watch, featuring ornate engravings and a central blue gemstone. & engraving\\
    0.2 & The image shows a colorful, intricately embroidered garment with a vibrant pattern, displayed on a mannequin. & showsa colorful  onamann\\
    \bottomrule
    \end{tabularx}
    \label{tab:multiencoder_examples}
\end{table}

%% file: figures/architecture/block.tex
\begin{tikzpicture}[
        block/.style={draw, fill=white, rectangle, minimum width=3cm,rounded corners=0.2cm},
        trainableblock/.style={block,fill=dit_yellow!70,thick},
        lorablock/.style={block,fill=dit_yellow!70,thick},
        frozenblock/.style={block,fill=dit_lightblue,thick},
        fnblock/.style={block,fill=dit_lightgreen,thick},
        operation/.style={draw=black!70, fill=white, circle, minimum width=2em, thick},
        tensor/.style={draw=black!70, fill=dit_grey!20, circle, thick, dashed},
        input/.style={block,fill=dit_grey!50,thick},
        outlined arrow/.style={preaction={draw, white, line width=4pt}}
    ]

    \pgfdeclarelayer{arrowlayer}
    \pgfdeclarelayer{residuallayer}
    \pgfsetlayers{arrowlayer,residuallayer,main}

    \newcommand{\inblockdistance}{0.2cm}
    \newcommand{\outblockdistance}{0.5cm}
    \newcommand{\attnskip}{5.2cm}
    \newcommand{\leftmodpos}{-2.6}
    \newcommand{\rightmodpos}{7.2}


    \node [tensor] (c) {$y$};
    \node [tensor, right=4cm of c] (x) {$x$};

    \node [below=\outblockdistance of c, fnblock] (c_ln1) {\textbf{Layernorm}};
    \node [below=\inblockdistance of c_ln1, fnblock] (c_mod) {Mod: $\alpha_\tau \cdot \bullet + \beta_\tau$};
    \node [below=\inblockdistance of c_mod, trainableblock] (c_lin) {\textbf{Linear}};

    \path let \p1=($(x)!0.5!(c)$) in node [fnblock, minimum width=8cm, minimum height=1.5cm, align=center] at (\x1, -6.5) (attn) {\vphantom{Attention}\\ \Large Attention};
    \coordinate [below=0.3cm of attn] (attn_split);

    \node [below=\attnskip of c_lin, trainableblock] (c_proj) {Linear};
    \node [below=\inblockdistance of c_proj, operation] (c_proj_gated) {$\bm{*}$};
    \node [below=\outblockdistance of c_proj_gated, operation] (c_attn_out) {$\bm{+}$};
    \node [below=\outblockdistance of c_attn_out, fnblock] (c_ln2) {Layernorm};
    \node [below=\inblockdistance of c_ln2, fnblock] (c_mod2) {Mod: $\delta_\tau \cdot \bullet + \epsilon_\tau$};
    \node [below=\inblockdistance of c_mod2, trainableblock] (c_mlp) {MLP};
    \node [below=\inblockdistance of c_mlp, operation] (c_mlp_gated) {$\bm{*}$};
    \node [below=\outblockdistance of c_mlp_gated, operation] (c_out) {$\bm{+}$};

    \path let \p1=($(c_mod.north west)!0.5!(c_mod.west)+(0, 0.25)$) in node [tensor] at (\leftmodpos, \y1) (alpha_txt) {$\scriptstyle \alpha_\tau$};
    \path let \p1=($(c_mod.south west)!0.5!(c_mod.west)-(0, 0.25)$) in node [tensor] at (\leftmodpos, \y1) (beta_txt) {$\scriptstyle \beta_\tau$};
    \path let \p1=(c_proj_gated.west) in node [tensor] at (\leftmodpos, \y1) (gamma_txt) {$\scriptstyle \gamma_\tau$};
    \path let \p1=($(c_mod2.north west)!0.5!(c_mod2.west)+(0, 0.25)$) in node [tensor] at (\leftmodpos, \y1) (delta_txt) {$\scriptstyle \delta_\tau$};
    \path let \p1=($(c_mod2.south west)!0.5!(c_mod2.west)-(0, 0.25)$) in node [tensor] at (\leftmodpos, \y1) (epsilon_txt) {$\scriptstyle \epsilon_\tau$};
    \path let \p1=(c_mlp_gated.west) in node [tensor] at (\leftmodpos, \y1) (zeta_txt) {$\scriptstyle \zeta_\tau$};

    \node [tensor, above left=2cm and 1.3cm of c] (pi_y) {$\tau$};
    \node [tensor, above right=2cm and 1.3cm of x] (pi_img) {$t$};
    \node [fnblock, below=\outblockdistance of pi_y] (y_silu) {$\mathrm{SiLU}$};
    \node [trainableblock, below=\inblockdistance of y_silu] (y_linear) {\textbf{Linear}};
    \coordinate (y_split) at ($(y_linear.south)+(-1.5, -0.3)$);

    \coordinate (c_attn_out_left) at ($(c_attn_out) + (-2, 0)$);

    \begin{pgfonlayer}{arrowlayer}
        \draw [->] (c) -- (c_lin);
        \draw [->] (attn) -- (attn_split) -| (c_proj);
        \draw [->] (c_proj) -- (c_attn_out);
        \draw [->] (c_attn_out) -- (c_out);

        \draw [<-] (pi_y.north) --++ (0, 0.4);
        \draw [<-] (pi_img.north) --++ (0, 0.4);
        \draw (pi_y) -- (y_linear) |- (y_split);
        \draw [->] (y_split) |- (alpha_txt);
        \draw [->] (y_split) |- (beta_txt);
        \draw [->] (y_split) |- (gamma_txt);
        \draw [->] (y_split) |- (delta_txt);
        \draw [->] (y_split) |- (epsilon_txt);
        \draw [->] (y_split) |- (zeta_txt);

        \draw [->] (alpha_txt) -- ($(c_mod.north west)!0.5!(c_mod.west)$);
        \draw [->] (beta_txt) -- ($(c_mod.south west)!0.5!(c_mod.west)$);
        \draw [->] (gamma_txt) -- (c_proj_gated.west);
        \draw [->] (delta_txt) -- ($(c_mod2.north west)!0.5!(c_mod2.west)$);
        \draw [->] (epsilon_txt) -- ($(c_mod2.south west)!0.5!(c_mod2.west)$);
        \draw [->] (zeta_txt) -- (c_mlp_gated.west);

    \end{pgfonlayer}
    \begin{pgfonlayer}{residuallayer}
        \draw [<-, line width=2mm, draw=white] (c) --++ (0, 3);
        \draw [->, line width=2mm, draw=white] (c) -| (c_attn_out_left) -- (c_attn_out);
        \draw [->, line width=2mm, draw=white] (c_attn_out_left) |- (c_out);
        \draw [->, line width=2mm, draw=white] (c_out) --++ (0, -1);
        \draw [<-, thick] (c) --++ (0, 3.1);
        \draw [->, thick] (c) -| (c_attn_out_left) -- (c_attn_out);
        \draw [->, thick] (c_attn_out_left) |- (c_out);
        \draw [->, thick] (c_out) --++ (0, -1);
    \end{pgfonlayer}


    \node [below=\outblockdistance of x, fnblock] (x_ln1) {Layernorm};
    \node [below=\inblockdistance of x_ln1, fnblock] (x_mod) {Mod: $\alpha_t \cdot \bullet + \beta_t$};
    \node [below=\inblockdistance of x_mod, trainableblock] (x_lin) {Linear};

    \node [below=\attnskip of x_lin, trainableblock] (x_proj) {Linear};
    \node [below=\inblockdistance of x_proj, operation] (x_proj_gated) {$\bm{*}$};
    \node [below=\outblockdistance of x_proj_gated, operation] (x_attn_out) {$\bm{+}$};
    \node [below=\outblockdistance of x_attn_out, fnblock] (x_ln2) {Layernorm};
    \node [below=\inblockdistance of x_ln2, fnblock] (x_mod2) {Mod: $\delta_t \cdot \bullet + \epsilon_t$};
    \node [below=\inblockdistance of x_mod2, trainableblock] (x_mlp) {MLP};
    \node [below=\inblockdistance of x_mlp, operation] (x_mlp_gated) {$\bm{*}$};
    \node [below=\outblockdistance of x_mlp_gated, operation] (x_out) {$\bm{+}$};

    \path let \p1=($(x_mod.north east)!0.5!(x_mod.east)+(0, 0.25)$) in node [tensor] at (\rightmodpos , \y1) (alpha_img) {$\scriptstyle \alpha_t$};
    \path let \p1=($(x_mod.south east)!0.5!(x_mod.east)-(0, 0.25)$) in node [tensor] at (\rightmodpos , \y1) (beta_img) {$\scriptstyle \beta_t$};
    \path let \p1=(x_proj_gated.east) in node [tensor] at ( \rightmodpos , \y1) (gamma_img) {$\scriptstyle \gamma_t$};
    \path let \p1=($(x_mod2.north east)!0.5!(x_mod2.east)+(0, 0.25)$) in node [tensor] at (\rightmodpos , \y1) (delta_img) {$\scriptstyle \delta_t$};
    \path let \p1=($(x_mod2.south east)!0.5!(x_mod2.east)-(0, 0.25)$) in node [tensor] at (\rightmodpos , \y1) (epsilon_img) {$\scriptstyle \epsilon_t$};
    \path let \p1=(x_mlp_gated.east) in node [tensor] at (\rightmodpos , \y1) (zeta_img) {$\scriptstyle \zeta_t$};

    \coordinate [above right=2cm and 1.6cm of x] (yx) {};
    \node [fnblock, below=0.6cm of yx] (y_silu) {$\mathrm{SiLU}$};
    \node [trainableblock, below=\inblockdistance of y_silu] (y_linear) {\textbf{Linear}};
    \coordinate (y_split) at ($(y_linear.south)+(1.5, -0.3)$);

    \coordinate (x_attn_out_left) at ($(x_attn_out) + (2, 0)$);

    \begin{pgfonlayer}{arrowlayer}
        \draw [->] (x) -- (x_lin);
        \draw [->] (attn) -- (attn_split) -| (x_proj);
        \draw [->] (x_proj) -- (x_attn_out);
        \draw [->] (x_attn_out) -- (x_out);

        \draw (pi_img) -| (yx) -- (y_linear) |- (y_split);
        \draw [->] (y_split) |- (alpha_img);
        \draw [->] (y_split) |- (beta_img);
        \draw [->] (y_split) |- (gamma_img);
        \draw [->] (y_split) |- (delta_img);
        \draw [->] (y_split) |- (epsilon_img);
        \draw [->] (y_split) |- (zeta_img);

        \draw [->] (alpha_img) -- ($(x_mod.north east)!0.5!(x_mod.east)$);
        \draw [->] (beta_img) -- ($(x_mod.south east)!0.5!(x_mod.east)$);
        \draw [->] (gamma_img) -- (x_proj_gated.east);
        \draw [->] (delta_img) -- ($(x_mod2.north east)!0.5!(x_mod2.east)$);
        \draw [->] (epsilon_img) -- ($(x_mod2.south east)!0.5!(x_mod2.east)$);
        \draw [->] (zeta_img) -- (x_mlp_gated.east);

    \end{pgfonlayer}
    \begin{pgfonlayer}{residuallayer}
        \draw [<-, line width=2mm, draw=white] (x) --++ (0, 3);
        \draw [->, line width=2mm, draw=white] (x) -| (x_attn_out_left) -- (x_attn_out);
        \draw [->, line width=2mm, draw=white] (x_attn_out_left) |- (x_out);
        \draw [->, line width=2mm, draw=white] (x_out) --++ (0, -1);
        \draw [<-, thick] (x) --++ (0, 3.1);
        \draw [->, thick] (x) -| (x_attn_out_left) -- (x_attn_out);
        \draw [->, thick] (x_attn_out_left) |- (x_out);
        \draw [->, thick] (x_out) --++ (0, -1);
    \end{pgfonlayer}

    \node [below=0.1cm of attn.north] (k) {\Large \emph{K}};
    \node [left= 2.9cm of k.center] (q) {\Large \emph{Q}};
    \node [right=2.9cm of k.center] (v) {\Large \emph{V}};
    \node [above=0.2cm of q, operation] (q_join) {$\odot$};
    \node [above=0.8cm of k, operation] (k_join) {$\odot$};
    \node [above=1.3cm of v, operation] (v_join) {$\odot$};

    \coordinate (x_lin_vout) at (x_lin.south -| v_join);
    \coordinate (x_lin_kout) at (x_lin.south);
    \coordinate (x_lin_qout) at ($(x_lin_kout)+(x_lin_kout)-(x_lin_vout)$);

    \coordinate (c_lin_qout) at (c_lin.south -| q_join);
    \coordinate (c_lin_kout) at (c_lin.south);
    \coordinate (c_lin_vout) at ($(c_lin_kout)+(c_lin_kout)-(c_lin_qout)$);

    \node [below=\inblockdistance of c_lin_qout, trainableblock, dashed, minimum width=0cm, align=center] (q_norm_txt) {\tiny Opt.\\[-0.2cm]\tiny RMS-\\[-0.2cm] \tiny Norm};
    \node [below=\inblockdistance of c_lin_kout, trainableblock, dashed, minimum width=0cm, align=center] (k_norm_txt) {\tiny Opt.\\[-0.2cm]\tiny RMS-\\[-0.2cm] \tiny Norm};
    \node [below=\inblockdistance of x_lin_qout, trainableblock, dashed, minimum width=0cm, align=center] (q_norm_img) {\tiny Opt.\\[-0.2cm]\tiny RMS-\\[-0.2cm] \tiny Norm};
    \node [below=\inblockdistance of x_lin_kout, trainableblock, dashed, minimum width=0cm, align=center] (k_norm_img) {\tiny Opt.\\[-0.2cm]\tiny RMS-\\[-0.2cm] \tiny Norm};

    \begin{pgfonlayer}{arrowlayer}
        \draw (k_join) -- (k);

        \draw [->] (c_lin_qout) -| (q_join);
        \draw [line width=2mm, white] (x_lin_qout) |- (q_join);
        \draw [->] (x_lin_qout) |- (q_join);
        \draw (q_join) -- (q);

        \draw [->] (c_lin_kout) |- (k_join);
        \draw [line width=2mm, white] (x_lin_kout) |- (k_join);
        \draw [->] (x_lin_kout) |- (k_join);

        \draw [line width=2mm, white] (c_lin_vout) |- (v_join);
        \draw [->] (c_lin_vout) |- (v_join);
        \draw [->] ($(x_lin)+(0.25, 0)$) -| (v_join);
        \draw [->] (v_join) -- (v);
    \end{pgfonlayer}
\end{tikzpicture}

%% file: figures/architecture/flux_model.tex
\begin{tikzpicture}[
        font=\fontsize{12}{14}\selectfont,
        every path/.style={line width=0.8pt},
        block/.style={draw, fill=white, rectangle, minimum width=3cm,rounded corners=0.2cm},
        trainableblock/.style={block,fill=dit_red!70,thick},
        lorablock/.style={block,fill=dit_yellow!70,thick},
        frozenblock/.style={block,fill=dit_lightblue,thick},
        fnblock/.style={block,fill=dit_lightgreen,thick},
        operation/.style={draw=black!70, fill=white, circle, minimum width=1.5em, thick},
        tensor/.style={draw=black!70, fill=dit_grey!20, circle, thick, dashed, minimum width=1.5em},
        input/.style={block,fill=dit_grey!50,thick},
        outlined arrow/.style={preaction={draw, white, line width=4pt}}
    ]
    \pgfdeclarelayer{arrowlayer}
    \pgfdeclarelayer{residuallayer}
    \pgfdeclarelayer{decorations}
    \pgfdeclarelayer{bg}
    \pgfsetlayers{bg,arrowlayer,residuallayer,main,decorations}
    
    \newcommand{\clipLcolor}{ForestGreen!40}
    \newcommand{\tfiveColor}{YellowGreen!40}
    
    \coordinate (timecol)  at (-6.2,0);
    \coordinate (gcol)     at (-11.2,0);
    \coordinate (textcol)  at (0,0);
    \coordinate (imagecol) at (5.8,0);
    \coordinate (stackcol) at (2.9,-4.9);
    
    \node[input] (text) at (textcol) {\textbf{Caption $\vy$}};
    \node[input] (image) at (imagecol) {\textbf{Image $\vx$}};
    
    \node[input] (guidance) at (gcol) {\textbf{Guidance $g$}};
    \node[input] (timestep_tau) at ($(timecol)+(-1.6,0)$) {\textbf{Timestep $\tau$}};
    \node[input] (timestep_t)   at ($(timecol)+(1.6,0)$) {\textbf{Timestep $t$}};
    
    \node[frozenblock, fill=\clipLcolor] (clipL) at ($(textcol)+(-1.6,-1.5)$)
        {\textbf{CLIP-L/14}};
    
    \node[frozenblock, fill=\tfiveColor] (t5) at ($(textcol)+(1.7,-1.5)$)
        {\textbf{T5 XXL\phantom{/}}};
    
    \node[frozenblock] (pool_mlp) at ($(clipL)+(0,-1.0)$)
        {\textbf{MLP}};
    
    \node[frozenblock] (y_lin) at ($(t5)+(0,-1.0)$)
        {\textbf{Linear}};
    
    \node[fnblock] (patching) at ($(imagecol)+(0,-1.5)$)
        {\textbf{Patching}};
    
    \node[frozenblock] (patch_proj) at ($(patching)+(0,-1.0)$)
        {\textbf{Linear}};
    
    \node[fnblock, minimum width=4.9cm] (t_enc) at ($(timecol)+(0,-1.5)$)
        {\textbf{Sinusoidal Encoding}};
    
    \node[frozenblock, minimum width=4.9cm] (t_emb) at ($(t_enc)+(0,-1.0)$)
        {\textbf{MLP}};
    
    \node[fnblock, minimum width=2.3cm, align=center] (g_enc) at ($(guidance)+(0,-1.5)$)
        {\textbf{Sinusoidal Encoding}};
    
    \node[frozenblock, minimum width=2.3cm] (g_mlp) at ($(g_enc)+(0,-1.0)$)
        {\textbf{MLP}};
    
    \coordinate (t_enc_left_in)   at ($(t_enc.north west)!0.30!(t_enc.north east)$);
    \coordinate (t_enc_right_in)  at ($(t_enc.north west)!0.70!(t_enc.north east)$);
    \coordinate (t_enc_left_out)  at ($(t_enc.south west)!0.30!(t_enc.south east)$);
    \coordinate (t_enc_right_out) at ($(t_enc.south west)!0.70!(t_enc.south east)$);
    
    \coordinate (t_emb_left_in)   at ($(t_emb.north west)!0.30!(t_emb.north east)$);
    \coordinate (t_emb_right_in)  at ($(t_emb.north west)!0.70!(t_emb.north east)$);
    \coordinate (t_emb_left_out)  at ($(t_emb.south west)!0.30!(t_emb.south east)$);
    \coordinate (t_emb_right_out) at ($(t_emb.south west)!0.70!(t_emb.south east)$);
    
    \node[operation] (delta_diff) at ($(t_emb_left_out)+(0,-0.75)$)
        {$\bm{-}$};
    
    \node[trainableblock] (delta_mlp) at ($(delta_diff)+(0,-2.5)$)
        {\textbf{Linear}};
    
    \node[operation] (delta_mult) at ($(delta_mlp)+(0,-1.15)$)
        {$\bm{\times}$};
    
    \node[fnblock] (delta_tanh) at ($(delta_mult)+(0,-1.5)$)
        {\textbf{tanh}};
    
    \node[trainableblock] (delta_scale) at ($(delta_tanh)+(0,-0.85)$)
        {\textbf{Scalar}};
    
    \node[operation] (t_plus) at ($(t_emb_right_out)+(0,-0.75)$)
        {$\bm{+}$};
    
    \node[operation] (g_plus) at ($(t_plus)+(0,-1.3)$)
        {$\bm{+}$};
    
    \node[operation] (tau_update) at ($(g_plus)+(0,-1.3)$)
        {$\bm{+}$};
    
    \node[lorablock, minimum width=5.2cm] (attn1) at (stackcol)
        {\textbf{Double-Stream Block 1}};
    
    \node at ($(attn1)+(0,-0.55)$) {$\ldots$};
    
    \node[lorablock, minimum width=5.2cm] (attnN) at ($(attn1)+(0,-1.25)$)
        {\textbf{Double-Stream Block $N$}};
    
    \node[fnblock] (concat) at ($(attnN)+(0,-1.25)$)
        {\textbf{Concatenate}};
    
    \node[lorablock, minimum width=5.2cm] (single1) at ($(concat)+(0,-1.35)$)
        {\textbf{Single-Stream Block 1}};
    
    \node at ($(single1)+(0,-0.55)$) {$\ldots$};
    
    \node[lorablock, minimum width=5.2cm] (singleM) at ($(single1)+(0,-1.25)$)
        {\textbf{Single-Stream Block $M$}};
    
    
    
    

    
    

    \node[fnblock] (split) at ($(singleM)+(0,-1.25)$)
    {\textbf{Split}};

    \coordinate (split_out_left)  at ($(split.south west)!0.30!(split.south east)$);
    \coordinate (split_out_right) at ($(split.south west)!0.70!(split.south east)$);
    
    \node[trainableblock] (out_mod_txt) at ($(split)+(-1.75,-1.35)$)
        {\textbf{Modulation}};
    
    \node[trainableblock] (out_mlp_txt) at ($(out_mod_txt)+(0,-1.0)$)
        {\textbf{Linear}};
    
    \node[input] (output_txt) at ($(out_mlp_txt)+(0,-1.05)$)
        {\textbf{Text Output}};
    
    \node[frozenblock] (out_mod) at ($(split)+(+1.75,-1.35)$)
        {\textbf{Modulation}};
    
    \node[frozenblock] (out_mlp) at ($(out_mod)+(0,-1.0)$)
        {\textbf{Linear}};
    
    \node[fnblock] (depatch) at ($(out_mlp)+(0,-1.0)$)
        {\textbf{Unpatching}};
    
    \node[input] (output) at ($(depatch)+(0,-1.05)$)
        {\textbf{Image Output}};
    
    \coordinate (attn1_mid_left)  at ($(attn1.north west)!0.30!(attn1.north east)$);
    \coordinate (attn1_mid_right) at ($(attn1.north west)!0.70!(attn1.north east)$);
    
    \coordinate (attnN_out_left)  at ($(attnN.south west)!0.30!(attnN.south east)$);
    \coordinate (attnN_out_right) at ($(attnN.south west)!0.70!(attnN.south east)$);
    
    \coordinate (left_bus)  at ($(attn1.west)+(-1.1,0)$);
    \coordinate (right_bus) at ($(attn1.east)+(1.1,0)$);
    
    \node[tensor] (tau_final) at ($(left_bus)+(-0.95,0)$) {$\tau$};
    \node[tensor] (t_final) at ($(right_bus)+(0,0.95)$) {$t$};
    
    \begin{pgfonlayer}{bg}
        \path[fill=dit_yellow!30, rounded corners=0.25cm]
            ($(attn1.north west)+(-0.35,0.30)$)
            rectangle
            ($(attnN.south east)+(0.35,-0.30)$);
    
        \path[fill=dit_yellow!30, rounded corners=0.25cm]
            ($(single1.north west)+(-0.35,0.30)$)
            rectangle
            ($(singleM.south east)+(0.35,-0.30)$);
    \end{pgfonlayer}
    
    \begin{pgfonlayer}{arrowlayer}
    
        \draw[->] (text.south) to[out=270,in=90] (clipL.north);
        \draw[->] (text.south) to[out=270,in=90] (t5.north);
        \draw[->] (clipL) -- (pool_mlp);
        \draw[->] (t5) -- (y_lin);
    
        \draw[->] (image) -- (patching);
        \draw[->] (patching) -- (patch_proj);
    
        \draw[->] (guidance) -- (g_enc);
        \draw[->] (g_enc) -- (g_mlp);
        
        \draw[->] (timestep_tau.south) -- ++(0,-0.55) -| (t_enc_left_in);
        \draw[->] (timestep_t.south) -- ++(0,-0.55) -| (t_enc_right_in);
    
        \draw[->] (t_enc_left_out)  -- (t_emb_left_in);
        \draw[->] (t_enc_right_out) -- (t_emb_right_in);
    
        \draw[->] (t_emb_left_out) -- (delta_diff);
        \draw[->] (t_emb_right_out) |- (delta_diff.east);
    
        \draw[->] (delta_diff) -- (delta_mlp);
        \draw[->] (delta_mlp) -- (delta_mult);
    
        \draw[->] (delta_scale) -- (delta_tanh);
        \draw[->] (delta_tanh) -- (delta_mult);
    
        \draw[->] (t_emb_right_out) -- (t_plus);
        \draw[->] (pool_mlp.south) |- (t_plus.east);
    
        \draw[->] (t_plus) -- (g_plus);
        \draw[->] (g_plus) -- (tau_update);
        \draw[->] (delta_mult) -| (tau_update.south);
    
        \draw[->] (tau_update.east) -- ++(1.5,0) |- (tau_final.west);

        \draw[->, outlined arrow] (g_mlp.south) |- (g_plus.west);
    
        \draw[->] (y_lin.south) -- ++(0,-0.55) -| (attn1_mid_left);
        \draw[->] (patch_proj.south) -- ++(0,-0.55) -| (attn1_mid_right);
    
        \draw[->] (attn1_mid_left)  |- (concat.west); 
        \draw[->] (attn1_mid_right) |- (concat.east); 
    
        \draw[->] (concat) -- (single1);
        \draw[->] (singleM) -- (split);
        
        \draw[->] (split_out_left)  -- ++(0,-0.45) -| (out_mod_txt.north);
        \draw[->] (split_out_right) -- ++(0,-0.45) -| (out_mod.north);
        
        \draw[->] (out_mod_txt) -- (out_mlp_txt);
        \draw[->] (out_mlp_txt) -- (output_txt);
        
        \draw[->] (out_mod) -- (out_mlp);
        \draw[->] (out_mlp) -- (depatch);
        \draw[->] (depatch) -- (output);
    
        \draw[->] (tau_final) -- (left_bus) -- (attn1.west);
        \draw[->] (left_bus) |- (attnN.west);
        \draw[->] (left_bus) |- (out_mod_txt.west);
    
        \draw[->, outlined arrow] (g_plus.east) -- ++(0.9,0) |- (t_final.west);
    
        \draw[->] (t_final.south) -- (right_bus) -- (attn1.east);
        \draw[->] (right_bus) |- (attnN.east);
        \draw[->] (right_bus) |- (single1.east);
        \draw[->] (right_bus) |- (singleM.east);
        \draw[->] (right_bus) |- (out_mod.east);
    
    \end{pgfonlayer}
\end{tikzpicture}

%% file: appendix/results.tex
\section{Results}
This appendix section presents additional ablation studies and analysis beyond the main paper results. We examine the impact of different architectural and training choices, including the effect of classifier-free guidance (CFG), teacher-matching strategies, and the corruption schedule used during training. These results complement the main paper evaluation and provide deeper insights into the design decisions that contribute to FullFlow's strong performance.

\subsection{Ablation 1: Classifier-Free Guidance}
Training with the $\pi_{\mathrm{ind}}$ schedule (\Cref{eq:pi_ind}) exposes the model to jointly corrupted states across the full two-timestep space. In particular, the model sees regimes where one modality carries little or no information while the other remains informative. This provides unconditional or near-unconditional predictors for both the image and text branches without adding a separate conditioning-dropout objective. At inference time, these predictors can be combined with their conditional counterparts to implement classifier-free guidance (CFG) for either text-to-image or image-to-text generation.

Dual Diffusion~\citep{li_dual_2025} instead introduces an explicit conditioning-dropout hyperparameter, which controls the probability of fully deleting the conditioning modality during training in order to preserve an unconditional prediction branch. In our formulation, the same functionality arises naturally from the two-timestep training space induced by $\pi_{\mathrm{ind}}$.

\tinytit{Text-to-image.}
For text-to-image generation we condition on a clean caption $\vy$ and evolve the image from Gaussian noise $\vx_0 \sim \SN$ along $t:0\to1$. We define
\begin{align*}
    \vy_1 \coloneq \vy, \qquad \vy_0 \coloneq \varepsilon,
\end{align*}
corresponding to fully informative vs.\ fully uninformative text contexts. The model provides
\begin{align*}
    v_\theta^{\text{cond}}
    &\coloneq v_\theta(\vx_t, t, \vy_1, \tau{=}1),\\
    v_\theta^{\text{uncond}}
    &\coloneq v_\theta(\vx_t, t, \vy_0, \tau{=}0),
\end{align*}
and we use the standard CFG combination
\begin{align*}
v_{\text{cfg}} = v_\theta^{\text{uncond}} + \gamma \cdot \left( v_\theta^{\text{cond}} - v_\theta^{\text{uncond}} \right),
\end{align*}
with guidance scale $\gamma \ge 1$. This directly steers the image flow toward captions that better match $\vy$ without extra networks or retraining.

\tinytit{Image-to-text.}
The same construction applies symmetrically to image-to-text generation, but now the discrete nature of Edit Flows requires combining the parameters of the reverse-time CTMC rather than a Euclidean velocity. We condition on a clean image $\vx_1 \coloneq \vx$ and evolve the text from the empty sequence $\varepsilon$ along $\tau:0\to1$, using
\begin{align*}
    \vx_1 \coloneq \vx, \qquad \vx_0 \sim \SN,
\end{align*}
as fully informative vs.\ fully uninformative image contexts. At each retained position $i$, the backbone produces a conditional and an unconditional pair of insertion rate and token distribution:
\begin{align*}
    \lambda_\theta^{i,\text{cond}} \coloneq \lambda_\theta^{i}\left(\tilde{\vy}_\tau, \tau, \vx_1, t{=}1\right), & \quad
    w_\theta^{i,\text{cond}}(\cdot) \coloneq w_\theta^{i}\left(\cdot \mid \tilde{\vy}_\tau, \tau, \vx_1, t{=}1\right),\\    
    \lambda_\theta^{i,\text{uncond}} \coloneq \lambda_\theta^{i}\left(\tilde{\vy}_\tau, \tau, \vx_0, t{=}0\right) & \quad
    w_\theta^{i,\text{uncond}}(\cdot) \coloneq w_\theta^{i}\left(\cdot \mid \tilde{\vy}_\tau, \tau, \vx_0, t{=}0\right).
\end{align*}
Since insertion rates are nonnegative and token distributions are categorical, we combine them multiplicatively in log-space, which is the discrete analogue of the additive velocity combination above:
\begin{align*}
    \log \lambda_\theta^{i,\text{cfg}}
    &= \log \lambda_\theta^{i,\text{uncond}} + \gamma \cdot \left( \log \lambda_\theta^{i,\text{cond}} - \log \lambda_\theta^{i,\text{uncond}} \right),\\
    w_\theta^{i,\text{cfg}}(v) &\propto w_\theta^{i,\text{uncond}}(v)^{1-\gamma} \cdot w_\theta^{i,\text{cond}}(v)^{\gamma},
\end{align*}
for each candidate token $v \in V$, with $w_\theta^{i,\text{cfg}}$ renormalized over the vocabulary. The guided rate $\lambda_\theta^{i,\text{cfg}}$ and distribution $w_\theta^{i,\text{cfg}}$ are then plugged into the same insertion sampler used at vanilla inference, steering the discrete text trajectory toward sequences that are more strongly explained by the conditioning image $\vx$ at no additional training cost.

\input{figures/cfg/fig}

We evaluate the effect of image-to-text CFG on caption generation in \Cref{fig:cfg_metrics}. Unless stated otherwise, all main results in this paper use unguided decoding, i.e.\ $\gamma=1$, for simplicity and to avoid introducing an additional inference-time hyperparameter. Nevertheless, CFG provides a useful post-hoc control mechanism. A mild guidance scale, $\gamma=1.2$, improves CIDEr from $107.1$ to $110.0$ while slightly shortening the generated captions from $34.4$ to $33.8$ tokens on average, bringing them closer to the reference length of $35.9$ tokens. However, stronger guidance quickly becomes detrimental: as $\gamma$ increases beyond $1.5$, the model generates increasingly long captions, reaching $57.8$ tokens at $\gamma=4.0$, while CIDEr drops monotonically to $15.0$. This suggests that moderate CFG can improve caption-image alignment, but excessive guidance over-amplifies high-probability insertions in the reverse CTMC, producing overly verbose captions and degrading caption quality.

\subsection{Ablation 2: Teacher-matching}
Qualitative examples of the effect of different teacher-matching styles vs the standard rectified flow formulation. As confirmed by the metrics presented in the main part, the standard rectified flow formulation yields high distortion which is likely the result of a training distribution mismatch between our training data and the data the model was pretrained on. Both teacher-matching versions are able to provide coherent images. However, clean teacher-matching introduces a lot of artifacts at first and seems overall more unstable, while same-noise teacher-matching stays very stable, and is hence the standard choice.

\input{figures/sd3_t2i/fig}

\subsection{Ablation 3: Corruption Schedule Ablation: Extended Metrics}\label{app:time_sampling_extended}

\Cref{fig:timesamplingextended} extends the corruption-schedule ablation from \Cref{sec:time_coupling} to four additional metrics: CLIP score (image--text alignment), T5-token union (lexical overlap between the generated and reference captions), BERTScore-F1 (semantic similarity), and GPT-2 perplexity (caption fluency). A detailed quantitative breakdown of these metrics is provided in \Cref{tab:timesamplingextended}. The qualitative pattern from the main-paper CIDEr curves (\Cref{fig:time_sampling}) replicates consistently across all four metrics. Pure $\pi_{\mathrm{ac}}$ (orange) converges fastest early in training because it supervises captioning directly by always conditioning on clean images; this advantage is most visible in GPT-2 perplexity, where $\pi_{\mathrm{ac}}$ achieves lower perplexity from the very first checkpoint. Pure $\pi_{\mathrm{ind}}$ (blue) learns more slowly on captioning-specific metrics but builds richer cross-modal representations by training on all $(t,\tau) \in [0,1]^2$ jointly. After switching from $\pi_{\mathrm{ind}}$ to $\pi_{\mathrm{ac}}$ (red), performance improves sharply and reaches the final pure-$\pi_{\mathrm{ac}}$ level in approximately half the remaining training budget, regardless of the exact switch point between 75k and 175k steps. These results confirm that the mixed-to-alternating schedule is the preferred default across all evaluated metrics.

\input{figures/time_sampling/app_fig}

\begin{table*}
    \centering
    \caption{\small Visual question answering benchmarks. Top group: dedicated autoregressive VLMs; middle group: unified models that handle generation and understanding jointly; bottom group: flow-matching unified models. With only $130$M trainable parameters and lightweight downstream adaptation, FullFlow remains competitive with much larger unified systems while substantially outperforming the closest flow-matching baseline (DualDiff) on VizWiz.}
    \label{tab:lm_benchmarks}

    \vspace{0.5em}
    
    \resizebox{\linewidth}{!}{
    \begin{tabular}{lccccccc}
        \toprule
        \textbf{Model} & \textbf{Params} & \textbf{Text} & \textbf{Image} & \textbf{MS-COCO} & \textbf{VQAv2} & \textbf{VizWiz} & \textbf{OKVQA} \\
        & \# trainable & Backbone & Backbone & CIDEr~$\uparrow$ & Acc.~$\uparrow$ & Acc.~$\uparrow$ & Acc.~$\uparrow$ \\
        \hline
        InternVL-2.0~\citep{chen_expanding_2025}  & 8B & AR & - & - &  - &   62.9 &  62.9 \\
        LLaVA-Next~\citep{li_llava-next-interleave_2024} & 13B & AR & - & - &  82.8 & 60.5 & - \\
        BLIP~\citep{li_blip_2022} & 13B & AR & - & - & 65.0 & 19.6 & - \\
        IDEFICS~\citep{laurencon_obelics_2023} & 9B & AR & - & - & 50.9 & - & - \\
        QWEN-VL~\citep{bai_qwen-vl_2023} & 7B & AR & - & - & 78.2 & 38.9 & - \\
        OpenFlamingo~\citep{awadalla_openflamingo_2023} & 9B & AR & - & 65.5 & 43.5 & - & - \\
        Flamingo~\citep{alayrac_flamingo} & 9B & AR & - & 79.4 & 51.8 & 28.8 & 44.7\\
        \hline
        CM3Leon~\citep{yu_scaling_2023}  & 7B & AR & AR & 61.6 & 47.6 & 37.6 & 23.8 \\
        Chameleon~\citep{team_chameleon_2025} & 7B & AR & AR & 18.0 & - & - & - \\
        LWM~\citep{liu_world_2025} & 7B & AR & AR & - & 55.8 & 11.6 & - \\
        Show-O (256$\times$256)~\citep{xie_show-o_2025} & 1.3B & AR & FM & - & 64.7 & - & - \\
        Show-O (512$\times$512)~\citep{xie_show-o_2025} & 1.3B & AR & FM & - & 69.4 & - & - \\
        Transfusion~\citep{zhou_transfusion_2024} & 7B & AR & FM & 29.0 & - & - & - \\
        \hline
        DualDiff (256$\times$256)~\citep{li_dual_2025} & 2B & FM & FM & - & 59.5 & 19.4 & 28.5 \\
        DualDiff (512$\times$512)~\citep{li_dual_2025} & 2B & FM & FM & 56.2 & 60.1 & 29.9 & 25.3 \\
        \hline
        FullFlow (Ours) & 130M & FM & FM & 54.93 & 56.5 & 50.7 & 23.3\\
        \bottomrule
    \end{tabular}}
\end{table*}

%% file: figures/cfg/fig.tex
\newcommand{\plotCFGScore}{
    \begin{tikzpicture}
    \begin{axis}[
        width=0.9\linewidth,
        height=0.7\linewidth,
        xlabel={CFG Text Scale},
        ylabel={CIDEr $\uparrow$},
        grid=major,
        grid style={draw=gray!20},
        tick label style={font=\scriptsize},
        label style={font=\small},
        scaled x ticks=false,
        xtick scale label code/.code={},
        thick,
        mark=*,
        mark options={solid, scale=0.8},
        clip=false,
    ]
        \addplot[dit_red, thick, mark=*, solid] table [
            x=cfg_txt_scale,
            y=Cider,
            col sep=comma
        ] {./figures/cfg/data.csv};
    \end{axis}
    \end{tikzpicture}
}

\newcommand{\plotCFGTokens}{
    \begin{tikzpicture}
    \begin{axis}[
        width=0.9\linewidth,
        height=0.7\linewidth,
        xlabel={CFG Text Scale},
        ylabel={\# Tokens},
        grid=major,
        grid style={draw=gray!20},
        tick label style={font=\scriptsize},
        label style={font=\small},
        legend style={
            font=\scriptsize,
            draw=none,
            fill=white,
            fill opacity=0.85,
            text opacity=1,
            at={(1.03,0.5)},
            anchor=west,
            row sep=1pt,
            inner sep=2pt
        },
        legend cell align=left,
        scaled x ticks=false,
        xtick scale label code/.code={},
        thick,
        mark options={solid, scale=0.8},
        clip=false,
    ]
        \addplot[dit_yellow, thick, mark=square*, dashed] table [
            x=cfg_txt_scale,
            y=num_true_tokens,
            col sep=comma
        ] {./figures/cfg/data.csv};
        \addlegendentry{Truth}
        
        \addplot[dit_red, thick, mark=*, solid] table [
            x=cfg_txt_scale,
            y=num_gen_tokens,
            col sep=comma
        ] {./figures/cfg/data.csv};
        \addlegendentry{Gen.}
    \end{axis}
    \end{tikzpicture}
}

\begin{figure}[ht]
    \centering
    
    \hfill%
    \begin{subfigure}[t]{0.4\textwidth}
        \centering
        \plotCFGScore
        \caption{\textbf{CIDEr Score $\uparrow$}}
    \end{subfigure}%
    \hfill
    \begin{subfigure}[t]{0.4\textwidth}
        \centering
        \plotCFGTokens
        \caption{\textbf{Caption Length}}
    \end{subfigure}%
    \hfill\mbox{}%

    \caption{\small Effect of image-to-text classifier-free guidance scale $\gamma$ on caption quality and length. Mild guidance ($\gamma\approx1.2$) improves CIDEr and brings caption length closer to the reference; stronger guidance over-amplifies high-probability insertions in the reverse CTMC, producing overly verbose captions and degrading quality.}
    \label{fig:cfg_metrics}
\end{figure}

%% file: figures/sd3_t2i/fig.tex
\newcommand{\methodlabel}[1]{%
  \adjustbox{valign=c}{\rotatebox[origin=c]{90}{#1}}%
}

\newcommand{\cellimg}[1]{%
  \adjustbox{valign=c}{\includegraphics[width=0.15\textwidth]{#1}}%
}

\begin{figure}[ht]
  \centering
  \setlength{\tabcolsep}{1pt}
  \renewcommand{\arraystretch}{0.9}
  \begin{tabular}{@{}c@{\hspace{2pt}}ccccc@{}}
    \toprule
    & \textbf{Base} & \textbf{50k} & \textbf{100k} & \textbf{150k} & \textbf{200k} \\
    \midrule
    \methodlabel{RF} &
    \cellimg{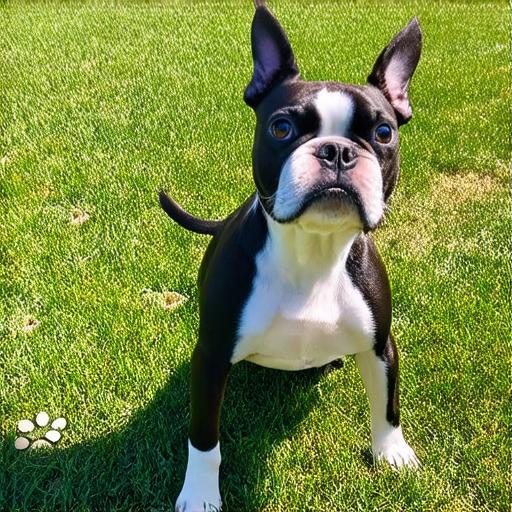} &
    \cellimg{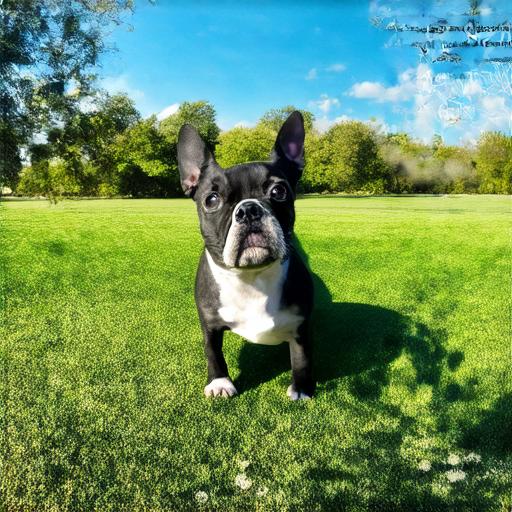} &
    \cellimg{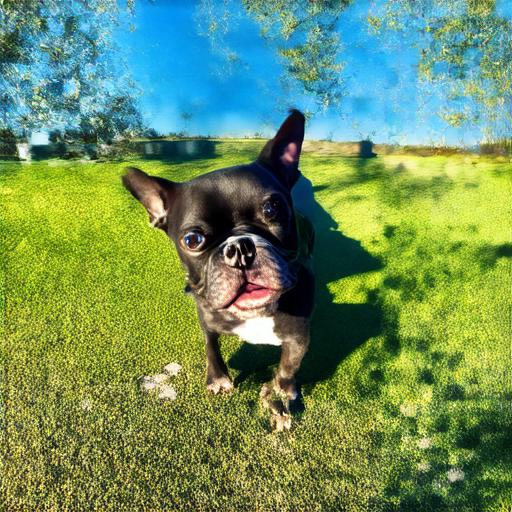} &
    \cellimg{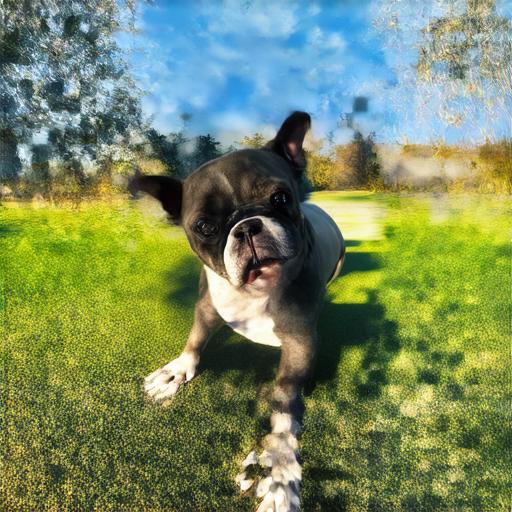} &
    \cellimg{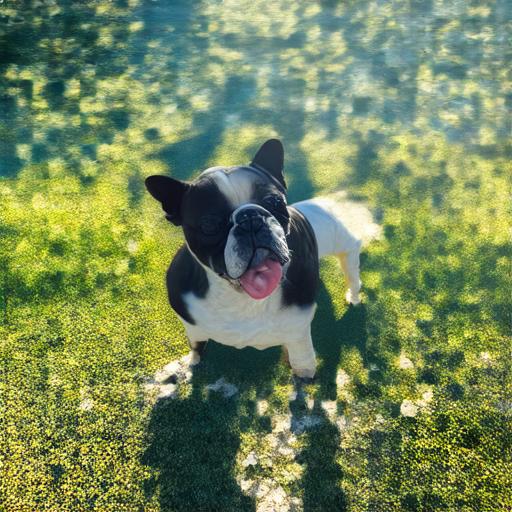} \\

    \methodlabel{Teacher-SN} &
    \cellimg{figures/sd3_t2i/images/base.jpg} &
    \cellimg{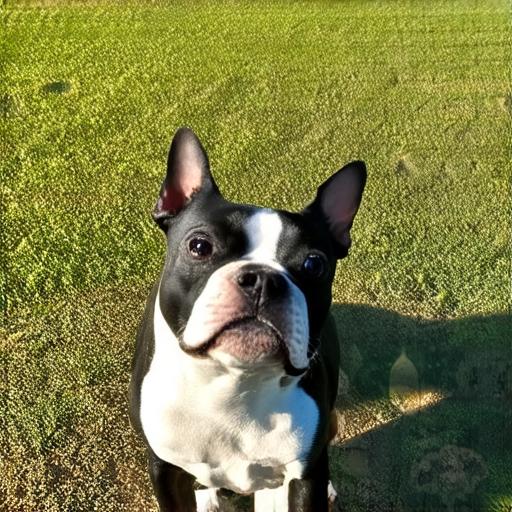} &
    \cellimg{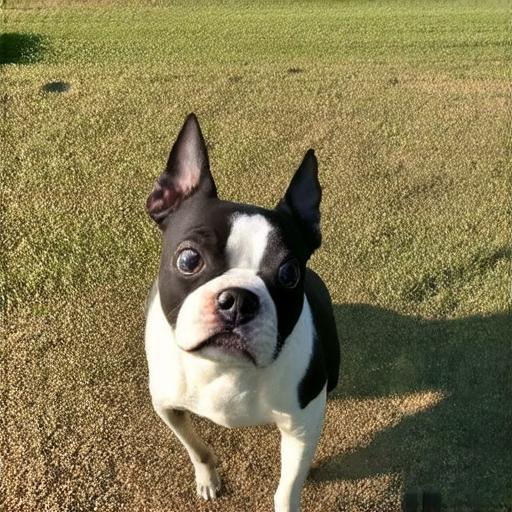} &
    \cellimg{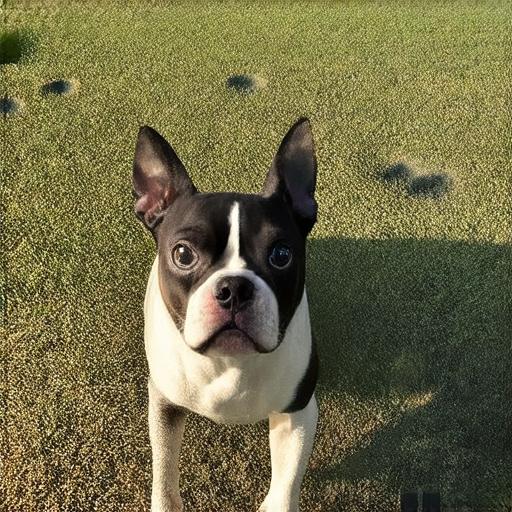} &
    \cellimg{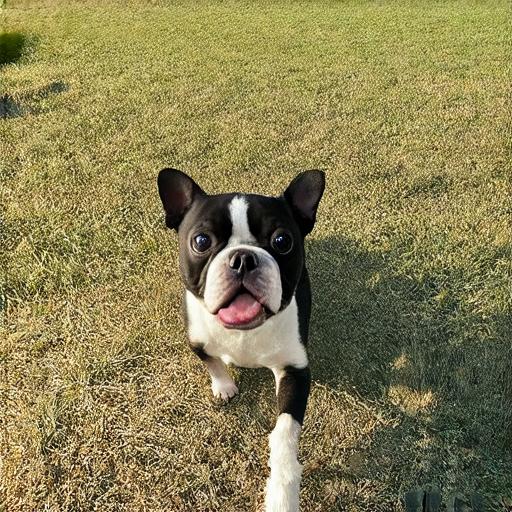} \\

    \methodlabel{Teacher-CT} &
    \cellimg{figures/sd3_t2i/images/base.jpg} &
    \cellimg{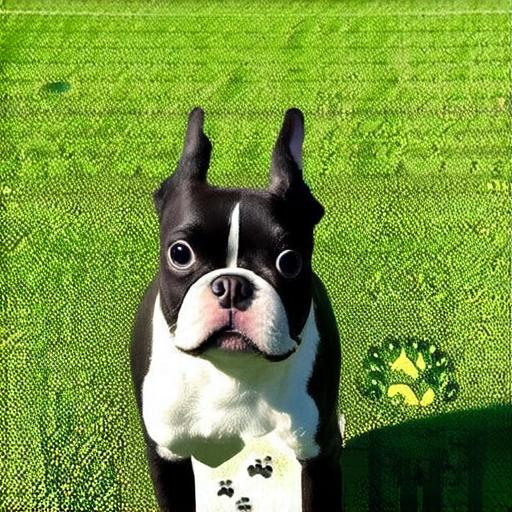} &
    \cellimg{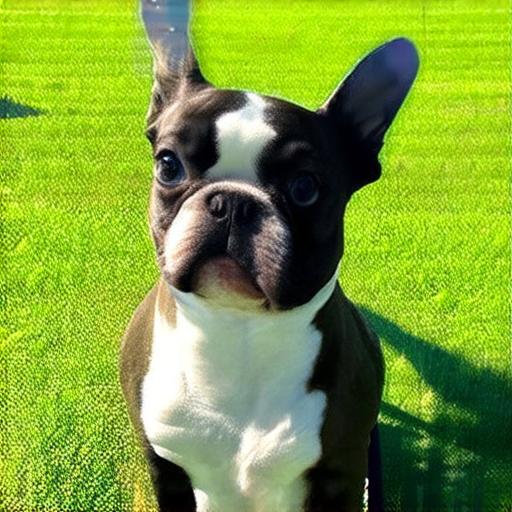} &
    \cellimg{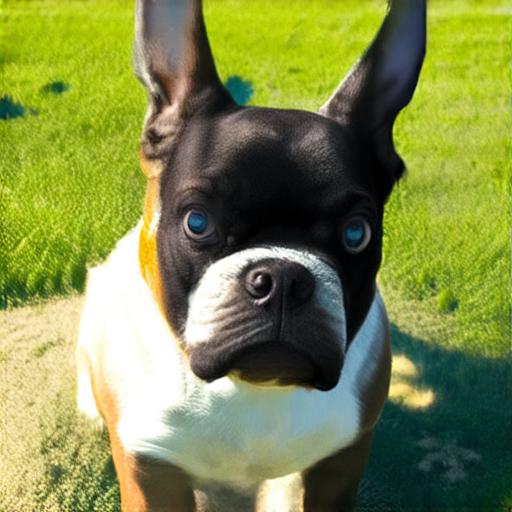} &
    \cellimg{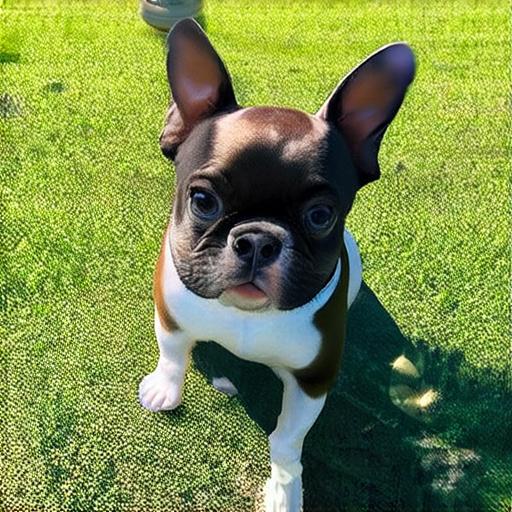} \\
    \bottomrule
  \end{tabular}
  \caption{\small Qualitative effect of the image-supervision target on text-to-image generation across training checkpoints (50k--200k steps). RF: standard rectified flow; Teacher-CT: clean-text teacher matching; Teacher-SN: same-noise teacher matching. RF accumulates distortion, Teacher-CT introduces transient artifacts and is unstable, while Teacher-SN preserves the visual prior throughout training and is therefore our default.}
  \label{fig:t2i_teachers}
\end{figure}

%% file: figures/time_sampling/app_fig.tex
\newcommand{\addMetricPlot}[4]{

    \addplot[draw=none, forget plot, name path=upper#1] table [
        x=ckpt_num, 
        y expr=\thisrow{#1_mean} + \thisrow{#1_ci_95}, 
        col sep=comma
    ] {./figures/time_sampling/#4};

    \addplot[draw=none, forget plot, name path=lower#1] table [
        x=ckpt_num, 
        y expr=\thisrow{#1_mean} - \thisrow{#1_ci_95}, 
        col sep=comma
    ] {./figures/time_sampling/#4};

    \addplot[fill=#2, opacity=0.3, forget plot] fill between[of=upper#1 and lower#1];

    \addplot[#2, #3] table [
        x=ckpt_num, 
        y=#1_mean, 
        col sep=comma
    ] {./figures/time_sampling/#4};
}

\newcommand{\plotMetric}[2]{
    \begin{tikzpicture}
    \begin{axis}[
        width=\linewidth,
        height=0.8\linewidth,
        xlabel={Checkpoints},
        ylabel={#1},
        xmin=25000, xmax=200000,
        xtick={50000,100000,150000,200000},
        xticklabels={50k,100k,150k,200k},
        scaled x ticks=false,
        xtick scale label code/.code={},
        grid=major,
        grid style={draw=gray!20},
        tick label style={font=\scriptsize},
        label style={font=\small},
        legend style={
            font=\scriptsize,
            draw=none,
            fill=white,
            fill opacity=0.85,
            text opacity=1,
            at={(0.03,0.97)},
            anchor=north west,
            row sep=1pt,
            inner sep=2pt
        },
        clip=false,
    ]
    
    \addMetricPlot{res_001_txt_train}{dit_blue}{thick, mark=*, solid}{#2}
    \addMetricPlot{res_001_txt_val}{dit_blue}{thick, mark=*, dashed}{#2}
    
    
    \addMetricPlot{res_001_75k_txt_train}{dit_red}{thick, mark=*, solid}{#2}
    \addMetricPlot{res_001_75k_txt_val}{dit_red}{thick, mark=*, dashed}{#2}
    
    \addMetricPlot{res_001_100k_txt_train}{dit_red}{thick, mark=*, solid}{#2}
    \addMetricPlot{res_001_100k_txt_val}{dit_red}{thick, mark=*, dashed}{#2}
    
    \addMetricPlot{res_001_125k_txt_train}{dit_red}{thick, mark=*, solid}{#2}
    \addMetricPlot{res_001_125k_txt_val}{dit_red}{thick, mark=*, dashed}{#2}
    
    \addMetricPlot{res_001_150k_txt_train}{dit_red}{thick, mark=*, solid}{#2}
    \addMetricPlot{res_001_150k_txt_val}{dit_red}{thick, mark=*, dashed}{#2}
    
    \addMetricPlot{res_001_175k_txt_train}{dit_red}{thick, mark=*, solid}{#2}
    \addMetricPlot{res_001_175k_txt_val}{dit_red}{thick, mark=*, dashed}{#2}
    
    \addMetricPlot{res_004_txt_train}{dit_yellow}{thick, mark=*, solid}{#2}
    \addMetricPlot{res_004_txt_val}{dit_yellow}{thick, mark=*, dashed}{#2}
    
    \end{axis}
    \end{tikzpicture}
}

\begin{figure}
    \centering
    \begin{subfigure}[t]{0.48\textwidth}
        \centering
        \caption{\textbf{CLIP Score (Text--Image) $\uparrow$}}
        \plotMetric{CLIP $\uparrow$}{clip.csv}
    \end{subfigure}%
    \hfill
    \begin{subfigure}[t]{0.48\textwidth}
        \centering
        \caption{\textbf{T5 Token Union (Text--Text) $\uparrow$}}
        \plotMetric{Overlap of tokens $[\%] \uparrow$}{t5.csv}
    \end{subfigure}\\\vspace{2mm}
    \begin{subfigure}[t]{0.48\textwidth}
        \centering
        \caption{\textbf{BERTScore-F1 (Text--Text) $\uparrow$}}
        \plotMetric{BERTScore-F1 $\uparrow$}{bert.csv}
    \end{subfigure}%
    \hfill
    \begin{subfigure}[t]{0.48\textwidth}
        \centering
        \caption{\textbf{GPT-2 Perplexity (Caption Fluency) $\downarrow$}}
        \plotMetric{GPT-2 PPL $\downarrow$}{gpt.csv}
    \end{subfigure}%
    \caption{\small Extended corruption-schedule ablation from \Cref{sec:time_coupling}, showing four additional captioning metrics beyond CIDEr. \textcolor{dit_yellow}{\textbf{yellow}}: pure alternating-clean ($\pi_{\mathrm{ac}}$); \textcolor{dit_blue}{\textbf{blue}}: pure mixed-corruption ($\pi_{\mathrm{ind}}$); \textcolor{dit_red}{\textbf{red}}: five independent runs that switch from $\pi_{\mathrm{ind}}$ to $\pi_{\mathrm{ac}}$ at 75k, 100k, 125k, 150k, and 175k steps. Solid lines: train; dashed lines: held-out validation. Shaded bands: 95\% CI over 5k samples. Across all four metrics, the mixed-to-alternating schedule matches or exceeds pure $\pi_{\mathrm{ac}}$ at convergence while requiring only half the training budget after the switch.}
    \label{fig:timesamplingextended}
\end{figure}

\begin{table}[tbh]
  \centering
  \setlength{\tabcolsep}{3.2pt}
  \renewcommand{\arraystretch}{1.05}
  
\caption{\small Final-checkpoint (200k steps) statistics for the corruption-schedule ablation of \Cref{fig:timesamplingextended}, complementing the curves with exact numerical values. For each schedule, split, and captioning metric we report the mean over $n{=}5000$ validation samples, the sample standard deviation, and the half-width of the 95\% confidence interval used as the shaded band in \Cref{fig:timesamplingextended}.}
    \vspace{0.5em}
  
  \footnotesize
  \resizebox{\linewidth}{!}{
  \begin{tabular}{ll 
    S[table-format=3.3] S[table-format=3.3] S[table-format=1.4]
    S[table-format=1.3] S[table-format=1.3] S[table-format=1.4]
    S[table-format=1.3] S[table-format=1.3] S[table-format=1.4]
    S[table-format=1.3] S[table-format=1.3] S[table-format=1.4]
    S[table-format=1.3] S[table-format=1.3] S[table-format=1.4]}
    \toprule
     &  
     & \multicolumn{3}{c}{\textbf{CIDEr $\uparrow$}}
     & \multicolumn{3}{c}{\textbf{CLIP $\uparrow$}}
     & \multicolumn{3}{c}{\textbf{T5 Tok.\ Union $\uparrow$}}
     & \multicolumn{3}{c}{\textbf{BERTScore-F1 $\uparrow$}}
     & \multicolumn{3}{c}{\textbf{GPT-PPL $\downarrow$}} \\
    \cmidrule(lr){3-5}
    \cmidrule(lr){6-8}
    \cmidrule(lr){9-11}
    \cmidrule(lr){12-14}
    \cmidrule(lr){15-17}
    {Schedule} & {Split} 
    & {mean} & {std} & {95\% CI}
    & {mean} & {std} & {95\% CI}
    & {mean} & {std} & {95\% CI}
    & {mean} & {std} & {95\% CI}
    & {mean} & {std} & {95\% CI} \\
    \midrule

    $\pi_{\mathrm{ind}}$ & train &
    66.78 & 77.74 & 2.25 &
    0.26 & 0.05 & 0.00 &
    0.34 & 0.11 & 0.00 &
    0.42 & 0.14 & 0.00 &
    4.05 & 0.55 & 0.02 \\

     & val &
    65.43 & 76.87 & 2.21 &
    0.26 & 0.05 & 0.00 &
    0.33 & 0.11 & 0.00 &
    0.42 & 0.14 & 0.00 &
    4.07 & 0.55 & 0.02 \\

    \addlinespace[1pt]

    switch @ 75k & train 
    & 114.70 & 104.30 & 3.02 
    & {--} & {--} & {--}
    & 0.41 & 0.12 & 0.00 
    & 0.48 & 0.14 & 0.00 
    & 3.92 & 0.56 & 0.02 \\
    
    & val 
    & 111.80 & 103.19 & 2.97 
    & {--} & {--} & {--}
    & 0.41 & 0.12 & 0.00 
    & 0.48 & 0.14 & 0.00 
    & 3.93 & 0.56 & 0.02 \\
    
    \addlinespace[1pt]

    switch @ 100k & train 
    & 111.75 & 101.57 & 2.94 
    & 0.30 & 0.04 & 0.00 
    & 0.41 & 0.12 & 0.00 
    & 0.48 & 0.14 & 0.00 
    & 3.89 & 0.55 & 0.02 \\

     & val 
    & 107.63 & 100.21 & 2.88 
    & 0.29 & 0.04 & 0.00 
    & 0.40 & 0.12 & 0.00 
    & 0.47 & 0.14 & 0.00 
    & 3.93 & 0.54 & 0.02 \\

    \addlinespace[1pt]

    switch @ 125k & train 
    & 110.71 & 101.19 & 2.93 
    & 0.30 & 0.04 & 0.00 
    & 0.41 & 0.12 & 0.00 
    & 0.47 & 0.14 & 0.00 
    & 3.90 & 0.54 & 0.02 \\

     & val 
    & 108.76 & 100.40 & 2.89 
    & 0.29 & 0.04 & 0.00 
    & 0.40 & 0.12 & 0.00 
    & 0.47 & 0.14 & 0.00 
    & 3.91 & 0.55 & 0.02 \\

    \addlinespace[1pt]

    switch @ 150k & train 
    & 84.94 & 89.82 & 2.60 
    & 0.29 & 0.04 & 0.00 
    & 0.40 & 0.12 & 0.00 
    & 0.43 & 0.14 & 0.00 
    & 3.83 & 0.53 & 0.02 \\

     & val 
    & 84.68 & 91.90 & 2.64 
    & 0.29 & 0.04 & 0.00 
    & 0.40 & 0.12 & 0.00 
    & 0.43 & 0.14 & 0.00 
    & 3.83 & 0.54 & 0.02 \\

    \addlinespace[1pt]

    switch @ 175k & train 
    & 108.61 & 99.52 & 2.88 
    & 0.29 & 0.04 & 0.00 
    & 0.40 & 0.12 & 0.00 
    & 0.47 & 0.14 & 0.00 
    & 3.86 & 0.54 & 0.02 \\

     & val 
    & 107.11 & 100.64 & 2.89 
    & 0.29 & 0.04 & 0.00 
    & 0.40 & 0.12 & 0.00 
    & 0.47 & 0.14 & 0.00 
    & 3.86 & 0.54 & 0.02 \\

    \addlinespace[1pt]

    $\pi_{\mathrm{ac}}$ & train 
    99.38 & 94.95 & 2.75 &
    0.30 & 0.04 & 0.00 &
    0.42 & 0.12 & 0.00 &
    0.44 & 0.15 & 0.00 &
    4.02 & 0.59 & 0.02 \\

    & val 
    96.97 & 95.88 & 2.76 &
    0.29 & 0.04 & 0.00 &
    0.41 & 0.12 & 0.00 &
    0.43 & 0.15 & 0.00 &
    4.02 & 0.58 & 0.02 \\

    \bottomrule
  \end{tabular}}
  \label{tab:timesamplingextended}
\end{table}

\begin{table}[tbh]
  \centering
  \caption{\small Image-to-text generation statistics on SD3 with matched data, LoRA rank, and training setup, comparing the DualDiff-LoRA baseline against FullFlow at matched training steps and matched wall-clock time. All metrics are computed against held-out Moondream captions on the validation split.}
    
    \vspace{0.5em}
  
  \setlength{\tabcolsep}{3.2pt}
  \renewcommand{\arraystretch}{1.05}
  \footnotesize
  \resizebox{\linewidth}{!}{
  \begin{tabular}{l
    S[table-format=3.3] S[table-format=3.3] S[table-format=1.4]
    S[table-format=1.3] S[table-format=1.3] S[table-format=1.4]
    S[table-format=1.3] S[table-format=1.3] S[table-format=1.4]
    S[table-format=1.3] S[table-format=1.3] S[table-format=1.4]
    S[table-format=1.3] S[table-format=1.3] S[table-format=1.4]}
    \toprule
     & \multicolumn{3}{c}{\textbf{CIDEr $\uparrow$}}
     & \multicolumn{3}{c}{\textbf{CLIP $\uparrow$}}
     & \multicolumn{3}{c}{\textbf{T5 Tok.\ Union $\uparrow$}}
     & \multicolumn{3}{c}{\textbf{BERTScore-F1 $\uparrow$}}
     & \multicolumn{3}{c}{\textbf{GPT-PPL $\downarrow$}} \\
    \cmidrule(lr){2-4}
    \cmidrule(lr){5-7}
    \cmidrule(lr){8-10}
    \cmidrule(lr){11-13}
    \cmidrule(lr){14-16}
    {Model}
    & {mean} & {std} & {95\% CI}
    & {mean} & {std} & {95\% CI}
    & {mean} & {std} & {95\% CI}
    & {mean} & {std} & {95\% CI}
    & {mean} & {std} & {95\% CI} \\
    \midrule

    DualDiff.-LoRA
    & 2.06 & 5.36 & 0.15 
    & {--} & {--} & {--}
    & 0.11 & 0.06 & 0.00 
    & -0.28 & 0.78 & 0.02 
    & 7.66 & 2.30 & 0.06 \\
    
    \addlinespace[1pt]

    FullFlow, step-matched
    & 13.16 & 25.73 & 0.74 
    & 0.26 & 0.04 & 0.00 
    & 0.33 & 0.10 & 0.00 
    & 0.03 & 0.17 & 0.00 
    & 4.70 & 0.56 & 0.02 \\

    \addlinespace[1pt]

    FullFlow, time-matched
    & 107.11 & 100.64 & 2.89 
    & 0.29 & 0.04 & 0.00 
    & 0.40 & 0.12 & 0.00 
    & 0.47 & 0.14 & 0.00 
    & 3.86 & 0.54 & 0.02 \\

    \bottomrule
  \end{tabular}}

  \label{tab:main_comparison_extend}
\end{table}

%% file: appendix/examples.tex
\section{Qualitative Examples}\label{app:cross}
This appendix collects qualitative samples from the SD3 and FLUX.1-dev FullFlow models on unseen evaluation inputs, complementing the quantitative results in \Cref{sec:results}. \Cref{fig:image2text1,fig:image2text2,fig:image2text_flux} show image-to-text captions, \Cref{fig:t2i_sd3_app,fig:t2i_flux_app} compare text-to-image samples between each base model and its FullFlow finetune, \Cref{fig:vqav2} shows VQAv2 predictions, and \Cref{fig:joint_tok,fig:joint1,fig:joint2,fig:joint3,fig:joint4,fig:joint5} illustrate joint image--text generation along the $\tau=t^{2^p}$ trajectories from \Cref{sec:joint}. All samples are generated with the inference settings of \Cref{app:vqa_inference} on inputs that were not seen during training.

\begin{figure*}[ht]
  \centering
  \qualexgrid{./figures/sd3_i2t/captions.csv}{./figures/sd3_i2t/}{0,1,2,32,4,5,6,7,8,9,10,11,12,13,14,15}
  \caption{\small Image-to-text captions from the SD3 FullFlow model on unseen LAION-Aesthetic images (part 1/2). Each tile shows the input image and the greedy caption sampled by the model.}
  \label{fig:image2text1}
\end{figure*}

\begin{figure*}[ht]
  \centering
  \qualexgrid{./figures/sd3_i2t/captions.csv}{./figures/sd3_i2t/}{16,17,18,19,20,21,22,23,24,25,26,27,28,29,30,31}
  \caption{\small Image-to-text captions from the SD3 FullFlow model on unseen LAION-Aesthetic images (part 2/2). Continuation of \Cref{fig:image2text1}.}
  \label{fig:image2text2}
\end{figure*}

\begin{figure*}[ht]
  \centering
  \qualexgrid{./figures/flux_i2t/captions.csv}{./figures/flux_i2t/}{0,1,2,3,4,5,6,7,8,9,10,11,12,13,14,15}
  \caption{\small Image-to-text captions from the FLUX.1 [dev] FullFlow model on unseen LAION-Aesthetic images, demonstrating that the same recipe transfers to a different rectified-flow backbone.}
  \label{fig:image2text_flux}
\end{figure*}

\begin{figure*}
    \centering
    \qualexrowtwo{./figures/flux_t2i/captions.csv}{./figures/sd3_t2i/base/}{./figures/sd3_t2i/fine/}{0,1,2,3,4}
    \caption{\small Text-to-image samples from the SD3 base model (top row) and our FullFlow SD3 finetune (bottom row) under identical prompts, seeds, and schedulers. Differences between rows isolate the effect of the multimodal uplift on the visual prior.}
    \label{fig:t2i_sd3_app}
\end{figure*}

\begin{figure*}
    \centering
    \qualexrowtwo{./figures/flux_t2i/captions.csv}{./figures/flux_t2i/base/}{./figures/flux_t2i/fine/}{0,1,2,3,4}
    \caption{\small Text-to-image samples from the FLUX.1 [dev] base model (top row) and our FullFlow FLUX.1 finetune (bottom row) under identical prompts, seeds, and schedulers, mirroring the SD3 comparison in \Cref{fig:t2i_sd3_app}.}
    \label{fig:t2i_flux_app}
\end{figure*}

\begin{figure*}[ht]
  \centering
  \qualexgrid{./figures/sd3_vqav2/captions.csv}{./figures/sd3_vqav2/images}{0,1,2,3,16,5,6,7,8,9,10,11,12,13,14,15}
  \caption{\small VQAv2 predictions from the SD3 FullFlow VQA finetune on unseen validation images. Each tile shows the input image, the question, and the model's predicted answer.}
  \label{fig:vqav2}
\end{figure*}

\subsection{Joint Generation: Qualitative Examples}
\label{app:joint_gen_examples}

\Cref{fig:joint_tok,fig:joint1,fig:joint2,fig:joint3,fig:joint4,fig:joint5} show additional examples of FullFlow's joint generation mode, where image and text are sampled simultaneously without conditioning on either modality. We sweep the trajectory parameter $p \in \{-5,-2.5,0,2.5,5\}$ using $\tau=t^{2^p}$ as detailed in \Cref{sec:joint}. Negative values of $p$ denoise text earlier than the image, positive values denoise the image earlier than text, and $p=0$ follows the diagonal trajectory where both noise levels advance together.

\begin{figure}[ht]
    \centering
    \input{figures/sd3_joint/app}
    \caption{\small Caption-length distribution under joint image--text sampling as a function of the trajectory parameter $p$ in $\tau=t^{2^p}$. Boxes show the interquartile range (Q1--Q3) with the median; whiskers extend to the min/max sample. Larger $p$ (image-first) yields longer captions, consistent with the qualitative examples below.}
    \label{fig:joint_tok}
\end{figure}

The qualitative examples are consistent with the CLIP-score trend in \Cref{fig:sd3_joint}: the balanced trajectory ($p=0$) yields the most coherent image--text pairs, with captions describing the main objects, colors, and spatial relations in the image. Text-first trajectories ($p<0$) remain competitive but can commit early to generic semantic content that the image then follows. Image-first trajectories ($p>0$) often defer language generation until the visual scene is mostly formed; at large positive values, captions tend to emphasize local visual details rather than the overall scene. \Cref{fig:joint_tok} shows that generated captions also become longer for larger $p$, suggesting that delayed text denoising encourages more verbose descriptions.

\begin{figure*}[ht]
    \centering
    \qualexrow{./figures/sd3_joint/images_3/captions.csv}{./figures/sd3_joint/images_3}{0,1,2,3,4}
    \caption{\small Joint image--text samples from FullFlow (seed 1/5) for $p \in \{-5,-2.5,0,2.5,5\}$, shown left to right. Negative $p$ denoises text first, $p=0$ denoises both modalities together, and positive $p$ denoises the image first.}
    \label{fig:joint1}
\end{figure*}

\begin{figure*}[ht]
    \centering
    \qualexrow{./figures/sd3_joint/images_13/captions.csv}{./figures/sd3_joint/images_13}{0,1,2,3,4}
    \caption{\small Joint image--text samples from FullFlow (seed 2/5) for the same trajectory sweep as \Cref{fig:joint1}.}
    \label{fig:joint2}
\end{figure*}

\begin{figure*}[ht]
    \centering
    \qualexrow{./figures/sd3_joint/images_45/captions.csv}{./figures/sd3_joint/images_45}{0,1,2,3,4}
    \caption{\small Joint image--text samples from FullFlow (seed 3/5) for the same trajectory sweep as \Cref{fig:joint1}.}
    \label{fig:joint3}
\end{figure*}

\begin{figure*}[ht]
    \centering
    \qualexrow{./figures/sd3_joint/images_48/captions.csv}{./figures/sd3_joint/images_48}{0,1,2,3,4}
    \caption{\small Joint image--text samples from FullFlow (seed 4/5) for the same trajectory sweep as \Cref{fig:joint1}.}
    \label{fig:joint4}
\end{figure*}

\begin{figure*}[ht]
    \centering
    \qualexrow{./figures/sd3_joint/images_83/captions.csv}{./figures/sd3_joint/images_83}{0,1,2,3,4}
    \caption{\small Joint image--text samples from FullFlow (seed 5/5) for the same trajectory sweep as \Cref{fig:joint1}.}
    \label{fig:joint5}
\end{figure*}

%% file: figures/sd3_joint/app.tex
\begin{tikzpicture}
\begin{axis}[
    width=\linewidth,
    height=7cm,
    xlabel={Values of $p$},
    ylabel={Number of Tokens generated},
    xmin=-5.4,
    xmax=5.4,
    ymin=15,
    ymax=82,
    xtick={-5,-4,-3,-2,-1,0,1,2,3,4,5},
    minor x tick num=1,
    grid=both,
    boxplot/draw direction=y,
    every boxplot/.style={
        fill=dit_red,
        draw=black,
        thick,
    },
    boxplot={
        box extend=0.18,
    },
]

\foreach \x/\lw/\qone/\med/\qthree/\hw in {
-5.0/18.0/30.75/37.5/46.0/64.0,
-4.5/18.0/29.75/37.0/45.0/64.0,
-4.0/18.0/31.0/36.0/44.0/63.0,
-3.5/20.0/31.0/37.0/44.0/63.0,
-3.0/21.0/29.0/35.0/43.25/63.0,
-2.5/21.0/30.0/36.0/44.0/65.0,
-2.0/17.0/31.0/35.0/43.0/60.0,
-1.5/21.0/32.0/38.0/45.0/64.0,
-1.0/23.0/33.0/38.0/47.0/67.0,
-0.5/24.0/36.75/44.0/53.0/77.0,
0.0/31.0/46.75/60.0/69.0/79.0,
0.5/44.0/62.25/74.0/77.0/79.0,
1.0/71.0/74.75/76.0/78.0/80.0,
1.5/72.0/75.0/77.0/77.25/79.0,
2.0/73.0/75.0/77.0/77.0/79.0,
2.5/72.0/75.0/76.0/77.0/79.0,
3.0/73.0/75.0/76.0/77.0/79.0,
3.5/72.0/75.0/76.0/77.0/79.0,
4.0/70.0/74.0/76.0/77.0/79.0,
4.5/72.0/75.0/76.0/77.0/79.0,
5.0/71.0/74.0/75.0/77.0/79.0
}{
    \edef\tempplot{%
        \noexpand\addplot+[
            boxplot prepared={
                draw position=\x,
                lower whisker=\lw,
                lower quartile=\qone,
                median=\med,
                upper quartile=\qthree,
                upper whisker=\hw,
            },
        ] coordinates {};
    }%
    \tempplot
}

\end{axis}
\end{tikzpicture}